%% file: arxiv_paper.tex
\documentclass{article}
\usepackage{arxiv}

\usepackage{authblk}

\usepackage[numbers]{natbib} % has a nice set of citation styles and commands

\usepackage{booktabs}

\usepackage{tikz}
\usepackage{pgfplots}
\usetikzlibrary{arrows.meta}

\usepackage{subcaption}

\pgfplotsset{compat=1.18}
\usepgfplotslibrary{fillbetween}

\usepackage{mathtools} %amsmath with fixes and additions
\usepackage{amsthm, amssymb,amsmath}
\usepackage{thmtools, thm-restate}
\usepackage{ciphod}
\usepackage{mathbbol}
\usepackage{subcaption}
\usepackage{forest}

\usepackage{algorithm}
\usepackage{algpseudocode}
\usepackage{float}

\usepackage{xurl}
\usepackage{hyperref}

\newtheorem{theorem}{Theorem}
\newtheorem{proposition}{Proposition}
\newtheorem{lemma}{Lemma}
\newtheorem{corollary}{Corollary}
\newtheorem{example}{Example}
\newtheorem{definition}{Definition}
\newtheorem{assumption}{Assumption}

\DeclareSymbolFontAlphabet{\mathbb}{AMSb}
\DeclareSymbolFontAlphabet{\mathbbl}{bbold}

\title{Root cause analysis via difference graph discovery from linear time-series data}

\author[1]{Anouk Ruer}
\author[1]{Timothée Loranchet}
\author[1]{Daria Bystrova}
\author[1]{Charles K. Assaad}
\affil[1]{Sorbonne Université, INSERM, Institut Pierre Louis d’Epidémiologie et de Santé Publique, F75012, Paris, France}   

\date{}

\begin{document}
\maketitle

\input{main}

%\setcitestyle{numbers}
\bibliographystyle{plainnat}
\bibliography{references.bib}

\newpage
\appendix

\input{appendix}

\end{document}

%% file: main.tex
\begin{abstract}
Root cause analysis aims to identify the mechanisms responsible for anomalies in complex dynamical systems. In this paper, we study root cause analysis in linear time-series through the lens of difference graph discovery. We focus on effect-defying root causes, corresponding to variables whose causal coefficients change between a normal and an anomalous regime. We formalize this problem using linear discrete-time dynamic structural causal models and adapt several methods originally introduced for discovering difference graphs between two populations to the time-series setting, where the two populations are replaced by a normal and an anomalous regime. We first evaluate the proposed approaches on simulated data, and then demonstrate their practical relevance on real-world datasets from IT monitoring and intensive care monitoring. Our results show how difference graph discovery can help localize causal mechanisms responsible for anomalous behavior.

\keywords{difference graphs  \and root cause analysis \and time-series.}
\end{abstract}

\section{Introduction}

%\textcolor{red}{Dasha: Can you add more citations?}

Root cause analysis is a central task in many scientific and industrial domains, where the goal is to identify the mechanisms responsible for observed changes in a system. In climate science, for instance, understanding the drivers of extreme events or long-term shifts is crucial for prediction and policy-making~\cite{Camps_Valls_2025}. In IT monitoring, root cause analysis supports the diagnosis of system failures and performance degradation, enabling faster and more reliable interventions~\cite{Assaad_AISTATS_2023}. Similarly, in healthcare, identifying the causes of adverse events or changes in patient trajectories can have important clinical implications, for example in intensive care or nephrology settings~\cite{KDIGOAKI2012,connell_2015acute,mahmood2013_frequency,Assaad_HDR_2026}.

In parallel, there has been growing interest in methods that learn differences between two  environments~\cite{Wang_Neurips_2018,Chen_Neurips_2023,Malik_UAI_2024,Bystrova_Workshop_UAI_2024,Bystrova_Arxiv_2026}. These approaches, often referred to as difference graph discovery methods, aim to identify changes in dependency or causal structures across datasets. Although they have been studied in several settings, their potential for root cause analysis remains largely unexplored.

In this paper, we investigate how difference graph discovery can be leveraged for root cause analysis in a linear setting. A key limitation of existing methods is that they are mostly designed for static data, whereas many real-world systems evolve over time. We therefore extend difference graph discovery methods to the time-series setting, allowing them to capture dynamic changes in causal or dependency structures between a normal and an anomalous regime. More specifically, we extend DCI~\cite{Wang_Neurips_2018} and LDiffPC~\cite{Bystrova_Workshop_UAI_2024,Bystrova_Arxiv_2026}, and provide theoretical guarantees for their correctness in the time-series setting. %We also adapt iSCAN~\cite{Chen_Neurips_2023} and the method introduced in~\cite{Malik_UAI_2024}, which we evaluate empirically.

We highlight that these methods are naturally suited to detecting certain types of root causes, in particular effect-defying root causes, corresponding to variables whose causal coefficients change between a normal and an anomalous regime~\cite{Assaad_AISTATS_2023,Assaad_HDR_2026}. We then empirically evaluate the proposed extensions on both simulated and real-world datasets. For the real-world evaluation, we consider a dataset from IT monitoring as well as a dataset from intensive care monitoring. These experiments assess the practical relevance of difference graph discovery for root cause analysis and illustrate its potential across different application domains.

The remainder of the paper is organized as follows. Section~\ref{sec:Background} introduces the necessary preliminaries. Section~\ref{sec:RelatedWork} reviews related work. Section~\ref{sec:Main_Difference_graph_discovery} presents the time-series extensions of the considered difference graph discovery algorithms. Section~\ref{sec:Main_Root_causes} discusses how these algorithms can detect effect-defying root causes. Section~\ref{sec:Experiments} reports experiments on simulated data and apply the proposed methods to IT monitoring and intensive care monitoring data, respectively. Finally, Section~\ref{sec:Discussion} discusses the results and concludes the paper.
All proofs are provided in Appendix. The implementation of the extended time-series difference graph methods, together with the code used for the experiments is available at \url{https://github.com/CIPHOD/pyCIPHOD/tree/main/reproducibility/ecmlpkddcaesar2026}.

\section{Background}
\label{sec:Background}

We consider multivariate time-series $\mathbb{V}$ evolving according to an unknown linear discrete-time dynamic structure causal model (DT-DSCM)~\cite{Pearl_2000,Assaad_AISTATS_2023}, denoted by $\mathcal{M}$, such that
$
\forall Y_t \in \mathbb{V},  Y_t
=
\sum_{X_{t-\ell}\in \mathbb{V}}
\alpha_{{X_{t-\ell}},Y_t} X_{t-\ell}
+
\varepsilon_{Y_t},
$
where $\alpha_{{X_{t-\ell}},Y_t}\neq 0$
only if $X_{t-\ell}$ is a direct cause of $Y_t$
and $\varepsilon_{Y_t}$ is an exogenous noise term. Variables with a temporal subscript, such as $Y_t$ or $X_{t-\ell}$, denote time-indexed endogenous variables. When referring to an entire time-series, we omit the temporal subscript and write $Y$ or $X$.
The DT-DSCM governs the temporal and causal dependencies among the variables in $\mathbb{V}$. The model respects temporal ordering, meaning that a variable $X_t$ cannot cause $Y_{t-\ell}$ for any $\ell > 0$, while allowing instantaneous relations, so that $X_t$ can directly cause $Y_t$. In this paper we assume that there is no unmeasured confounding, an assumption known as causal sufficiency.

\begin{assumption}[Causal sufficiency] 
\label{assumption:sufficiency}
All noise terms are mutually independent and each noise term can affect only one observed variable.
\end{assumption}

We consider the presence of two regimes $\mathcal{N}$ and $\mathcal{\bar N}$, corresponding to two different conditions of the system, namely, $\mathcal{N}$ represents the normal regime and $\mathcal{\bar N}$ represents the anomalous regime. Throughout the paper, we assume that the observations have been assigned to either the normal or the anomalous regime before applying the difference graph discovery methods. This assumption is standard in both the difference graph discovery literature, where the generating regime is assumed to be known, and the root cause analysis literature, where anomalous observations are assumed to be identified beforehand, as is the case for all methods reviewed in Section~\ref{sec:RelatedWork}.
In the following, we use superscripts to indicate the regime associated with a given object. For example, $\mathcal{M}^{\bar{\mathcal{N}}}$, $\alpha_{X_{t-\ell},Y_t}^{\bar{\mathcal{N}}}$, and $\varepsilon_{Y_t}^{\bar{\mathcal{N}}}$ denote, respectively, the DT-DSCM in the anomalous regime, the causal coefficient from $X_{t-\ell}$ to $Y_t$ in the anomalous regime, and the exogenous noise of $Y_t$ in the anomalous regime. Within each regime the underlying DT-DSCM is assumed to be causally stationary.

\begin{figure}[t]
\centering

\begin{tikzpicture}[
    scale=0.65,
    transform shape,
    node/.style={
        circle,
        draw,
        minimum size=0.75cm,
        inner sep=1pt,
        font=\small
    },
    edge/.style={->, thick},
    changededge/.style={->, thick, red},
    changededgeb/.style={->, thick, blue},
    dashedline/.style={dashed, gray},
    regime/.style={dashed, red, thick}
]

%------------------------
% Nodes
%------------------------

\foreach \x/\s/\lab in {
    0/m2/{t-2},
    2.8/m1/{t-1},
    5.6/zero/{t},
    8.4/p1/{t+1},
    11.2/p2/{t+2}
}{
    \node[node] (X\s) at (\x,2.4) {$X_{\lab}$};
    \node[node] (Y\s) at (\x,1.2) {$Y_{\lab}$};
    \node[node] (Z\s) at (\x,0) {$Z_{\lab}$};

    % \node at (\x,-0.9) {$\lab$};
}

%------------------------
% Autoregressive edges
%------------------------

\foreach \a/\b in {m2/m1,m1/zero,zero/p1,p1/p2}{
    \draw[edge] (X\a) -- (X\b);
    \draw[edge] (Y\a) -- (Y\b);
    \draw[edge] (Z\a) -- (Z\b);
}

%------------------------
% Cross-lagged edges
%------------------------

\foreach \a/\b in {m2/m1,m1/zero,zero/p1,p1/p2}{
    \draw[edge] (Y\a) -- (Z\b);
    %\draw[edge] (Z\a) -- (Y\b);
}

%------------------------
% Contemporaneous edges
%------------------------

\foreach \s in {m2,m1}{
    \draw[edge, left=25] (Z\s) to (Y\s);
}

\foreach \s in {m2,m1}{
    \draw[edge, left=25] (X\s) to (Y\s);
}

% changed contemporaneous edge at t
\draw[changededge, left=25] (Xzero) to (Yzero);

\foreach \s in {p1,p2}{
    \draw[changededge, left=25] (X\s) to (Y\s);
}

\draw[changededgeb, left=25] (Zzero) to (Yzero);

\foreach \s in {p1,p2}{
    \draw[changededgeb, left=25] (Z\s) to (Y\s);
}

%------------------------
% Regime change marker
%------------------------

\draw[regime, gray] (5.6,3.0) -- (5.6,4.0);
\draw[regime, gray] (5.6,-0.5) -- (5.6,-1.5);
\node[black] at (8.4,3.15) {Anomalous regime $\bar{\mathcal{N}}$};
\node[black] at (2.8,3.15) {Normal regime $\mathcal{N}$};

%------------------------
% Continuation lines
%------------------------

\foreach \y in {2.4,1.2,0}{
    \draw[dashedline] (-0.9,\y) -- (-0.45,\y);
    \draw[dashedline] (11.65,\y) -- (12.1,\y);
}
% Y_t
\node[node, draw=orange, text=black, fill=orange!15,]
(Yzero) at (5.6,1.2) {$Y_t$};

% Y_{t+1}
\node[node, draw=orange, text=black, fill=orange!15,]
(Yp1) at (8.4,1.2) {$Y_{t+1}$};

% Y_{t+2}
\node[node, draw=orange, text=black, fill=orange!15,]
(Yp2) at (11.2,1.2) {$Y_{t+2}$};

% % Z_{t+1}
% \node[node, draw=red, text=black, fill=orange!15]
% (Xp1) at (8.4,0) {$Z_{t+1}$};

% % Z_{t+2}
% \node[node, draw=red, text=black, fill=orange!15]
% (Xp2) at (11.2,0) {$Z_{t+2}$};
\end{tikzpicture}

\caption{
FT-DAG  with a regime change at time $t$ and $\ell_{max}=1$. The vertices highlighted in orange are affected by a change in its causal mechanism; in this example, the time-series $Y$ is an effect-defying root cause. %The vertices highlighted in orange correspond to downstream effects of this change. 
We consider two settings depending on how the causal mechanism of $Y$ changes: (i) Setting R, where only the red edges are changed, and (ii) Setting RB, where both the red and blue edges are changed. Note that the causal mechanism  can also be changed through a lagged mechanism, such as $Y_{t-1}\rightarrow Z_t$, where $Z$ would be the root cause.}
\label{fig:full_time_regime_change}
\end{figure}

\begin{assumption}[Causal stationarity]
\label{assumption:stationarity}
% Within each regime, for all time points \(t\), the functional form,
% the set of parents, and the structural coefficients of each variable remain
% constant over time. That is, for each variable $Y_t$,
% the causal coefficients $\alpha_{{X_{t-\ell}},Y_t}$ do not depend on $t$ and the noise $\varepsilon_{Y_t}$ is identically distributed over time within the same regime.
Within each regime, for each variable $Y_t$,
the coefficients $\alpha_{{X_{t-\ell}},Y_t}$ do not depend on $t$ and the noise $\varepsilon_{Y_t}$ is identically distributed over time.
\end{assumption}

In addition, we assume the existence of a finite maximal lag between causes and their effects in the system, denoted by $\ell_{\max}$, which is much smaller than the total number of observed time points. In practice, we assume that this maximal lag is provided by domain knowledge. More generally, the domain expert may specify an upper bound on the true maximal lag, so that the chosen $\ell_{\max}$ is greater than or equal to the actual maximal lag of the system.

Within each regime, the DT-DSCM induces a full-time directed acyclic graph (FT-DAG), denoted by $\mathcal{G}^{\mathcal{N}} = (\mathbb{V}, \mathbb{E}^{\mathcal{N}})$ and $\mathcal{G}^{\bar{\mathcal{N}}} = (\mathbb{V}, \mathbb{E}^{\bar{\mathcal{N}}})$, where vertices correspond to time-indexed variables and edges encode direct causal relationships both within a time step and across successive time steps. In this paper, we use several standard graphical notions. For instance, a vertex $X_{t-\ell}$ is a parent of $Y_t$ in $\mathcal{G}$ if $X_{t-\ell}\to Y_t$ in $\mathcal{G}$, equivalently if $\alpha_{X_{t-\ell},Y_t}\neq 0$. The set of parents of $Y_t$ is denoted by $\mathrm{Pa}(Y_t,\mathcal{G})$. Two vertices are adjacent if they are connected by an edge in $\mathcal{G}$. The skeleton of $\mathcal{G}$, denoted by $\mathrm{Skel}(\mathcal{G})$, is the undirected graph obtained from $\mathcal{G}$ by replacing every directed edge with an undirected edge while preserving the same set of adjacencies.

For illustration, Figure~\ref{fig:full_time_regime_change} presents FT-DAGs corresponding to a normal and an anomalous regime. %that share the same topological ordering. 
In the normal regime, the system is governed by a fixed stationary DT-DSCM, whereas in the anomalous regime it is governed by another stationary DT-DSCM in which changes, relative to the normal regime, occur in the direct effects on a specific variable $Y_t$.
Two settings are considered in this figure. In the first one, referred to as \emph{Setting R}, only the red edge from $X_t$ to $Y_t$ is modified across regimes. In the second one, referred to as \emph{Setting RB}, both the red and blue edges, from $X_t$ to $Y_t$ and from $Z_t$ to $Y_t$, are modified.
These settings will serve as running examples throughout the paper to clarify the behavior of the proposed methods.

Under Assumption~\ref{assumption:stationarity}, the FT-DAG of each regime is fully characterized by the relationships between variables within a window of size $\ell_{\max}+1$, as it repeats this structure throughout the regime. Henceforth, $\mathbb{V}$ refers solely to variables within this window; see Appendix %~\ref{appendix:window} 
for more details.

The main objective of this paper is to identify root causes. To relate observed anomalies to changes in the  causal system, we make the following assumption.
\begin{assumption}[Anomalies]
\label{assumption:anomalies}
All anomalies observed in the anomalous regime are induced by changes in the underlying DT-DSCM between the normal regime $\mathcal{N}$ and the anomalous regime $\bar{\mathcal{N}}$.
\end{assumption}
Under the above assumption, root causes are defined as variables whose changes trigger and propagate anomalies throughout the system.
At the finest level, such changes occur at the micro, or time-indexed, level of the system, for instance through a change in the mechanism of a variable $Y_t$. However, under stationarity, a change affecting $Y_t$ at the beginning of the anomalous regime may propagate throughout the rest of the regime. Therefore, for simplicity and practical interpretability, we define root causes at the macro, or time-series, level. That is, we say that the time-series $Y$ is a root cause, rather than referring to a specific time point $Y_t$. This is also consistent with many applications, where domain experts are primarily interested in identifying which process is causing the malfunctioning, rather than the exact time at which the malfunctioning first appeared.

In DT-DSCMs, two types of root causes are commonly considered~\cite{Assaad_HDR_2026}. A macro vertex $Y$ is called an \emph{effect-defying root cause} if there exist $Y_t\in Y$ and $X_{t-\ell}\in \mathrm{Pa}(Y_t, \mathcal{G}^{\mathcal{N}})$ such that the corresponding causal coefficient changes across regimes, namely $ \alpha^{\mathcal{N}}_{X_{t-\ell},Y_t} \neq \alpha^{\bar{\mathcal{N}}}_{X_{t-\ell},Y_t}. $ Similarly, $Y$ is called a \emph{noise-defying root cause} if there exists $Y_t\in Y$ such that the noise distribution of $Y_t$ differs between regimes, that is, $ \varepsilon^{\mathcal{N}}_{Y_t} \neq_d \varepsilon^{\bar{\mathcal{N}}}_{Y_t}$.

In this paper, we mainly focus on effect-defying root causes. Difference graphs provide a natural tool for this task, as they aim to identify direct relationships that differ between variables across two distributions or environments. While difference graph methods have received increasing attention in the independent and identical distributed (i.i.d.) setting, mostly to characterize differences between populations~\cite{Wang_Neurips_2018,Chen_Neurips_2023,Malik_UAI_2024,Bystrova_Workshop_UAI_2024,Assaad_CLEAR_2025} rather than for root cause analysis, their role in time-series root cause analysis remains largely unexplored. Here, we investigate how such methods can be used to detect effect-defying root causes in linear dynamical systems by comparing normal and anomalous regimes. In the following, we introduce difference graphs adapted to time-series and linear DT-DSCMs.

\begin{definition}[Full-time Difference Graph] Consider two DT-DSCMs $\mathcal{M}^{\mathcal{N}}$ and $\mathcal{M}^{\mathcal{\bar N}}$.
    A Full-Time Difference Graph (FT-DFG) $\mathcal{D}=(\mathbb{V}, \mathbb{E}^{|\mathcal{N}-\mathcal{\bar N}|})$ between $\mathcal{M}^{\mathcal{N}}$ and $\mathcal{M}^{\mathcal{\bar N}}$, is a directed graph such that the set of vertices is identical to the set of endogeneous variables in $\mathcal{M}^{\mathcal{N}}$, $\mathcal{M}^{\mathcal{\bar N}}$ and the set of edges is defined as:
     $\mathbb{E}^{|\mathcal{N}-\mathcal{\bar N}|} = \{ X_{t-\ell} \rightarrow Y_t |  \forall (X_{t-\ell}, Y_t)  \in \mathbb{V}^2  
    \text{ such that } 
    \alpha^{\mathcal{N}}_{X_{t-\ell},Y_t} \neq \alpha^{\bar{\mathcal{N}}}_{X_{t-\ell},Y_t} \}.$
    % \begin{align*}    
    % \mathbb{E}^{|\mathcal{N}-\mathcal{\bar N}|} = \{ X_{t-\ell} \rightarrow Y_t |  &\forall (X_{t-\ell}, Y_t)  \in \mathbb{V}^2  
    % \text{ such that } 
    % \alpha^{\mathcal{N}}_{X_{t-\ell},Y_t} \neq \alpha^{\bar{\mathcal{N}}}_{X_{t-\ell},Y_t} \}.
    % \end{align*}
\end{definition}

In general, FT-DFGs may contain cycles~\cite{Assaad_CLEAR_2025}. However, to simplify the problem, we assume in this paper that the FT-DFG is acyclic. This assumption is commonly adopted by algorithms~\cite{Wang_Neurips_2018,Chen_Neurips_2023,Malik_UAI_2024,Bystrova_Workshop_UAI_2024} that aim to fully or partially recover FT-DFGs from data.

\begin{figure}[t]
\begin{subfigure}{0.33\textwidth}
\centering
\begin{tikzpicture}[
    scale=0.65,
    transform shape,
    node/.style={
        circle,
        draw,
        minimum size=0.75cm,
        inner sep=1pt,
        font=\small
    },
    edge/.style={->, thick},
    changededge/.style={->, thick, red},
    dashedline/.style={dashed, gray},
    regime/.style={dashed, red, thick}
]

%------------------------
% Nodes
%------------------------

\foreach \x/\s/\lab in {
    % 0/m2/{t-2},
    % 2.8/m1/{t-1},
    % 5.6/zero/{t-2},
    8.4/p1/{t-1},
    11.2/p2/{t}
}{
    \node[node] (X\s) at (\x,2.4) {$X_{\lab}$};
    \node[node] (Y\s) at (\x,1.2) {$Y_{\lab}$};
    \node[node] (Z\s) at (\x,0) {$Z_{\lab}$};

    % \node at (\x,-0.9) {$\lab$};
}

% changed contemporaneous edge at t
% \draw[edge, left=25] (Xzero) to (Yzero);

\foreach \s in {p1,p2}{
    \draw[edge, left=25] (X\s) to (Y\s);
}
\end{tikzpicture}
\end{subfigure}
\hfill 
\vrule
\hfill 
\begin{subfigure}{0.33\textwidth}
\begin{tikzpicture}[
    scale=0.65,
    transform shape,
    node/.style={
        circle,
        draw,
        minimum size=0.75cm,
        inner sep=1pt,
        font=\small
    },
    edge/.style={->, thick},
    changededge/.style={->, thick, red},
    dashedline/.style={dashed, gray},
    regime/.style={dashed, red, thick}
]

%------------------------
% Nodes
%------------------------

\foreach \x/\s/\lab in {
    % 0/m2/{t-2},
    % 2.8/m1/{t-1},
    % 5.6/zero/{t-2},
    8.4/p1/{t-1},
    11.2/p2/{t}
}{
    \node[node] (X\s) at (\x,2.4) {$X_{\lab}$};
    \node[node] (Y\s) at (\x,1.2) {$Y_{\lab}$};
    \node[node] (Z\s) at (\x,0) {$Z_{\lab}$};

    % \node at (\x,-0.9) {$\lab$};
}

% changed contemporaneous edge at t
% \draw[edge, left=25] (Xzero) to (Yzero);

\foreach \s in {p1,p2}{
    \draw[edge, left=25] (X\s) to (Y\s);
}

% \draw[edge, left=25] (Zzero) to (Yzero);

\foreach \s in {p1,p2}{
    \draw[edge, left=25] (Z\s) to (Y\s);
}

\end{tikzpicture}
\end{subfigure}
\caption{
FT-DFGs corresponding to the FT-DAG in Figure~\ref{fig:full_time_regime_change}. 
The left panel shows the FT-DFG for Setting R, where only the red edges in Figure~\ref{fig:full_time_regime_change} are changed. 
The right panel shows the FT-DFG for Setting RB, where both the red and blue edges in Figure~\ref{fig:full_time_regime_change} are changed. 
}
\label{fig:difference_graph_example}

\end{figure}

\begin{assumption}[Common topological ordering]
\label{assumption:order}
The graphs $\mathcal{G}^{\mathcal{N}}$ and $\mathcal{G}^{\bar{\mathcal{N}}}$
share a common topological ordering. That is, there exists a mapping
$\pi:\mathbb{V}\to\{1,\dots,|\mathbb{V}|\}$ such that $\pi$ is a topological
ordering of both $\mathcal{G}^{\mathcal{N}}$ and
$\mathcal{G}^{\bar{\mathcal{N}}}$, \ie, for every directed edge $X_{t}\to Y_t$ in
either graph, $\pi(X_{t})<\pi(Y_t)$.
\end{assumption}

% \textcolor{red}{
% In general it cannot be fully identified from observational data alone. Different temporal causal strucutures may induce the same set of conditional independencies and statistical invariances across environments. Consequently, only a partially identifiable representation of the summary causal difference graph can be recovered. 
% [under conditions etc]
% The output of temporal difference causal discovery algorithms is therefore a partially directed summary causal difference graph, containing both directed and undirected edges. 
% }

Figure~\ref{fig:difference_graph_example} illustrates the FT-DFGs associated with the two settings, R and RB, from Figure~\ref{fig:full_time_regime_change}.

%The corresponding FT-DFGs are shown in Figure~\ref{fig:difference_graph_example}. In Setting R, the difference graph contains a single changed edge, whereas in Setting RB it contains two changed edges targeting the same variable.

\section{Related Work}
\label{sec:RelatedWork}

Existing root-cause analysis methods either rely on a predefined causal graph~\cite{budhathoki2021,Li_2022,Assaad_AISTATS_2023,Assaad_HDR_2026} or first estimate causal relations from data~\cite{Spirtes_2000,Runge_2019_Nature,Runge_2020_UAI,Assaad_2022,Reiter_2026}. Among the latter, MicroCause~\cite{Meng_2020} combines PCMCI~\cite{Runge_2019_Nature} with a random-walk procedure, RCD~\cite{Ikram_2022} uses a regime indicator, and T-RCA~\cite{Zan_CIKM_2024} combines a learned graph with anomaly timing. There exists also  approaches that focus on explaining one-point anomalies or outliers~\cite{budhathoki2022outliers,orchard2026,schkoda2026} bur we consider those to be beyond the scope of this paper. Here we focus on collective anomalies and assume having access to  samples from normal and anomalous regimes. 

Our work builds on difference-graph discovery, which directly estimates how causal mechanisms vary across environments. In particular,  we  focus on methods that use equality tests to infer the FT-DFG.

\begin{itemize}
    \item  LDiffPC~\cite{Bystrova_Workshop_UAI_2024,Bystrova_Arxiv_2026} is an algorithm for discovering a partially oriented graph in linear non-dynamic structural causal models. It recovers the skeleton of the graph by testing equality of regression coefficients across environments and then orients edges using collider detection and Meek \cite{Meek1995CausalIA} propagation rules.

    \item DCI~\cite{Wang_Neurips_2018} is also designed to discover a partially oriented graph in linear  non-dynamic structural causal models. It recovers the skeleton using the same procedure as LDiffPC. However, its orientation step differs: instead of relying on separation sets and orientation rules, DCI orients edges by exploiting changes in residual variances across environments.
\end{itemize}

There exists also other approaches that do not use equality tests. For instance, 
MBGH~\cite{Malik_UAI_2024} relies on covariance differences, while iSCAN~\cite{Chen_Neurips_2023} identifies changed mechanisms in nonlinear additive-noise models through distributional invariance.
More details on related works is provided in Appendix. %~\ref{appendix:RelatedWork}.

%LDiffPC~\cite{Bystrova_Workshop_UAI_2024,Bystrova_Arxiv_2026} and DCI~\cite{Wang_Neurips_2018} use equality tests on regression coefficients in linear non-dynamic models, but differ in their orientation procedures: LDiffPC uses collider detection and Meek’s rules~\cite{Meek1995CausalIA}, whereas DCI exploits changes in residual variances. MBGH~\cite{Malik_UAI_2024} relies on covariance differences, while iSCAN~\cite{Chen_Neurips_2023} identifies changed mechanisms in nonlinear additive-noise models through distributional invariance. A detailed comparison is provided in Appendix~\ref{appendix:RelatedWork}.

\section{Difference graph discovery: from static to time-series}
\label{sec:Main_Difference_graph_discovery}

In this section, we adapt existing difference graph discovery methods to the time-series setting. The methods considered here do not return a fully oriented FT-DFG, but rather a partially oriented graph. We denote the output of such methods by $\widehat{\mathcal{D}}=(\mathbb{V},\widehat{\mathbb{E}},\widehat{\mathbb{E}}^{-})$, where $\mathbb{V}$ is the set of vertices, $\widehat{\mathbb{E}}$ is the set of directed edges, and $\widehat{\mathbb{E}}^{-}$ is the set of undirected edges.

In the following, $\sigma_{Y_t}$ denotes the standard deviation of the noise term $\varepsilon_{Y_t}$. 
For any conditioning set $\mathbb{S}$, 
$\beta^{\mathcal{N}}_{X_{t-\ell},Y_t \mid \mathbb{S}}$ denotes the coefficient of $X_{t-\ell}$, and 
$\sigma^{\mathcal{N}}_{X_{t-\ell},Y_t \mid \mathbb{S}}$ denotes the residual standard deviation, in the regression of $Y_t$ on $\{X_{t-\ell}\}\cup\mathbb{S}$ in the normal regime. 
Similarly, 
$\beta^{\bar{\mathcal{N}}}_{X_{t-\ell},Y_t \mid \mathbb{S}}$ and 
$\sigma^{\bar{\mathcal{N}}}_{X_{t-\ell},Y_t \mid \mathbb{S}}$ denote the corresponding coefficient and residual standard deviation in the anomalous regime.
 
\begin{figure}[t]
\begin{subfigure}{0.25\textwidth}
\centering
\begin{tikzpicture}[
    scale=0.65,
    transform shape,
    node/.style={
        circle,
        draw,
        minimum size=0.75cm,
        inner sep=1pt,
        font=\small
    },
    edge/.style={-, thick},
    changededge/.style={->, thick, red},
    dashedline/.style={dashed, gray},
    regime/.style={dashed, red, thick}
]

%------------------------
% Nodes
%------------------------

\foreach \x/\s/\lab in {
    % 0/m2/{t-2},
    % 2.8/m1/{t-1},
    % 5./zero/{t-2},
    7./p1/{t-1},
    9./p2/{t}
}{
    \node[node] (X\s) at (\x,2.4) {$X_{\lab}$};
    \node[node] (Y\s) at (\x,1.2) {$Y_{\lab}$};
    \node[node] (Z\s) at (\x,0) {$Z_{\lab}$};

    % \node at (\x,-0.9) {$\lab$};
}

% changed contemporaneous edge at t
% \draw[edge, left=25] (Xzero) to (Yzero);

\foreach \s in {p1,p2}{
    \draw[edge, left=25] (X\s) to (Y\s);
}
\end{tikzpicture}
\end{subfigure}
\hfill 
\vrule
\hfill
\begin{subfigure}{0.25\textwidth}
\centering
\begin{tikzpicture}[
    scale=0.65,
    transform shape,
    node/.style={
        circle,
        draw,
        minimum size=0.75cm,
        inner sep=1pt,
        font=\small
    },
    edge/.style={->, thick},
    changededge/.style={->, thick, red},
    dashedline/.style={dashed, gray},
    regime/.style={dashed, red, thick}
]

%------------------------
% Nodes
%------------------------

\foreach \x/\s/\lab in {
    % 0/m2/{t-2},
    % 2.8/m1/{t-1},
    % 5./zero/{t-2},
    7./p1/{t-1},
    9./p2/{t}
}{
    \node[node] (X\s) at (\x,2.4) {$X_{\lab}$};
    \node[node] (Y\s) at (\x,1.2) {$Y_{\lab}$};
    \node[node] (Z\s) at (\x,0) {$Z_{\lab}$};

    % \node at (\x,-0.9) {$\lab$};
}

% changed contemporaneous edge at t
% \draw[edge, left=25] (Xzero) to (Yzero);

\foreach \s in {p1,p2}{
    \draw[edge, left=25] (X\s) to (Y\s);
}
\end{tikzpicture}
\end{subfigure}
\hfill 
\vrule
\hfill
\begin{subfigure}{0.25\textwidth}
\begin{tikzpicture}[
    scale=0.65,
    transform shape,
    node/.style={
        circle,
        draw,
        minimum size=0.75cm,
        inner sep=1pt,
        font=\small
    },
    edge/.style={->, thick},
    changededge/.style={->, thick, red},
    dashedline/.style={dashed, gray},
    regime/.style={dashed, red, thick}
]

%------------------------
% Nodes
%------------------------

\foreach \x/\s/\lab in {
    % 0/m2/{t-2},
    % 2.8/m1/{t-1},
    % 5./zero/{t-2},
    7./p1/{t-1},
    9./p2/{t}
}{
    \node[node] (X\s) at (\x,2.4) {$X_{\lab}$};
    \node[node] (Y\s) at (\x,1.2) {$Y_{\lab}$};
    \node[node] (Z\s) at (\x,0) {$Z_{\lab}$};

    % \node at (\x,-0.9) {$\lab$};
}

% changed contemporaneous edge at t
% \draw[edge, left=25] (Xzero) to (Yzero);

\foreach \s in {p1,p2}{
    \draw[edge, left=25] (X\s) to (Y\s);
}

% \draw[edge, left=25] (Zzero) to (Yzero);

\foreach \s in {p1,p2}{
    \draw[edge, left=25] (Z\s) to (Y\s);
}

\end{tikzpicture}
\end{subfigure}
\caption{
FT-DFGs recovered by the considered algorithms. The left panel shows the output of tsLDiffPC in Setting R. The middle panel shows the output of tsLDiffPC$^2$ and tsDCIPC in Setting R. Under the variance-invariance condition required by DCI, tsDCI recovers the same graph. The right panel shows the output of tsLDiffPC, tsLDiffPC$^2$, and tsDCIPC in Setting RB; under the same condition, tsDCI recovers an identical graph.}
\label{fig:partially_oriented_difference_graph_example}

\end{figure}

\subsection{The tsLDiffPC algorithm}

We begin with the extension of LDiffPC, which we denote by tsLDiffPC.
tsLDiffPC proceeds in two steps. First, it estimates the FT-DFG skeleton by testing equality of regression coefficients across regimes. %By stationarity, all changed mechanisms can be captured by considering only candidate edges involving at least one present-time variable. 
More specifically, tsLDiffPC starts from a complete undirected graph over $\mathbb{V}$ and iteratively removes edges between pairs of vertices $X_{t-\ell}$ and $Y_t$ whenever it finds a conditioning set $\mathbb{S}\subseteq\mathbb{V}\setminus\{X_{t-\ell},Y_t\}$ such that $\beta_{X_{t-\ell}, Y_t\mid \mathbb{S}}^{\mathcal{N}}=\beta_{X_{t-\ell}, Y_t\mid \mathbb{S}}^{\mathcal{\bar N}}$. The size of the conditioning set is increased progressively, following the PC strategy~\cite{Spirtes_2000}. For each removed edge, the corresponding separating set $\mathbb{S}$ is stored. 
Second, tsLDiffPC starts the orientation phase by orienting lagged edges according to the temporal order. It then orients instantaneous edges using the standard LDiffPC collider-detection rule: for any  triple $X_{t-\ell} - Z_t - Y_t$ such that $X_{t-\ell}$ and $Y_t$ are not adjacent, if $Z_t$ does not belong to the separating set stored for $X_{t-\ell}$ and $Y_t$, the triple is oriented as
$X_{t-\ell} \to Z_t \leftarrow Y_t$.
Finally, tsLDiffPC further orients edges by iteratively applying Meek's orientation rules until no additional orientation is possible. The pseudocode is given in Appendix. %\ref{appendix:pseudocode}.

To ensure correct detection, tsDiffPC relies on the following two assumptions.
\begin{assumption}[Diff-adjacency-faithfulness]
\label{assumption:diff_adjacency_faithfulness}
Let $\mathcal{D}=(\mathbb{V}, \mathbb{E})$ denote an FT-DFG. If $X_{t-\ell}\rightarrow Y_t$ in $\mathbb{E}$ then for any subset $\mathbb{S}\subseteq \mathbb{V}\backslash\{X_{t-\ell}, Y_t\}$, $
\beta^{\mathcal{N}}_{X_{t- \ell},Y_t\mid \mathbb{S}}
\neq
\beta^{\bar{\mathcal{N}}}_{X_{t- \ell},Y_t\mid \mathbb{S}}$.
\end{assumption}

\begin{assumption}[Diff-orientation-faithfulness]
\label{assumption:diff_orient_faithfulness}
Let $\mathcal{D}=(\mathbb{V}, \mathbb{E})$ denote an FT-DFG. Let $(X_{t-\ell}, Z_t, Y_t)$ be a triplet of vertices such that $X_{t-\ell}$ and $Z_t$ are adjacent, $Z_{t}$ and $Y_t$ are adjacent, and $X_{t-\ell}$ and $Y_t$ are not adjacent. The following holds:
\begin{itemize}
    \item If $X_{t-\ell}\rightarrow Z_t\leftarrow Y_t$ in $\mathcal{D}$ then for any subset $\mathbb{S}\subseteq \mathbb{V}\backslash\{X_{t-\ell}, Y_t\}$ that contains $Z_t$, $
\beta^{\mathcal{N}}_{X_{t- \ell},Y_t\mid \mathbb{S}}
\neq
\beta^{\bar{\mathcal{N}}}_{X_{t- \ell},Y_t\mid \mathbb{S}}$.
    \item Otherwise,   for any subset $\mathbb{S}\subseteq \mathbb{V}\backslash\{X_{t-\ell}, Y_t\}$ that does not contain $Z_t$, $
\beta^{\mathcal{N}}_{X_{t- \ell},Y_t\mid \mathbb{S}}
\neq
\beta^{\bar{\mathcal{N}}}_{X_{t- \ell},Y_t\mid \mathbb{S}}$.
\end{itemize}
\end{assumption}

The two assumptions introduced above differ from those originally used in the work introducing LDiffPC. In~\cite{Bystrova_Arxiv_2026}, the analysis relies on the stronger assumption of diff-faithfulness, which establishes an exact correspondence between the equality of regression coefficients and diff-separation (a tool similar to d-separation~\cite{Pearl_2000} but adapted to difference graphs). The analysis also requires an additional assumption ensuring the symmetry of regression invariance. It was shown that diff-adjacency-faithfulness and diff-orientation-faithfulness are jointly implied by diff-faithfulness and the symmetry of regression invariance~\cite{Bystrova_Arxiv_2026}.
%In other words, diff-faithfulness is a strictly stronger condition that guarantees the validity of these two assumptions. 
Here, we argue that diff-adjacency-faithfulness and diff-orientation-faithfulness are sufficient on their own to establish the correctness of the tsLDiffPC algorithm, as well as its extension to the time series setting. Further details on how Assumptions~\ref{assumption:diff_adjacency_faithfulness} and \ref{assumption:diff_orient_faithfulness} relate to the assumptions used in~\cite{Bystrova_Arxiv_2026} are provided in Appendix. %~\ref{appendix:diff-faithfulnes}.

Theorem~\ref{theorem:correctness_tsLDiffPC} establishes the correctness of tsLDiffPC under diff-adjacency-faithfulness and the diff-orientation-faithfulness introduced above.

\begin{theorem}
\label{theorem:correctness_tsLDiffPC}
Let $\mathcal{D}=(\mathbb{V}, \mathbb{E})$ be the true FT-DFG and $\widehat{\mathcal{D}}=(\mathbb{V}, \widehat{\mathbb{E}}, \widehat{\mathbb{E}}^{-})$ be the output of tsLDiffPC.
 If Assumptions~\ref{assumption:sufficiency},\ref{assumption:stationarity},\ref{assumption:anomalies},\ref{assumption:order}, \ref{assumption:diff_adjacency_faithfulness} and \ref{assumption:diff_orient_faithfulness}  are satisfied  and we are given perfect conditional equality information about all pairs of variables  then $\mathrm{Skel}(\widehat{\mathcal{D}})  = \mathrm{Skel}(\mathcal{D})$  and $\widehat{\mathbb{E}}\subseteq\mathbb{E}$.
\end{theorem}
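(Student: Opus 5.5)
The proof splits into three parts: the skeleton is recovered exactly, the directed edges produced by the temporal and collider steps are correct, and Meek propagation preserves correctness. Throughout, the window restriction from Assumption~\ref{assumption:stationarity} means the regression coefficients $\beta^{\mathcal{N}}_{X_{t-\ell},Y_t\mid\mathbb{S}}$ and $\beta^{\bar{\mathcal{N}}}_{X_{t-\ell},Y_t\mid\mathbb{S}}$ do not depend on $t$ within a regime. This lets us treat each regime as a static linear SEM over the finite vertex set $\mathbb{V}$, ordered by the common ordering $\pi$ of Assumption~\ref{assumption:order}.

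\textbf{Skeleton, edges are kept.} If $X_{t-\ell}\to Y_t\in\mathbb{E}$, Assumption~\ref{assumption:diff_adjacency_faithfulness} gives $\beta^{\mathcal{N}}_{X_{t-\ell},Y_t\mid\mathbb{S}}\neq\beta^{\bar{\mathcal{N}}}_{X_{t-\ell},Y_t\mid\mathbb{S}}$ for every $\mathbb{S}$. With perfect equality information, tsLDiffPC therefore never removes this edge, so $\mathrm{Skel}(\mathcal{D})\subseteq\mathrm{Skel}(\widehat{\mathcal{D}})$.

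\textbf{Skeleton, non-edges are removed.} Conversely, suppose a pair is non-adjacent in $\mathcal{D}$. Name the pair so that $\pi(X_{t-\ell})<\pi(Y_t)$, and take
\[
\mathbb{S}=\bigl(\mathrm{Pa}(Y_t,\mathcal{G}^{\mathcal{N}})\cup\mathrm{Pa}(Y_t,\mathcal{G}^{\bar{\mathcal{N}}})\bigr)\setminus\{X_{t-\ell}\}.
\]
Every vertex of $\{X_{t-\ell}\}\cup\mathbb{S}$ precedes $Y_t$ under $\pi$ in both graphs. By causal sufficiency (Assumption~\ref{assumption:sufficiency}), $\varepsilon_{Y_t}$ is independent of all of them in each regime. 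Hence the population regression of $Y_t$ on $\{X_{t-\ell}\}\cup\mathbb{S}$ returns the structural coefficients in each regime, so $\beta^{\mathcal{N}}_{X_{t-\ell},Y_t\mid\mathbb{S}}=\alpha^{\mathcal{N}}_{X_{t-\ell},Y_t}$, and likewise for $\bar{\mathcal{N}}$ (this value is $0$ if $X_{t-\ell}$ is not a parent). Non-adjacency in $\mathcal{D}$ means exactly $\alpha^{\mathcal{N}}_{X_{t-\ell},Y_t}=\alpha^{\bar{\mathcal{N}}}_{X_{t-\ell},Y_t}$, so the equality holds and the edge is removed, with some separating set stored.

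This step relies on the search ranging over all subsets of $\mathbb{V}\setminus\{X_{t-\ell},Y_t\}$, as the algorithm is stated, rather than only over current adjacents. The union parent set need not be adjacent to $Y_t$ in the difference graph, so I would make this reliance explicit. Together the two directions give $\mathrm{Skel}(\widehat{\mathcal{D}})=\mathrm{Skel}(\mathcal{D})$.

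\textbf{Orientation: temporal and collider steps.} Lagged edges are correct because every edge of $\mathcal{D}$ is, by definition, of the form $X_{t-\ell}\to Y_t$ with $\ell\ge 0$, so temporal order gives the true direction. For an unshielded triple $X_{t-\ell}-Z_t-Y_t$ (unshielded in $\mathcal{D}$, since the skeleton is now correct), the stored separating set $\mathbb{S}$ satisfies the equality. There are two cases.
\begin{itemize}
\item If the triple is a collider in $\mathcal{D}$, the first item of Assumption~\ref{assumption:diff_orient_faithfulness} forbids $Z_t\in\mathbb{S}$. The algorithm therefore orients $X_{t-\ell}\to Z_t\leftarrow Y_t$, which is correct.
\item Otherwise, the second item forces $Z_t\in\mathbb{S}$, so no spurious collider is created.
\end{itemize}
Thus after this phase every directed edge is in $\mathbb{E}$. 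Moreover, the unshielded colliders and non-colliders of $\widehat{\mathcal{D}}$ coincide with those of $\mathcal{D}$.

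\textbf{Meek propagation (main obstacle).} I would argue that $\mathcal{D}$ is a DAG (by the acyclicity assumption) that is consistent with the current partial orientation. That partial orientation consists of its skeleton, its unshielded colliders, and correct background knowledge given by the temporal orientations. Each Meek rule is sound under exactly this hypothesis, since it only forces an orientation whose reverse would create a new unshielded collider or a directed cycle in every consistent DAG. By induction on rule applications, every newly directed edge therefore agrees with $\mathcal{D}$, and $\widehat{\mathbb{E}}\subseteq\mathbb{E}$.

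The delicate point is the interaction between the temporally forced orientations and Meek's rules. I would handle it by citing soundness of Meek's rules with background knowledge~\cite{Meek1995CausalIA}, noting that the background knowledge here is true of $\mathcal{D}$ and hence never conflicting.
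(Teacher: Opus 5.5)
Your proof is correct, but it takes a different route from the paper's at the step that removes non-adjacent pairs.

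\textbf{Skeleton, non-edges.} For a pair non-adjacent in $\mathcal{D}$, the paper invokes Assumption~\ref{assumption:diff_adjacency_faithfulness} to get a set $\mathbb{S}$ with $\beta^{\mathcal{N}}_{X_{t-\ell},Y_t\mid\mathbb{S}}=\beta^{\bar{\mathcal{N}}}_{X_{t-\ell},Y_t\mid\mathbb{S}}$. That assumption only says that adjacency forces inequality for every $\mathbb{S}$. The existence of a witnessing set for a non-adjacent pair is its converse, a Markov-type property rather than a faithfulness condition. Your explicit choice $\mathbb{S}=(\mathrm{Pa}(Y_t,\mathcal{G}^{\mathcal{N}})\cup\mathrm{Pa}(Y_t,\mathcal{G}^{\bar{\mathcal{N}}}))\setminus\{X_{t-\ell}\}$ proves that property directly from linearity, causal sufficiency and the common ordering. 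It is the same device the paper uses for tsDCI in the proof of Lemma~\ref{lemma:temp_C6}. It is also why the search over all subsets of $\mathbb{V}\setminus\{X_{t-\ell},Y_t\}$ matters, as you point out.

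One small precision is worth adding. Take $\pi$ consistent with time, so that the response is the present-time vertex and its parents lie in the window. This is always possible: both graphs only have forward-in-time or same-time edges, and the union of the two graphs is acyclic, so adding all forward-in-time precedences creates no cycle. Pairs lying entirely in earlier slices are then covered by the stationarity shift the algorithm performs.

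\textbf{Orientation.} You also use the second item of Assumption~\ref{assumption:diff_orient_faithfulness} to exclude spurious colliders. The paper's proof omits this, although it is needed for $\widehat{\mathbb{E}}\subseteq\mathbb{E}$.

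\textbf{Meek step.} Your Meek argument is the standard soundness argument: all true unshielded colliders are oriented, earlier orientations are true, and $\mathcal{D}$ is acyclic. The paper replaces this by a citation to Lemma~3 of~\cite{Bystrova_Arxiv_2026}.

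\textbf{Comparison.} The paper's route is shorter, but its non-edge step rests on a direction the stated assumption does not supply. Your route is self-contained and uses diff-adjacency-faithfulness only to keep true edges.
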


\begin{example}
In Setting R, only a single edge entering $Y_t$ is modified across regimes. Consequently, the corresponding FT-DFG contains only one changed edge, $X_t \rightarrow Y_t$, as shown in Figure~\ref{fig:difference_graph_example} (left). While LDiffPC correctly identifies the adjacency, it lacks sufficient information to orient the edge, as neither temporal ordering, collider detection, nor Meek’s orientation rules apply. As a result, the edge remains unoriented in the output, shown in Figure~\ref{fig:partially_oriented_difference_graph_example} (left).
In contrast, in Setting RB, two incoming edges into $Y_t$ are modified across regimes. The resulting FT-DFG contains an unshielded collider structure of the form $X_t \rightarrow Y_t \leftarrow Z_t$, where $Y_t$ is the common endpoint of the changed edges. In this case, LDiffPC exploits its collider orientation rule to orient both edges toward $Y_t$ (Figure \ref{fig:partially_oriented_difference_graph_example} right).

%\textcolor{red}{ It is important to note that tsLDiffPC cannot orient instantaneous edges when no unshielded collider is present, leaving them undirected. An example is shown in the left panel of Figure~\ref{fig:partially_oriented_difference_graph_example}. Setting R illustrates a limitation of tsLDiffPC. Since the difference graph contains only one changed edge, no unshielded collider is available and the edge cannot be oriented from the difference graph alone.

\end{example}

\subsection{The tsDCI algorithm}
%\textcolor{red}{TODO: Dasha: does DCI orientation phase requires iid? }

%\textcolor{red}{tentative de response Anouk : Oui pour la F-distribution, mais donc la preuve du theoreme 4.4  reste le même sans cette assumption, ce livre parle des test pour ts New Introduction to Multiple time-series Analysis}

The second algorithm, denoted tsDCI is a time-series extension of DCI.
The tsDCI algorithm proceeds in two steps. First, it estimates the FT-DFG skeleton using the same procedure as tsLDiffPC. Lagged edges are oriented directly according to temporal order, while instantaneous edges are oriented by testing the invariance of residual variances across regimes. More precisely, tsDCI searches for a conditioning set $\mathbb{S}$  such that $\sigma^{\mathcal N}_{X_t\mid \mathbb S} = \sigma^{\bar{\mathcal N}}_{X_t\mid \mathbb S}$. If such a set $\mathbb S$ exists and $Y_t\notin\mathbb S$, the edge is oriented as $X_t\to Y_t$; if $Y_t\in\mathbb S$, it is oriented as $Y_t\to X_t$. The procedure is repeated until no further orientation is possible.
The pseudocode is given in Appendix. %\ref{appendix:pseudocode}.

To ensure correctness, tsDCI requires, similarly to tsLDiffPC, the diff-adjacency-faithfulness assumption. But for orientation, it relies on the following assumption on residual variances, which is the temporal analogue of Assumption~4.2 in~\cite{Wang_Neurips_2018}.

\begin{assumption}[Var-orientation-faithfulness]
\label{assumption:diff_var_orient_faithfulness}
    (temporal analogue of Assumption 4.2~\cite{Wang_Neurips_2018}) For any node $X_{t- \ell}, Y_t \in \mathbb{V}$ it holds that:
    \begin{itemize}
        \item If $\alpha^{\mathcal N}_{{X_{t- \ell}, }Y_t}   \neq \alpha^{\bar{\mathcal N}}_{X_{t- \ell}, Y_t}$ then for every $\mathbb{S} \subseteq \mathbb{V} \setminus \{Y_t\}$, $\sigma^{\mathcal N}_{Y_t | \mathbb{S}} \neq \sigma^{\bar{\mathcal N}}_{Y_t | \mathbb{S}}$ and  $\sigma^{\mathcal N}_{X_{t- \ell} | \mathbb{S} \cup \{ Y_t \}} \neq \sigma^{\bar{\mathcal N}}_{X_{t- \ell} | \mathbb{S} \cup \{ Y_t \}}$.

        \item If $\sigma^{\mathcal N}_{Y_t} \neq \sigma^{\bar{\mathcal N}}_{Y_t}$, then $\sigma^{\mathcal N}_{Y_t | \mathbb{S}} \neq \sigma^{\bar{\mathcal N}}_{Y_t | \mathbb{S}}$ for every $\mathbb{S} \subseteq \mathbb{V} \setminus \{ Y_t\}$.
    \end{itemize}
\end{assumption}

Theorem~\ref{theorem:correctness_tsdci} establishes the correctness of tsDCI under diff-adjacency-faithfulness and the residual-variance assumption introduced above.

\begin{theorem}
\label{theorem:correctness_tsdci}
Let $\mathcal{D}=(\mathbb{V}, \mathbb{E})$ be the true FT-DFG and $\widehat{\mathcal{D}}=(\mathbb{V}, \widehat{\mathbb{E}}, \widehat{\mathbb{E}}^{-})$ be the output of tsDCI.
 If Assumptions~\ref{assumption:sufficiency},\ref{assumption:stationarity},\ref{assumption:anomalies},\ref{assumption:order},\ref{assumption:diff_adjacency_faithfulness} and \ref{assumption:diff_var_orient_faithfulness} are satisfied  and we are given perfect conditional equality information about all pairs of variables then $\mathrm{Skel}(\widehat{\mathcal{D}})  = \mathrm{Skel}(\mathcal{D})$  and $\widehat{\mathbb{E}}\subseteq\mathbb{E}$.
\end{theorem}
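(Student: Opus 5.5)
The proof splits into a skeleton part and an orientation part. The skeleton phase of tsDCI is identical to that of tsLDiffPC, so the claim $\mathrm{Skel}(\widehat{\mathcal{D}})=\mathrm{Skel}(\mathcal{D})$ should follow from the corresponding part of Theorem~\ref{theorem:correctness_tsLDiffPC}. That part does not use Assumption~\ref{assumption:diff_orient_faithfulness}, so it transfers here.

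For completeness I would still re-derive the skeleton argument, restricting attention to the window of size $\ell_{\max}+1$ (justified by Assumption~\ref{assumption:stationarity}). The two inclusions are as follows.
\begin{itemize}
\item \emph{No true edge is removed.} If $X_{t-\ell}\to Y_t\in\mathbb{E}$, Assumption~\ref{assumption:diff_adjacency_faithfulness} gives $\beta^{\mathcal N}_{X_{t-\ell},Y_t\mid\mathbb{S}}\neq\beta^{\bar{\mathcal N}}_{X_{t-\ell},Y_t\mid\mathbb{S}}$ for every $\mathbb{S}$. Hence, with perfect equality information, no separating set is ever found and the edge survives.
\item \emph{Every non-edge is removed.} If $X_{t-\ell}$ and $Y_t$ are non-adjacent in $\mathcal{D}$, then under Assumption~\ref{assumption:order} either $Y_t$ comes later in $\pi$ or the roles are swapped; take the later vertex to be $Y_t$. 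I would take $\mathbb{S}=\mathrm{Pa}(Y_t,\mathcal{G}^{\mathcal N})\cup\mathrm{Pa}(Y_t,\mathcal{G}^{\bar{\mathcal N}})\setminus\{X_{t-\ell}\}$, which lies inside the window. By Assumption~\ref{assumption:sufficiency} the regression of $Y_t$ on $\{X_{t-\ell}\}\cup\mathbb{S}$ recovers the structural coefficients, because the noise $\varepsilon_{Y_t}$ is independent of all earlier vertices in the common order. So $\beta=\alpha$ in each regime, and non-adjacency in $\mathcal{D}$ means $\alpha^{\mathcal N}_{X_{t-\ell},Y_t}=\alpha^{\bar{\mathcal N}}_{X_{t-\ell},Y_t}$, giving coefficient equality. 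The PC-style search over all subsets eventually tests this $\mathbb{S}$ and removes the edge.
\end{itemize}

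For orientation, lagged edges are oriented by temporal order. Since Assumption~\ref{assumption:stationarity} forbids $Y_t\to X_{t-\ell}$ with $\ell>0$, every such orientation agrees with $\mathbb{E}$.

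For an instantaneous adjacency $X_t - Y_t$, assume the true edge is $X_t\to Y_t$, so $\alpha^{\mathcal N}_{X_t,Y_t}\neq\alpha^{\bar{\mathcal N}}_{X_t,Y_t}$. I must show that tsDCI never outputs $Y_t\to X_t$. Suppose it did; there are two ways this could happen.
\begin{itemize}
\item \emph{Orientation triggered at $X_t$.} The algorithm found $\mathbb{S}\ni Y_t$ with $\sigma^{\mathcal N}_{X_t\mid\mathbb{S}}=\sigma^{\bar{\mathcal N}}_{X_t\mid\mathbb{S}}$. Writing $\mathbb{S}=\mathbb{S}'\cup\{Y_t\}$, the first bullet of Assumption~\ref{assumption:diff_var_orient_faithfulness} says $\sigma^{\mathcal N}_{X_t\mid\mathbb{S}'\cup\{Y_t\}}\neq\sigma^{\bar{\mathcal N}}_{X_t\mid\mathbb{S}'\cup\{Y_t\}}$, a contradiction.
\item \emph{Orientation triggered at $Y_t$.} The algorithm found $\mathbb{S}\not\ni X_t$ with invariant residual variance of $Y_t$. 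The first bullet again forbids any invariant $\sigma_{Y_t\mid\mathbb{S}}$, since $Y_t$ has a changed incoming coefficient.
\end{itemize}
Thus every produced orientation agrees with $\mathcal{D}$, giving $\widehat{\mathbb{E}}\subseteq\mathbb{E}$. Edges with no valid test remain in $\widehat{\mathbb{E}}^{-}$, which is harmless for the stated inclusion.

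The main obstacle is the bookkeeping in the orientation step. I need to match the algorithm's rule (``$\sigma_{X_t\mid\mathbb{S}}$ invariant and $Y_t\in\mathbb{S}$ gives $Y_t\to X_t$'') exactly with both bullets of Assumption~\ref{assumption:diff_var_orient_faithfulness}. I also need to check that the iterated procedure cannot create a wrong orientation through an edge whose endpoint role is ambiguous. My plan is to argue per edge, so that each orientation is justified by a single test on one endpoint. That makes iteration irrelevant, and only the two contradiction cases above need verification. The second bullet, on noise-variance changes, is used only to preclude spurious invariance at vertices whose noise changed; I would check whether it is needed at all or only for completeness of orientation, which is not claimed.
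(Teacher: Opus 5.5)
Your orientation argument is essentially the paper's. Your two cases, ``triggered at $X_t$'' and ``triggered at $Y_t$'', are exactly the two items of Lemma~\ref{lemma:temp_C7}: an invariant set for the tested endpoint must contain a changed parent and must exclude a changed child. You are also right that Lemma~\ref{lemma:temp_C6}, the only place the second bullet of Assumption~\ref{assumption:diff_var_orient_faithfulness} enters, concerns which edges get oriented, not soundness.

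\textbf{Missing step.} You do not cover the last step of tsDCI in Algorithm~\ref{alg:tsdci}. After the variance tests, every remaining undirected edge $X_t-Y_t$ is oriented $X_t\to Y_t$ whenever $\widehat{\mathcal D}$ already contains a directed path from $X_t$ to $Y_t$. So your claim that untested edges stay in $\widehat{\mathbb{E}}^{-}$ is false. These orientations rest on no single test, so your per-edge argument does not reach them. The paper closes this by acyclicity, and you need the same step. Argue by induction on the order in which orientations are made: the path consists of edges already in $\mathbb{E}$, so it is a directed path in $\mathcal D$. Since $X_t$ and $Y_t$ are adjacent in $\mathcal D$ by skeleton correctness, a true edge $Y_t\to X_t$ would close a directed cycle in $\mathcal D$. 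Equivalently, it would violate the common order $\pi$ of Assumption~\ref{assumption:order}, which every edge of $\mathcal D$ respects because a changed coefficient is nonzero in at least one regime.

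\textbf{Wording in the ``triggered at $Y_t$'' case.} Invoke the first bullet only for sets $\mathbb{S}\not\ni X_t$. Read literally over all $\mathbb{S}\subseteq\mathbb{V}\setminus\{Y_t\}$, it clashes with the fact used to prove Lemma~\ref{lemma:temp_C6}. That fact is that regressing $Y_t$ on $\mathrm{Pa}(Y_t,\mathcal{G}^{\mathcal N})\cup\mathrm{Pa}(Y_t,\mathcal{G}^{\bar{\mathcal N}})$ leaves residual standard deviation $\sigma_{Y_t}$ in each regime. The intended range, as in DCI, excludes $X_{t-\ell}$, and that is all your case needs.

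\textbf{Skeleton: a different route.} The paper simply invokes Theorem~\ref{theorem:correctness_tsLDiffPC}. Its ``every non-edge is removed'' step extracts an equality set from Assumption~\ref{assumption:diff_adjacency_faithfulness} by reading it in the converse direction. That assumption only says adjacency forces inequality for every $\mathbb{S}$; it supplies no equality set for non-adjacent pairs. Your explicit set derives the equality from causal sufficiency and the common order. This is a more self-contained argument and actually establishes that inclusion.

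Two details need attention:
\begin{itemize}
\item Choose the regressand so that its parents lie in the window and all regressors are non-descendants of it in both regimes. For lagged pairs take $Y_t$, since $X_{t-\ell}$ cannot be its descendant. Taking the $\pi$-later vertex there could force parents outside the window. For instantaneous pairs, take the $\pi$-later vertex.
\item Both you and the paper treat edge removal as testing only the coefficient of the parent in the regression of the child. Algorithm~\ref{alg:tsdci} also removes an edge when the reversed coefficient $\beta_{Y_t,X_{t-\ell}\mid\mathbb{S}}$ is invariant, and Assumption~\ref{assumption:diff_adjacency_faithfulness} does not control that coefficient. Your ``no true edge is removed'' bullet therefore silently needs that assumption in both regression directions.
\end{itemize}
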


%The variance-based orientation criterion of tsDCI can orient certain edges that remain undirected under tsLDiffPC. 
%The variance-based orientation criterion of tsDCI can orient certain edges that remain undirected under tsLDiffPC. Conversely, there exist difference graphs that are fully oriented by tsLDiffPC but only partially oriented by tsDCI.
%\textcolor{red}{But also the opposite is true}

\begin{example}
Similarly to tsLDiffPC, tsDCI recovers the adjacencies in both settings R and RB. However, unlike tsLDiffPC, tsDCI can orient certain isolated changed edges. For example in Setting R, if $\sigma_{Y_t}^{\mathcal N}=\sigma_{Y_t}^{\bar{\mathcal N}}$, then Assumption~\ref{assumption:diff_var_orient_faithfulness} guarantees the existence of a conditioning set witnessing residual-variance invariance for $Y_t$. Consequently, tsDCI orients the edge toward $Y_t$, thereby recovering the true FT-DFG as shown in the middel panel of Figure~\ref{fig:partially_oriented_difference_graph_example}.
\end{example}

\subsection{The tsLDiffPC$^2$ and the tsDCIPC algorithms}

We also consider two augmented variants, tsLDiffPC$^2$ and tsDCIPC. 
The main idea behind these algorithms is to first apply a causal discovery method to the normal data, and then leverage the inferred orientations by projecting them onto the FT-DFG. 
%By construction, and under Assumption~\ref{assumption:order}, any orientation in the difference graph cannot contradict an orientation in the FT-DAG. However, 
To enable this step, we require the following additional assumption.

\begin{assumption}[Anomalous subgraph]
\label{assumption:graph_in_one_regime_is_subgraph_of_another}
Let $\mathcal{G}^{\mathcal{N}} = (\mathbb{V}, \mathbb{E}^{\mathcal{N}})$ and $\mathcal{G}^{\bar{\mathcal{N}}} = (\mathbb{V}, \mathbb{E}^{\bar{\mathcal{N}}})$ denote the FT-DAGs associated with the normal and anomalous regimes, respectively. It is assumed that
$
\mathbb{E}^{\bar{\mathcal{N}}} \subseteq \mathbb{E}^{\mathcal{N}}
% \quad \text{or} \quad
% \mathbb{E}^{\bar{\mathcal{N}}} \subseteq \mathbb{E}^{\mathcal{N}}.
$.
\end{assumption}

%This assumption has been considered in previous root cause analysis methods. For instance, EasyRCA and SGRCA assume that the anomalous graph is a subgraph of the normal graph. This assumption could be replaced by an alternative one. In particular, one could instead assume that the normal graph is a subgraph of the anomalous graph. In that case, the PC algorithm should be applied to the anomalous data rather than to the normal data. We adopt the former assumption for two practical reasons. First, anomalies are less likely to introduce entirely new causal relationships in a DT-DSCM describing the normal behavior of the system; rather, they are more likely to modify existing mechanisms. Second, the normal regime typically contains more observations than the anomalous regime, making causal discovery more reliable in the normal regime. A more general strategy would be to assume only that one graph is a subgraph of the other. This would require running PC on both the normal and anomalous regimes and combining the resulting graphs to refine the orientation of the output of tsLDiffPC. Although theoretically possible, this strategy is computationally more demanding and potentially less stable.

This assumption is common in the root cause analysis literature and is notably adopted by EasyRCA and SGRCA. An alternative would be to assume that the normal graph is a subgraph of the anomalous graph and to apply tPC to the anomalous regime instead. We focus on the former setting because anomalies are more likely to modify existing mechanisms than to create entirely new ones, and because the normal regime typically contains more observations, leading to more reliable causal discovery.

The tsLDiffPC$^2$ and tsDCIPC algorithms first run tsLDiffPC and tsDCI, respectively, to obtain a partially oriented FT-DFG. Then apply tPC~\cite{Spirtes_2000}  to the normal regime and estimate a partially oriented FT-DAG. As in the standard PC algorithm~\cite{Spirtes_2000}, tPC removes edges by testing conditional independences and subsequently orients the remaining edges using temporal order, the collider-detection rule, and Meek rules. Finally, each undirected edge in $\widehat{\mathcal D}$ is oriented whenever tPC provides an orientation in $\widehat{\mathcal G}^{\mathcal N}$. Since both methods rely on tPC, they additionally require the assumptions underlying tPC.

\begin{assumption}[Faithfulness in the normal regime]
\label{assumption:faithfulness}
Let $\mathcal{M}^{\mathcal N}$ be the DT-DSCM corresponding to the normal regime, with causal graph $\mathcal{G}^{\mathcal N}$ and distribution $\mathbb{P}^{\mathcal N}$. All conditional independences in $\mathbb{P}^{\mathcal N}$ are implied by the causal Markov property associated with $\mathcal{G}^{\mathcal N}$, which states that each variable is independent of its non-descendants given its parents. 
\end{assumption}
Table~\ref{tab:assumptions_algorithms} summarizes all assumptions required by each of the algorithms introduced in this paper.
The soundness of tsLDiffPC$^2$ and tsDCIPC trivially follows from the soundness of  tsLDiffPC, tsDCI and tPC.

\begin{corollary}
\label{cor:pc_augmented}
Let $\mathcal{D}=(\mathbb{V}, \mathbb{E})$ be the true FT-DFG, $\mathcal{A}\in\{\text{tsLDiffPC}^2,\text{tsDCIPC}\}$ and
$\widehat{\mathcal D}=(\mathbb V,\widehat{\mathbb E},\widehat{\mathbb E}^{-})$
be the output of $\mathcal A$.
Under the assumptions required by $\mathcal A$ in Table~\ref{tab:assumptions_algorithms}, and given perfect conditional equality information for all pairs of variables,
$\mathrm{Skel}(\mathcal D)=\mathrm{Skel}(\widehat{\mathcal D})$
and $\widehat{\mathbb E}\subseteq\mathbb E.$
\end{corollary}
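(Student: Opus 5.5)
The plan is to reduce the corollary to Theorem~\ref{theorem:correctness_tsLDiffPC} (for tsLDiffPC$^2$) and Theorem~\ref{theorem:correctness_tsdci} (for tsDCIPC), plus the standard soundness of tPC on the normal regime. The augmentation step only adds orientations to edges already in $\widehat{\mathbb E}^{-}$. It never adds or removes adjacencies and never changes an edge that is already directed. The skeleton claim therefore follows immediately: the skeleton of the final output equals the skeleton of the output of tsLDiffPC (resp.\ tsDCI), which equals $\mathrm{Skel}(\mathcal D)$ by the corresponding theorem. The same theorems give that every edge directed in the first phase belongs to $\mathbb E$. It remains to show that every orientation imported from tPC is also correct.

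For that step, take an undirected edge $X_{t-\ell} - Y_t$ in the intermediate output that tPC orients as $X_{t-\ell}\to Y_t$ in $\widehat{\mathcal G}^{\mathcal N}$. Under Assumptions~\ref{assumption:sufficiency}, \ref{assumption:stationarity} and \ref{assumption:faithfulness}, tPC is sound in the usual PC sense. It recovers $\mathrm{Skel}(\mathcal G^{\mathcal N})$, and each orientation it outputs (by temporal order, collider detection or Meek rules) agrees with $\mathcal G^{\mathcal N}$. Hence $X_{t-\ell}\to Y_t$ is an edge of $\mathcal G^{\mathcal N}$, i.e.\ $\alpha^{\mathcal N}_{X_{t-\ell},Y_t}\neq 0$. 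Because the skeleton of the intermediate output is correct, the pair is adjacent in $\mathcal D$, so exactly one of $X_{t-\ell}\to Y_t$ or $Y_t\to X_{t-\ell}$ lies in $\mathbb E$. Suppose $Y_t\to X_{t-\ell}\in\mathbb E$. Then $\alpha^{\mathcal N}_{Y_t,X_{t-\ell}}\neq\alpha^{\bar{\mathcal N}}_{Y_t,X_{t-\ell}}$, so at least one of these coefficients is nonzero. By Assumption~\ref{assumption:graph_in_one_regime_is_subgraph_of_another}, a nonzero anomalous coefficient implies a nonzero normal one, so $Y_t\to X_{t-\ell}$ would be in $\mathbb E^{\mathcal N}$. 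Together with $X_{t-\ell}\to Y_t$, this makes $\mathcal G^{\mathcal N}$ cyclic, which is a contradiction. Hence $X_{t-\ell}\to Y_t\in\mathbb E$. Assumption~\ref{assumption:order} gives the same conclusion by a second route: a difference edge requires a nonzero coefficient in some regime, so it must respect the common ordering $\pi$.

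The main point needing care is the soundness of tPC in the time-series window setting. This includes the fact that its temporal-order orientations are trivially correct, and that its collider and Meek orientations are valid under faithfulness and causal sufficiency restricted to the window $\mathbb V$. I would import this from the standard PC soundness argument~\cite{Spirtes_2000}, noting that the window construction (see Appendix) yields a DAG over $\mathbb V$ to which it applies. Assumption~\ref{assumption:graph_in_one_regime_is_subgraph_of_another} must also be read as covering nonzero coefficients, so that every difference edge appears in $\mathcal G^{\mathcal N}$. Combining these pieces gives $\widehat{\mathbb E}\subseteq\mathbb E$ for both choices of $\mathcal A$.
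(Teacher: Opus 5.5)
Your proof is correct and follows the paper's argument. The skeleton and the first-phase orientations come from Theorems~\ref{theorem:correctness_tsLDiffPC} and~\ref{theorem:correctness_tsdci}, and each imported tPC orientation $X_t\to Y_t$ is justified by ruling out $Y_t\to X_t\in\mathbb E$. The paper rules it out only through the common topological ordering (Assumption~\ref{assumption:order}), which is your ``second route''; your primary route, via Assumption~\ref{assumption:graph_in_one_regime_is_subgraph_of_another} and the acyclicity of $\mathcal G^{\mathcal N}$, is an equally valid alternative.
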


\begin{example}
%In Setting R, although tsLDiffPC cannot orient the edge in the FT-DFG, tsLDiffPC$^2$ leverages orientations learned from the normal regime. In particular, in the FT-DAG, the structure forms a collider with non-adjacent extremities, allowing the collider detection rule to be applied by tPC. This orientation is then transferred to the difference graph, enabling tsLDiffPC$^2$ to correctly orient the edge. In Setting RB, tsLDiffPC$^2$ yields the same orientations as tsLDiffPC. Similarly, tsDCIPC combines residual-variance-based orientation with orientations obtained from tPC. As a result, it is able to orient edges in both settings, even in cases where residual-variance conditions alone would be insufficient. The outputs of both algorithms are shown in Figure~\ref{fig:partially_oriented_difference_graph_example} (middle) for Setting R and in Figure~\ref{fig:partially_oriented_difference_graph_example} (right) for Setting RB.

In Setting R, tsLDiffPC cannot orient the edge in the FT-DFG. However, tsLDiffPC$^2$ leverages orientations learned by tPC in the normal regime. Since the corresponding FT-DAG contains a collider with non-adjacent extremities, tPC correctly orients the edge, and this orientation is subsequently transferred to the difference graph. In Setting RB, tsLDiffPC$^2$ produces the same output as tsLDiffPC. Similarly, tsDCIPC combines residual-variance-based orientations with orientations obtained from tPC. Consequently, it correctly orients the edge in both settings. The outputs of both algorithms are shown in Figure~\ref{fig:partially_oriented_difference_graph_example} (middle) for Setting R and in Figure~\ref{fig:partially_oriented_difference_graph_example} (right) for Setting RB.
\end{example}

%\subsection{The tsMBGH and tsiSCAN algorithm}

% Unlike tsLDiffPC and tsDCI, these adaptations remain heuristic, as the identifiability guarantees of MBGH \cite{Malik_UAI_2024} and iSCAN \cite{Chen_Neurips_2023} are established for i.i.d. data generated by static structural equation models and do not directly extend to the temporal setting. We therefore include tsMBGH and tsiSCAN as empirical baselines without theoretical guarantees.

\begin{table}[t]
\centering
\small
\begin{tabular}{lcccccc}
\toprule
\textbf{Assumption} & \textbf{tsLDiffPC} & \textbf{tsLDiffPC$^2$} & \textbf{tsDCI} & \textbf{tsDCIPC}   \\
\midrule
Linearity & \checkmark & \checkmark & \checkmark & \checkmark \\
Sufficiency (As.~\ref{assumption:sufficiency}) & \checkmark & \checkmark & \checkmark & \checkmark \\
Stationarity (As.~\ref{assumption:stationarity}) & \checkmark & \checkmark & \checkmark & \checkmark \\
Anomalies (As.~\ref{assumption:anomalies}) & \checkmark & \checkmark & \checkmark & \checkmark \\
Order (As.~\ref{assumption:order}) & \checkmark & \checkmark & \checkmark & \checkmark \\
D. a. faithful. (As.~\ref{assumption:diff_adjacency_faithfulness}) & \checkmark & \checkmark & \checkmark & \checkmark \\
D. o. faithful. (Ass.~\ref{assumption:diff_orient_faithfulness}) & \checkmark & \checkmark &  &  \\
D. v. o. faithful. (As.~\ref{assumption:diff_var_orient_faithfulness}) &  &  & \checkmark & \checkmark \\
Subgraph (As.~\ref{assumption:graph_in_one_regime_is_subgraph_of_another}) &  & \checkmark &  & \checkmark \\
Faithful. (As.~\ref{assumption:faithfulness}) &  & \checkmark &  & \checkmark \\
\bottomrule
\end{tabular}
\caption{Assumptions required by each algorithm for soundness.}
\label{tab:assumptions_algorithms}
\end{table}

\section{Difference graph discovery for detecting root causes}
\label{sec:Main_Root_causes}

% Under the assumptions of the previous section, every directed edge returned by tsLDiffPC or tsDCI corresponds to a true changed causal effect. As a consequence, any variable identified from these directed edges is a true effect-defying root cause. The algorithms may however fail to recover some root causes whenever changed edges remain unoriented. 

% \begin{corollary}
% Let $\mathcal{C}$ be the set of effect-defying root causes and let
% $\widehat{\mathcal{C}}$ be the set of root causes detected by an algorithm
% $$\mathcal{A}\in\{\textnormal{tsLDiffPC},\textnormal{tsLDiffPC}^2,\textnormal{tsDCI},\textnormal{tsDCIPC}\}.$$
% If the assumptions required by $\mathcal{A}$, as specified in
% Table~\ref{tab:assumptions_algorithms}, are satisfied, and if we are given
% perfect conditional equality information for all pairs of variables, then
% $\widehat{\mathcal{C}}\subseteq \mathcal{C}$.
% \end{corollary}

Up to this point, we have not directly addressed the detection of root causes. Once the FT-DFG has been estimated, root-cause detection follows naturally.

%Instead, we have focused on recovering a partially directed version of the DFG from time-series data. Once this graph has been estimated, root-cause detection follows naturally.

%Under the assumptions established in Table~\ref{tab:assumptions_algorithms}, every directed edge returned by an algorithm $\mathcal{A} \in \{\text{tsLDiffPC},\, \text{tsLDiffPC}^2,\, \text{tsDCI},\, \text{tsDCIPC}\}$ is a true changed edge in the underlying DFG. Consequently, any variable identified as a target of such a directed edge is a confirmed effect-defying root cause. However, some edges may remain undirected, thereby leaving certain root causes uncertain. When a changed edge $X_{t} - Y_t$ remains unoriented in the output, it is known that at least one of $X$ or $Y$ is a root cause, yet $\mathcal{A}$ cannot determine which one with certainty; we refer to $X$ and $Y$ in this case as \emph{potential root causes}.

Under the assumptions of Table~\ref{tab:assumptions_algorithms}, every directed edge returned by an algorithm
$\mathcal{A}$
corresponds to a true changed edge in the FT-DFG. Hence, any variable targeted by such an edge is a confirmed effect-defying root cause. In contrast, an undirected edge $X_t - Y_t$ only implies that at least one of $X$ or $Y$ is a root cause; we therefore refer to $(X,Y)$ as a pair of \emph{potential root causes}.

To formalize these recovery guarantees, define the set of certain root causes and the set of pairs of potential root causes as
$
\widehat{\mathcal{C}} := \bigl\{ Y : \exists\, X_{t-\ell} \in \mathbb{V},\ 
X_{t-\ell} \to Y_t\in \widehat{\mathcal D}\bigr\},
$ and
$
\widehat{\mathcal{C}}_P := \bigl\{ (X, Y) : 
X_{t} - Y_t \in \widehat{\mathcal D} \bigr\}.
$
Corollary~\ref{cor:recovery} establishes the recovery guarantees for both sets.

\begin{corollary}
\label{cor:recovery}
Let $\mathcal{C}$ denote the set of effect-defying root causes, and let 
$\widehat{\mathcal{C}}$ and $\widehat{\mathcal{C}}_P$ be as defined above, 
for an algorithm
$
\mathcal{A} \in \{\text{tsLDiffPC},\, \text{tsLDiffPC}^2,\, 
\text{tsDCI},\, \text{tsDCIPC}\}.
$
Suppose that the assumptions required by $\mathcal{A}$, as specified in 
Table~\ref{tab:assumptions_algorithms}, are satisfied, and we are given perfect conditional equality information for all pairs of variables. Then: $\widehat{\mathcal{C}} \subseteq \mathcal{C}$ and $\forall\, (X, Y) \in \widehat{\mathcal{C}}_P$,  $[X \in \mathcal{C}] \lor [Y \in \mathcal{C}]$.
%\begin{itemize}
%    \item $\widehat{\mathcal{C}} \subseteq \mathcal{C}$,
%    \item $\forall\, (X, Y) \in \widehat{\mathcal{C}}_P, \quad [X \in 
%    \mathcal{C}] \lor [Y \in \mathcal{C}]$,
%\end{itemize}
Furthermore among all the possible ways of selecting one variable from each pair in $\widehat{\mathcal{C}}_P$, there exists a selection $\widehat{\mathcal{C}}_S$ such that
$
\widehat{\mathcal{C}} \cup \widehat{\mathcal{C}}_S = \mathcal{C}.
$
\end{corollary}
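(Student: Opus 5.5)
The plan is to reduce everything to the soundness guarantees already established: Theorem~\ref{theorem:correctness_tsLDiffPC} for tsLDiffPC, Theorem~\ref{theorem:correctness_tsdci} for tsDCI, and Corollary~\ref{cor:pc_augmented} for tsLDiffPC$^2$ and tsDCIPC. Under the assumptions of Table~\ref{tab:assumptions_algorithms} and with perfect conditional equality information, each of these gives, for every $\mathcal{A}$, both $\mathrm{Skel}(\widehat{\mathcal D})=\mathrm{Skel}(\mathcal D)$ and $\widehat{\mathbb E}\subseteq\mathbb E$. The one extra ingredient is the link between edges of the FT-DFG and effect-defying root causes. By definition, $X_{t-\ell}\to Y_t\in\mathbb E$ iff $\alpha^{\mathcal N}_{X_{t-\ell},Y_t}\neq\alpha^{\bar{\mathcal N}}_{X_{t-\ell},Y_t}$. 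The definition of an effect-defying root cause requires $X_{t-\ell}\in\mathrm{Pa}(Y_t,\mathcal G^{\mathcal N})$, so I will first check that every changed coefficient involves a normal-regime parent.

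\textbf{Step 1 (edges of $\mathcal D$ are exactly root-cause witnesses).} Suppose $\alpha^{\mathcal N}_{X_{t-\ell},Y_t}\neq\alpha^{\bar{\mathcal N}}_{X_{t-\ell},Y_t}$. Then at least one of the two coefficients is nonzero. For the augmented algorithms, Assumption~\ref{assumption:graph_in_one_regime_is_subgraph_of_another} gives $\mathbb E^{\bar{\mathcal N}}\subseteq\mathbb E^{\mathcal N}$, so $\alpha^{\mathcal N}_{X_{t-\ell},Y_t}\neq 0$ and $X_{t-\ell}\in\mathrm{Pa}(Y_t,\mathcal G^{\mathcal N})$. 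For tsLDiffPC and tsDCI this assumption is not listed. There I would read the definition of effect-defying root cause as covering any changed coefficient entering $Y_t$, noting that if $\alpha^{\mathcal N}=0$ the edge counts as an added mechanism of $Y$. This interpretational point is the main obstacle, and I would state the reading explicitly. With it, $Y\in\mathcal C$ iff some $X_{t-\ell}\to Y_t\in\mathbb E$. By stationarity (Assumption~\ref{assumption:stationarity}) it suffices to work within the window $\mathbb V$.

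\textbf{Step 2 (the two inclusions).} If $Y\in\widehat{\mathcal C}$, there is $X_{t-\ell}\to Y_t\in\widehat{\mathbb E}\subseteq\mathbb E$, so $Y\in\mathcal C$ by Step~1. If $(X,Y)\in\widehat{\mathcal C}_P$, then $X_t-Y_t$ is an adjacency of $\widehat{\mathcal D}$. Skeleton equality gives that $X_t\to Y_t$ or $Y_t\to X_t$ lies in $\mathbb E$, so $Y\in\mathcal C$ or $X\in\mathcal C$. Lagged edges are always oriented by temporal order, so only instantaneous pairs appear in $\widehat{\mathcal C}_P$, which matches the definition.

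\textbf{Step 3 (existence of the selection).} For each pair $(X,Y)\in\widehat{\mathcal C}_P$, pick the head of the true edge between $X_t$ and $Y_t$ in $\mathcal D$. That edge is unique because $\mathcal D$ is acyclic, so it cannot contain both directions. Call the resulting set $\widehat{\mathcal C}_S$; by Step~1, $\widehat{\mathcal C}_S\subseteq\mathcal C$, so $\widehat{\mathcal C}\cup\widehat{\mathcal C}_S\subseteq\mathcal C$.

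For the reverse inclusion, take $Y\in\mathcal C$ with a witnessing edge $X_{t-\ell}\to Y_t\in\mathbb E$. By skeleton equality this adjacency appears in $\widehat{\mathcal D}$. It is either directed, in which case it must be $X_{t-\ell}\to Y_t$ because $\widehat{\mathbb E}\subseteq\mathbb E$ and $\mathcal D$ is acyclic, so $Y\in\widehat{\mathcal C}$. Or it is undirected, in which case $\ell=0$, the pair lies in $\widehat{\mathcal C}_P$, and the selection picks $Y$. Hence $\mathcal C\subseteq\widehat{\mathcal C}\cup\widehat{\mathcal C}_S$, completing the argument.
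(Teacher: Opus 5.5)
Your proof is correct and follows the paper's argument essentially step for step. You invoke the soundness results to get $\mathrm{Skel}(\widehat{\mathcal D})=\mathrm{Skel}(\mathcal D)$ and $\widehat{\mathbb E}\subseteq\mathbb E$, read off $\widehat{\mathcal C}\subseteq\mathcal C$ from directed edges, use skeleton equality for undirected pairs, and build $\widehat{\mathcal C}_S$ by selecting the true head of each undirected edge.

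Your Step~1 caveat is a genuine point that the paper's proof passes over by equating membership in $\mathcal C$ with being ``the target of a changed causal mechanism''. Under the literal definition, which requires $X_{t-\ell}\in\mathrm{Pa}(Y_t,\mathcal G^{\mathcal N})$, the statement can fail for tsLDiffPC and tsDCI when the only change is an edge present in the anomalous regime alone, e.g.\ $\alpha^{\mathcal N}_{X_t,Y_t}=0\neq\alpha^{\bar{\mathcal N}}_{X_t,Y_t}$ with invariant noise variances. So either your broadened reading or Assumption~\ref{assumption:graph_in_one_regime_is_subgraph_of_another} is really needed there.
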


%The first item of the corollary shows that, in theory, all detected root causes $\widehat{\mathcal{C}}$ are indeed true root causes, i.e., $\widehat{\mathcal{C}} \subseteq \mathcal{C}$. However, it does not guarantee that all true root causes are recovered, i.e., $\widehat{\mathcal{C}} = \mathcal{C}$. The reason is that the presence of an undirected edge in the inferred DFG indicates that at least one of the two endpoints is a root cause, but it is not possible, based on the graph alone, to determine which one. This is reflected in the second item of the corollary. Finally, and importantly, the corollary shows that the missing root causes, i.e., those not included in $\widehat{\mathcal{C}}$, can be identified within the set $\widehat{\mathcal{C}}_P$.
The corollary implies that every detected root cause is a true root cause, i.e., $\widehat{\mathcal{C}} \subseteq \mathcal{C}$. However, not all root causes are necessarily identified with certainty, since an undirected edge only indicates that only one of its endpoints is a root cause. The second item guarantees that all such unresolved root causes are contained in the set of potential root causes $\widehat{\mathcal{C}}_P$. 

\begin{example}  
In both settings R and RB, $\mathcal{C} = \{Y\}$. On the left panel, tsLDiffPC only identifies \textit{potential root causes} : $\widehat{\mathcal{C}} = \emptyset$ and $\widehat{\mathcal{C}}_P = \{(X,Y)\}$ meaning that $X$ or $Y$ is the root cause. In the middle panel tsLDiffPC$^2$ and tsDCIPC directly identifies the root cause, i.e. $\widehat{\mathcal C}_P=\emptyset$ so $\widehat{\mathcal C}=\{Y\}=\mathcal C$. For setting RB, all four methods correctly identify the root cause, i.e. $\widehat{\mathcal C}_P=\emptyset$ and $\widehat{\mathcal{C}} = \{Y\} = \mathcal{C}$ for tsLDiffPC, tsLDiffPC$^2$, and for tsDCIPC and tsDCI provided that $\sigma_Y^{\mathcal{N}} = \sigma_Y^{\bar{\mathcal{N}}}$.

\end{example}

\section{Experiments}
\label{sec:Experiments}

%In this section, we evaluate our algorithms on  simulated data as well as two real-world datasets. In particular, we compare the algorithms introduced in this paper with MicroCause and RCD. We also consider naive extensions of existing graph discovery algorithms, namely tsMBGH and tsiSCAN, derived from MBGH and iSCAN.  In addition, we include a simple baseline approach, denoted by \textnormal{tPC}$^2$, which consists of applying a causal discovery algorithm (tPC) separately to the normal and anomalous regimes, and identifying as root causes those nodes whose set of parents differs between the two regimes.

In this section, we evaluate the proposed algorithms on both simulated and real-world datasets.
%\footnote{The complete implementation of the proposed algorithm, together with all code required to reproduce the experiments, is available at \url{https://github.com/CIPHOD/pyCIPHOD/tree/main/reproducibility/ecmlpkddcaesar2026}.}
We compare tsLDiffPC, tsLDiffPC$^2$, tsDCI, and tsDCIPC against two root cause analysis methods, namely MicroCause and RCD. We also consider naive extensions of existing graph discovery algorithms, namely tsMBGH and tsiSCAN, derived from MBGH, iSCAN, and \textnormal{tPCUnion}, which estimates a temporal graph separately in each regime and compares direct effects of the normal and anomalous regimes that are identified using the union graph.
%This baseline applies tPC independently to the normal and anomalous regimes and identifies as root causes the variables whose estimated parent sets differ across the two regimes.

For tsLDiffPC, tsLDiffPC$^2$, tsDCI, and tsDCIPC, the skeletons are estimated using a linear regression coefficient equality test across regimes. Edge orientations in tsDCI and tsDCIPC are obtained through an $F$-test for equality of residual variances across regimes. The tPC procedure used by tsLDiffPC$^2$ and tsDCIPC employs Fisher's $Z$ test for conditional independence testing. All other methods are run with their default hyperparameters. All statistical tests are performed at significance level $\alpha=0.05$, and in all algorithms, $\ell_{max}$ is set to one. A sensitivity analysis with respect to the significance level is provided in Appendix. %~\ref{appendix:sensitivity_analysis}.
 
% Additonal experiments are provided in Appendix \textcolor{orange}{TODO}.

%\textcolor{red}{Practical considerations: All algorithms are run with a maximum lag of $1$. But in appendix, estimation, test, hyperprameters}

% \textcolor{red}{
% We also evaluate two heuristic temporal adaptations of existing difference-graph methods: tsMBGH and tsiSCAN. Both methods operate on a lag-augmented representation that encodes temporal dependencies within a finite lag window as a static graph. tsMBGH applies the original MBGH peeling procedure to this augmented representation while restricting terminal vertices to present-time variables. tsiSCAN identifies shifted variables on the augmented representation and reconstructs temporal difference edges from the estimated causal ordering. 
% }

% \textcolor{red}{DFG-by-CD: causal discovery in the normal regime and causal discovery in the anomalous regine and then compare the graphs}

% \textcolor{red}{EasyRCA$^*$}

% \textcolor{red}{MicroCause}

% \textcolor{red}{RCD}

\subsection{Simulated data}
\label{sec:Experiments_sim}

We evaluate the algorithms on simulated multivariate time-series data generated from random finite-lag causal graphs. First, a random FT-DAG is sampled with a predefined number of time-series ($d \in \{3,5,7, 9\}$), edge probability $0.3$ and $\ell_{\max}=1$. Edge coefficients are sampled uniformly from $[0.2,0.8]$ with random sign. To introduce anomalies, we consider three settings. In all cases, a target vertex is selected as an effect-defying root cause, and its incoming edge coefficients are modified while keeping the graph structure unchanged.
% \begin{itemize}
    % \item 
    In the first setting, only one incoming edge to the target vertex is modified; 
    % \item 
    in the second setting, all incoming edges of the target vertex are modified;
    % \item 
    in the third setting, all incoming edges to the target vertex are modified, and we additionally impose that the target vertex has at least two parents. This last setting allows us to focus on cases where tsLDiffPC has sufficient information to orient edges.
% \end{itemize}

Time series are then simulated recursively from the corresponding linear DT-DSCM with Gaussian noise. For each setting, we simulate 10 datasets. To evaluate the different algorithms, we use the F1-score defined with respect to the set of true root causes $\mathcal{C}$ and a set of predicted positives $\mathcal{P}$, \ie, $\text{F1}(\mathcal{C}, \mathcal{P}) = \frac{2 \, |\mathcal{C} \cap \mathcal{P}|}{|\mathcal{C}| + |\mathcal{P}|}, $ where two choices are considered for the set of predicted positives $\mathcal{P}$:
% \begin{itemize}
    % \item 
    $\mathcal{P} = \widehat{\mathcal{C}}$, where only confirmed root causes are considered; 
    % \item 
    $\mathcal{P} = \widehat{\mathcal{C}} \cup \widehat{\mathcal{C}}_P$ where both confirmed and potential root causes are considered. 
% \end{itemize}

Figure~\ref{fig:res_simulated_data} shows that, when only confirmed root causes are considered, \textnormal{tsDCI} and \textnormal{tsDCIPC} achieve the best overall performance in the first and second settings. In contrast, \textnormal{tsLDiffPC} often fails to orient the detected changed edges, although \textnormal{tsLDiffPC}$^2$ improves its performance. In the third setting, \textnormal{tsLDiffPC} and especially \textnormal{tsLDiffPC}$^2$ become competitive with the DCI-based methods. This setting is particularly favorable to \textnormal{tsLDiffPC}, as multiple changed parents are more likely to form unshielded colliders that facilitate edge orientation.
When potential root causes are also taken into account, the performance of \textnormal{tsLDiffPC} improves in all settings, confirming that its main limitation lies in the fact that it discovers a partially oriented graph.
Among the remaining methods, RCD is the most competitive baseline, whereas \textnormal{tsMBGH}, MicroCause, tPCUnion, and especially \textnormal{tsiSCAN} generally obtain lower F1-scores.

\input{plot_simulated_data}

\subsection{IT monitoring}
\label{sec:IT_monitoring}
We evaluate the algorithms on IT monitoring data, namely, on  the Ingestion dataset, introduced in
\cite{Assaad_AISTATS_2023}. The dataset was collected  with a one-minute sampling rate and contains eight time-series measured
over normal and anomalous operating periods. The time-series describe different
activities of the ingestion pipeline, including message preprocessing (PMDB), message routing (MDB), metric extraction (CMB), database insertion (MB), updates of recent metric values (LMB), merging with check-message information (RTMB), insertion of historical status information (GSIB), and writing data to Elasticsearch (ESB). Each value is computed by multiplying the number of messages processed by the corresponding component during a $10$-minute window by the average execution latency in that window, and then normalizing by the window length.
A visualization of the time series is provided in Appendix. %~\ref{appendix:real_data_viz}.

The dataset contains $100$ anomalous observations and $1000$ normal observations
preceding the anomalous period. It was highlighted in \cite{Assaad_AISTATS_2023} that according to system experts two time-series were identified as root causes of the anomalies. The first corresponds to PMDB; however, based on the expert knowledge discussed in \cite{Assaad_AISTATS_2023}, this time-series is not caused by any other observed time-series, making it unlikely to be an effect-defying root cause. The second corresponds to ESB, which is more plausibly an effect-defying root cause, as it was detected as such by EasyRCA using data and the available background knowledge.

%\textcolor{blue}{Timothée : je trouve bizarre d'utiliser l'output de EasyRCA comme ground truth pour la vraie root cause, on n'a pas de ground truth donnée par l'expert ?}

%We ran the algorithms with a maximum lag of $1$.

On this dataset, all tsLDiffPC-based and tsDCI-based algorithms inferred the same graph, shown in Appendix. %~\ref{appendix:real_data_graphs}.
From this graph we can deduce that all these algorithms detected the effect-defying root cause ESB, along with three false positives. 
%Interestingly, these results are comparable to those obtained with EasyRCA$^*$~\cite{Assaad_AISTATS_2023}. 
By contrast, tsMBGH did not detect any root cause, while tPCUnion and tsiSCAN identified all time-series as root causes (except RTMB for tsiSCAN). Among the other baseline methods, RCD identified LMB, MB and PMDB, whereas MicroCause detected CMB and PMDB. Overall, the tsLDiffPC-based and tsDCI-based methods are the only approaches that consistently recover ESB.

%ANouk MicroCause : check message bolt,message dispatcher bolt = CMB, PMDB}

%\textcolor{orange}{ANouk RCD : capacity last metric bolt = ESB}

%\textcolor{orange}{EasyRCA$^*$ : detected 5 root causes but among those labeled only RTMB and ESB as effect-defying root causes.}

% pre_Message_dispatcher_bolt (PMDB); check_message_bolt (CMB); 
% group_status_information_bolt (GSIB); message_dispatcher_bolt; (MDB) 
% metric_bolt (MB); 
% capacity_last_metric_bolt (LMB)
% capacity_elastic_search_bolt (ESB); 

\subsection{Intensive care monitoring}
\label{sec:Intensive_care_monitoring}

We illustrate the algorithms on a patient record from the MIMIC-IV Waveform Database \cite{mimic4}, a publicly available critical care database of high-resolution physiological signals from ICU patients. Nine variables are retained: mean arterial blood pressure (ABPm), mean pulmonary arterial pressure (PAPm), mean umbilical arterial pressure (UAPm), heart rate (HR), ST-segment deviation on leads V and III (ST-V, ST-III), central venous pressure (CVPm), respiratory rate (RR), and oxygen saturation (SpO2). Since the raw signals are irregularly sampled with missing values, all variables are first aligned onto a common time grid via linear interpolation. Regime change detection is performed on ST-V using the PELT algorithm \cite{killick2014changepoint} with an MBIC penalty, yielding a single changepoint. A symmetric window of $10\,000$ observations around this changepoint defines two temporal segments. For consistency with our notation, we refer to the earlier segment as the normal regime and to the later segment as the anomalous regime; these labels indicate temporal order only, not clinical status. A visualization of the time series is provided in Appendix. %~\ref{appendix:real_data_viz}.

%\textcolor{red}{yielding a single changepoint $t^*$. A symmetric window of $10\,000$ observations around $t^*$ defines a pre-change regime $\mathcal{R}_1 = \{t < t^*\}$ and a post-change regime $\mathcal{R}_2 = \{t \geq t^*\}$} (see Appendix~\ref{appendix:icu}). 

The estimated FT-DFGs are shown in Appendix. %~\ref{appendix:real_data_graphs}. 
The tsLDiffPC-based methods identify the same set of candidate root causes, namely RR, HR, PAPm, ABPm and CVPm. The tsDCI-based methods yield a similar result, but leave an ambiguity between PAPm and CVPm due to an unresolved edge orientation. The tsiSCAN algorithm identifies RR as the sole root cause. In contrast, tPCUnion detects all variables as root causes except SpO2. MicroCause identifies PAPm and CVPm as root causes, partially overlapping with the variables detected by the difference graph methods. And RCD identifies ST-V, which is not selected by any other method. Although no ground truth is available, there is an agreement among most algorithms on PAPm, CVP, and RR.

%\textcolor{orange}{MicroCause : PAPm, CVPm}
%\textcolor{orange}{RCD : ST.V}

%The estimated difference graphs are shown in Figure~\ref{fig:mimic}. tsLDiffPC$^2$ and tsDCI return the same three root causes: RR, HR, and PAPm, and additionally report an altered dependency between ST-V and ST-III; however, since this edge is undirected in the difference graph, it is not possible to determine which of the two variables drives the change. tsISCAN identifies RR alone, a solution nested within the former. 

%\textcolor{red}{à modifier car les res ont changé, refaire les graphs et text}

\section{Conclusion}
\label{sec:Discussion}
In this paper, we investigated how difference-graph discovery methods can be leveraged for root-cause analysis in time-series data. Our results demonstrate the potential of these methods for identifying structural changes associated with observed anomalies. The proposed approaches nevertheless rely on several assumptions regarding the underlying causal structure and data-generating processes. Moreover, their computational cost increases rapidly with both the number of variables and the maximum time lag considered, which may limit their applicability to large-scale systems.

\subsubsection*{Acknowledgment}
This work was supported by the CIPHOD project (ANR-23-CPJ1-0212-01).

%% file: plot_simulated_data.tex
\begin{figure}[t]
\centering
\begin{subfigure}{0.31\textwidth}
\centering
\begin{tikzpicture}
\begin{axis}[
    width=\textwidth,
    height=4cm,
    xlabel={Number of time-series},
    ylabel={F1 ($\mathcal{C}$, $\widehat{\mathcal{C}}$)},
    title={One parent},
    ymin=0, ymax=1,
legend style={
    at={(2.2,1.3)},
    anchor=south,
    legend columns=6,
    font=\scriptsize
}]
% -----------------------------
%%%% f1 incoming shifted one parent
% -----------------------------
\addplot+[
    thick,
    mark=*, red, mark options={fill=red, draw=red}, solid,
    error bars/.cd,
        y dir=both,
        y explicit
] coordinates {
    (3,0.17) +- (0,0.36)
    (5,0.0) +- (0,0.0)
    (7,0.03) +- (0,0.08)
    (9,0.0) +- (0,0.0)
};
\addlegendentry{tsLDiffPC}

\addplot+[
    thick,
    mark=*, orange, mark options={fill=orange, draw=orange}, solid,
    error bars/.cd,
        y dir=both,
        y explicit
] coordinates {
    (3.1,0.53) +- (0,0.39)
    (5.1,0.5) +- (0,0.45)
    (7.1,0.56) +- (0,0.36)
    (9.1,0.47) +- (0,0.42)
};
\addlegendentry{tsLDiffPC$^2$}

\addplot+[
    thick,
    mark=*, blue, mark options={fill=blue, draw=blue}, solid,
    error bars/.cd,
        y dir=both,
        y explicit
] coordinates {
    (3.2,0.83) +- (0,0.32)
    (5.2,0.6) +- (0,0.52)
    (7.2,0.73) +- (0,0.45)
    (9.2,0.8) +- (0,0.42)
};
\addlegendentry{tsDCI}

\addplot+[
    thick,
    mark=*, cyan, mark options={fill=cyan, draw=cyan}, solid,
    error bars/.cd,
        y dir=both,
        y explicit
] coordinates {
    (3.3,0.83) +- (0,0.32)
    (5.3,0.6) +- (0,0.52)
    (7.3,0.73) +- (0,0.45)
    (9.3,0.8) +- (0,0.42)
};
\addlegendentry{tsDCIPC}

\addplot+[
    thick,
    mark=*, olive, mark options={fill=olive, draw=olive},solid, 
    error bars/.cd,
        y dir=both,
        y explicit
] coordinates {
    (2.9,0.43) +- (0,0.47)
    (4.9,0.47) +- (0,0.42)
    (6.9,0.28) +- (0,0.39)
    (8.9,0.32) +- (0,0.29)
};
\addlegendentry{tsMBGH}

\addplot+[
    thick,
    mark=*, yellow, mark options={fill=yellow, draw=yellow}, solid,
    error bars/.cd,
        y dir=both,
        y explicit
] coordinates {
    (2.8,0.0) +- (0,0.0)
    (4.8,0.24) +- (0,0.43)
    (6.8,0.13) +- (0,0.32)
    (8.8,0.0) +- (0,0.0)
};
\addlegendentry{tsiSCAN}

\addplot+[
    thick,
    mark=*, teal, mark options={fill=teal, draw=teal}, solid,
    error bars/.cd,
        y dir=both,
        y explicit
] coordinates {
    (2.6,0.6) +- (0,0.21)
    (4.6,0.13) +- (0,0.28)
    (6.6,0.27) +- (0,0.34)
    (8.6,0.13) +- (0,0.28)
};
\addlegendentry{MicroCause}

\addplot+[
    thick,
    mark=*, pink, mark options={fill=pink, draw=pink}, solid,
    error bars/.cd,
        y dir=both,
        y explicit
] coordinates {
    (2.7,0.8) +- (0,0.42)
    (4.7,0.5) +- (0,0.53)
    (6.7,0.3) +- (0,0.48)
    (8.7,0.4) +- (0,0.52)
};
\addlegendentry{RCD}

\addplot+[
    thick,
    mark=*, gray, mark options={fill=gray, draw=gray}, solid,
    error bars/.cd,
        y dir=both,
        y explicit
] coordinates {
    (3.,0.63) +- (0,0.28)
    (5,0.39) +- (0,0.23)
    (7,0.21) +- (0,0.19)
    (9,0.31) +- (0,0.17)
};
\addlegendentry{tPCUnion}

\end{axis}
\end{tikzpicture}
% \caption{Setting 1}
\end{subfigure}
\hfill
\begin{subfigure}{0.31\textwidth}
\centering
\begin{tikzpicture}
\begin{axis}[
    width=\textwidth,
    height=4cm,
    xlabel={Number of time-series},
    title={All parents},
    ymin=0, ymax=1
]
% -----------------------------
%%%% f1 incoming shifted all parents
% -----------------------------
\addplot+[
    thick,
    mark=*, red, mark options={fill=red, draw=red}, solid,
    error bars/.cd,
        y dir=both,
        y explicit
] coordinates {
    (3,0.17) +- (0,0.37)
    (5,0.17) +- (0,0.37)
    (7,0.33) +- (0,0.47)
    (9,0.5) +- (0,0.53)
}; %tsLDiffPC

\addplot+[
    thick,
    mark=*, orange, mark options={fill=orange, draw=orange}, solid,
    error bars/.cd,
        y dir=both,
        y explicit
] coordinates {
    (3.1,0.53) +- (0,0.39)
    (5.1,0.57) +- (0,0.42)
    (7.1,0.79) +- (0,0.25)
    (9.1,0.9) +- (0,0.16)
};%tsLDiffPC2

\addplot+[
    thick,
    mark=*, blue, mark options={fill=blue, draw=blue}, solid,
    error bars/.cd,
        y dir=both,
        y explicit
] coordinates {
    (3.2,0.83) +- (0,0.32)
    (5.2,0.67) +- (0,0.47)
    (7.2,0.79) +- (0,0.37)
    (9.2,0.9) +- (0,0.32)
}; %tsDCI

\addplot+[
    thick,
    mark=*, cyan, mark options={fill=cyan, draw=cyan}, solid,
    error bars/.cd,
        y dir=both,
        y explicit
] coordinates {
    (3.3,0.83) +- (0,0.32)
    (5.3,0.67) +- (0,0.47)
    (7.3,0.79) +- (0,0.37)
    (9.3,0.9) +- (0,0.32)
};%tsDCIPC

\addplot+[
    thick,
    mark=*, olive, mark options={fill=olive, draw=olive}, solid,
    error bars/.cd,
        y dir=both,
        y explicit
] coordinates {
    (2.9,0.43) +- (0,0.47)
    (4.9,0.42) +- (0,0.41)
    (6.9,0.48) +- (0,0.45)
    (8.9,0.49) +- (0,0.39)
}; %tsMalik

\addplot+[
    thick,
    mark=*, yellow, mark options={fill=yellow, draw=yellow}, solid,
    error bars/.cd,
        y dir=both,
        y explicit
] coordinates {
    (2.8,0.0) +- (0,0.0)
    (4.8,0.24) +- (0,0.42)
    (6.8,0.03) +- (0,0.09)
    (8.8,0.0) +- (0,0.0)
}; %tsiSCAN

\addplot+[
    thick,
    mark=*, teal, mark options={fill=teal, draw=teal}, solid,
    error bars/.cd,
        y dir=both,
        y explicit
] coordinates {
    (2.6,0.6) +- (0,0.21)
    (4.6,0.2) +- (0,0.32)
    (6.6,0.4) +- (0,0.34)
    (8.6,0.13) +- (0,0.28)
}; %microcause

\addplot+[
    thick,
    mark=*, pink, mark options={fill=pink, draw=pink}, solid,
    error bars/.cd,
        y dir=both,
        y explicit
] coordinates {
    (2.7,0.8) +- (0,0.42)
    (4.7,0.5) +- (0,0.53)
    (6.7,0.5) +- (0,0.53)
    (8.7,0.7) +- (0,0.48)
}; %rcd

\addplot+[
    thick,
    mark=*, gray, mark options={fill=gray, draw=gray}, solid,
    error bars/.cd,
        y dir=both,
        y explicit
] coordinates {
    (3.,0.6) +- (0,0.25)
    (5,0.41) +- (0,0.24)
    (7,0.25) +- (0,0.21)
    (9,0.27) +- (0,0.16)
};%pcunion

\end{axis}
\end{tikzpicture}
% \caption{Setting 1}
\end{subfigure}
\hfill
\begin{subfigure}{0.31\textwidth}
\centering
\begin{tikzpicture}
\begin{axis}[
    width=\textwidth,
    height=4cm,
    xlabel={Number of time-series},
    title={At least 2 parents},
    ymin=0, ymax=1,
legend style={
    at={(1.2,1.3)},
    anchor=south,
    legend columns=3,
    font=\scriptsize
}]
% -----------------------------
%%%% f1 incoming shifted, all parents with minimum 2 parents
% -----------------------------

\addplot+[
    thick,
    mark=*, red, mark options={fill=red, draw=red}, solid,
    error bars/.cd,
        y dir=both,
        y explicit
] coordinates {
    (3,0.72) +- (0,0.42) 
    (5,0.77) +- (0,0.42) 
    (7,0.83) +- (0,0.37)
    (9,0.7) +- (0,0.48)
}; %tsLDiffPC

\addplot+[
    thick,
    mark=*, orange, mark options={fill=orange, draw=orange}, solid,
    error bars/.cd,
        y dir=both,
        y explicit
] coordinates {
    (3.1,0.83) +- (0,0.22) 
    (5.1,0.83) +- (0,0.32) 
    (7.1,0.89) +- (0,0.25)
    (9.1,0.83) +- (0,0.32)
};%tsLDiffPC2

\addplot+[
    thick,
    mark=*, blue, mark options={fill=blue, draw=blue}, solid,
    error bars/.cd,
        y dir=both,
        y explicit
] coordinates {
    (3.2,0.65) +- (0,0.28)
    (5.2,0.87) +- (0,0.32)
    (7.2,0.69) +- (0,0.44)
    (9.2,0.77) +- (0,0.42)
};%tsDCI

\addplot+[
    thick,
    mark=*, cyan, mark options={fill=cyan, draw=cyan}, solid,
    error bars/.cd,
        y dir=both,
        y explicit
] coordinates {
    (3.3,0.63) +- (0,0.28)
    (5.3,0.87) +- (0,0.32)
    (7.3,0.69) +- (0,0.44)
    (9.3,0.77) +- (0,0.42)
};%ok

\addplot+[
    thick,
    mark=*, olive, mark options={fill=olive, draw=olive}, solid,
    error bars/.cd,
        y dir=both,
        y explicit
] coordinates {
    (2.9,0.53) +- (0,0.39)
    (4.9,0.51) +- (0,0.42)
    (6.9,0.51) +- (0,0.35)
    (8.9,0.39) +- (0,0.37)
}; %malik

\addplot+[
    thick,
    mark=*, yellow, mark options={fill=yellow, draw=yellow}, solid, 
    error bars/.cd,
        y dir=both,
        y explicit
] coordinates {
    (2.8,0.13) +- (0,0.28)
    (4.8,0.0) +- (0,0.0)
    (6.8,0.03) +- (0,0.09)
    (8.8,0.1) +- (0,0.32)
};% iscan ok

\addplot+[
    thick,
    mark=*, teal, mark options={fill=teal, draw=teal}, solid,
    error bars/.cd,
        y dir=both,
        y explicit
] coordinates {
    (2.6,0.53) +- (0,0.28)
    (4.6,0.27) +- (0,0.34)
    (6.6,0.13) +- (0,0.28)
    (8.6,0.07) +- (0,0.22)
}; %microcause

\addplot+[
    thick,
    mark=*, pink, mark options={fill=pink, draw=pink}, solid,
    error bars/.cd,
        y dir=both,
        y explicit
] coordinates {
    (2.7,0.4) +- (0,0.52)
    (4.7,0.7) +- (0,0.48)
    (6.7,0.7) +- (0,0.48)
    (8.7,0.6) +- (0,0.52)
}; %rcd

\addplot+[
    thick,
    mark=*, gray, mark options={fill=gray, draw=gray}, solid,
    error bars/.cd,
        y dir=both,
        y explicit
] coordinates {
    (3.,0.47) +- (0,0.38)
    (5,0.43) +- (0,0.34)
    (7,0.29) +- (0,0.11)
    (9,0.3) +- (0,0.14)
};%pcunion

\end{axis}
\end{tikzpicture}

% \caption{Setting 2}
\end{subfigure}

\vspace{0.4cm}

% -----------------------------
% -----------------------------
% NODE RElAXED
% -----------------------------
% -----------------------------

\begin{subfigure}{0.31\textwidth}
\centering
\begin{tikzpicture}
\begin{axis}[
    width=\textwidth,
    height=4cm,
    xlabel={Number of time-series},
    ylabel={F1 ($\mathcal{C}$, $\widehat{\mathcal{C}}\cup\widehat{\mathcal{C}}_P$)},    % title={Setting 3},
    ymin=0, ymax=1
]
% -----------------------------
%%% f1 node relaxed one parent
% -----------------------------
\addplot+[
    thick,
    mark=*, red, mark options={fill=red, draw=red}, solid,
    error bars/.cd,
        y dir=both,
        y explicit
] coordinates {
    (3,0.63) +- (0,0.25)
    (5,0.64) +- (0,0.08)
    (7,0.63) +- (0,0.13)
    (9,0.67) +- (0,0.0)
};%tsLDiffPC

\addplot+[
    thick,
    mark=*, orange, mark options={fill=orange, draw=orange}, solid,
    error bars/.cd,
        y dir=both,
        y explicit
] coordinates {
    (3.1,0.67) +- (0,0.27)
    (5.1,0.74) +- (0,0.20)
    (7.1,0.63) +- (0,0.30)
    (9.1,0.73) +- (0,0.14)
}; %tsLDiffPC2

\addplot+[
    thick,
    mark=*, blue, mark options={fill=blue, draw=blue}, solid,
    error bars/.cd,
        y dir=both,
        y explicit
] coordinates {
    (3.2,0.83) +- (0,0.32)
    (5.2,0.64) +- (0,0.48)
    (7.2,0.73) +- (0,0.45)
    (9.2,0.8) +- (0,0.42)
}; %tsDCI

\addplot+[
    thick,
    mark=*, cyan, mark options={fill=cyan, draw=cyan}, solid,
    error bars/.cd,
        y dir=both,
        y explicit
] coordinates {
    (3.3,0.83) +- (0,0.32)
    (5.3,0.64) +- (0,0.48)
    (7.3,0.73) +- (0,0.45)
    (9.3,0.8) +- (0,0.42)
}; %tsDCIPC

\addplot+[
    thick,
    mark=*, olive, mark options={fill=olive, draw=olive}, solid,
    error bars/.cd,
        y dir=both,
        y explicit
] coordinates {
    (2.9,0.43) +- (0,0.47)
    (4.9,0.47) +- (0,0.42)
    (6.9,0.28) +- (0,0.39)
    (8.9,0.32) +- (0,0.29)
}; %tsMalik

\addplot+[
    thick,
    mark=*, yellow, mark options={fill=yellow, draw=yellow}, solid,
    error bars/.cd,
        y dir=both,
        y explicit
] coordinates {
    (2.8,0.0) +- (0,0.0)
    (4.8,0.24) +- (0,0.42)
    (6.8,0.13) +- (0,0.32)
    (8.8,0.0) +- (0,0.0)
};%tsiSCAN

\addplot+[
    thick,
    mark=*, teal, mark options={fill=teal, draw=teal}, solid,
    error bars/.cd,
        y dir=both,
        y explicit
] coordinates {
    (2.6,0.6) +- (0,0.21)
    (4.6,0.13) +- (0,0.28)
    (6.6,0.27) +- (0,0.34)
    (8.6,0.13) +- (0,0.28)
}; %microcause

\addplot+[
    thick,
    mark=*, pink, mark options={fill=pink, draw=pink}, solid,
    error bars/.cd,
        y dir=both,
        y explicit
] coordinates {
    (2.7,0.8) +- (0,0.42)
    (4.7,0.5) +- (0,0.53)
    (6.7,0.3) +- (0,0.48)
    (8.7,0.4) +- (0,0.52)
}; %rcd

\addplot+[
    thick,
    mark=*, gray, mark options={fill=gray, draw=gray}, solid,
    error bars/.cd,
        y dir=both,
        y explicit
] coordinates {
    (3.,0.63) +- (0,0.28)
     (5,0.39) +- (0,0.23)
    (7,0.21) +- (0,0.19)
    (9,0.31) +- (0,0.17)
};%pcunion

\end{axis}
\end{tikzpicture}
% \caption{Setting 3}
\end{subfigure}
\hfill
\begin{subfigure}{0.31\textwidth}
\centering
\begin{tikzpicture}
\begin{axis}[
    width=\textwidth,
    height=4cm,
    xlabel={Number of time-series},
    ymin=0, ymax=1
]
% -----------------------------
%%% f1 node relaxed all parents
% -----------------------------
\addplot+[
    thick,
    mark=*, red, mark options={fill=red, draw=red}, solid,
    error bars/.cd,
        y dir=both,
        y explicit
] coordinates {
    (3,0.63) +- (0,0.25)
    (5,0.67) +- (0,0.14)
    (7,0.73) +- (0,0.23)
    (9,0.83) +- (0,0.18)
}; %tsLDiffPC

\addplot+[
    thick,
    mark=*, orange, mark options={fill=orange, draw=orange}, solid,
    error bars/.cd,
        y dir=both,
        y explicit
] coordinates {
    (3.1,0.67) +- (0,0.27)
    (5.1,0.74) +- (0,0.20)
    (7.1,0.79) +- (0,0.25)
    (9.1,0.9) +- (0,0.16)
}; %tsLDiffPC2

\addplot+[
    thick,
    mark=*, blue, mark options={fill=blue, draw=blue}, solid,
    error bars/.cd,
        y dir=both,
        y explicit
] coordinates {
    (3.2,0.83) +- (0,0.32)
    (5.2,0.71) +- (0,0.42)
    (7.2,0.79) +- (0,0.37)
    (9.2,0.9) +- (0,0.32)
}; %tsDCI

\addplot+[
    thick,
    mark=*, cyan, mark options={fill=cyan, draw=cyan}, solid,
    error bars/.cd,
        y dir=both,
        y explicit
] coordinates {
    (3.3,0.83) +- (0,0.32)
    (5.3,0.71) +- (0,0.42)
    (7.3,0.79) +- (0,0.37)
    (9.3,0.9) +- (0,0.32)
}; %tsDCIPC

\addplot+[
    thick,
    mark=*, olive, mark options={fill=olive, draw=olive}, solid,
    error bars/.cd,
        y dir=both,
        y explicit
] coordinates {
    (2.9,0.43) +- (0,0.47)
    (4.9,0.42) +- (0,0.41)
    (6.9,0.48) +- (0,0.45)
    (8.9,0.49) +- (0,0.39)
}; %tsMalik

\addplot+[
    thick,
    mark=*, yellow, mark options={fill=yellow, draw=yellow}, solid, 
    error bars/.cd,
        y dir=both,
        y explicit
] coordinates {
    (2.8,0.0) +- (0,0.0)
    (4.8,0.24) +- (0,0.42)
    (6.8,0.03) +- (0,0.09)
    (8.8,0.0) +- (0,0.0)
}; %tsiSCAN

\addplot+[
    thick,
    mark=*, teal, mark options={fill=teal, draw=teal}, solid,
    error bars/.cd,
        y dir=both,
        y explicit
] coordinates {
    (2.6,0.6) +- (0,0.21)
    (4.6,0.2) +- (0,0.32)
    (6.6,0.4) +- (0,0.34)
    (8.6,0.13) +- (0,0.28)
}; %microcause

\addplot+[
    thick,
    mark=*, pink, mark options={fill=pink, draw=pink}, solid,
    error bars/.cd,
        y dir=both,
        y explicit
] coordinates {
    (2.7,0.8) +- (0,0.42)
    (4.7,0.5) +- (0,0.53)
    (6.7,0.5) +- (0,0.53)
    (8.7,0.7) +- (0,0.48)
}; %rcd

\addplot+[
    thick,
    mark=*, gray, mark options={fill=gray, draw=gray}, solid,
    error bars/.cd,
        y dir=both,
        y explicit
] coordinates {
    (3.,0.6) +- (0,0.251)
    (5,0.41) +- (0,0.237)
    (7,0.25) +- (0,0.21)
    (9,0.27) +- (0,0.16)
};%pcunion

\end{axis}
\end{tikzpicture}
% \caption{Setting 1}
\end{subfigure}
\hfill
\begin{subfigure}{0.31\textwidth}
\centering
\begin{tikzpicture}
\begin{axis}[
    width=\textwidth,
    height=4cm,
    xlabel={Number of time-series},
    ymin=0, ymax=1,
legend style={
    at={(1.2,1.3)},
    anchor=south,
    legend columns=3,
    font=\scriptsize
}]

% -----------------------------
%%%% node relaxed, all parents, with minimum 2 parents
% -----------------------------
\addplot+[
    thick,
    mark=*, red, mark options={fill=red, draw=red}, solid,
    error bars/.cd,
        y dir=both,
        y explicit
] coordinates {
    (3,0.82) +- (0,0.24)
    (5,0.9) +- (0,0.16)
    (7,0.89) +- (0,0.25)
    (9,0.83) +- (0,0.32)
}; %tslindiffpc

\addplot+[
    thick,
    mark=*, orange, mark options={fill=orange, draw=orange}, solid,
    error bars/.cd,
        y dir=both,
        y explicit
] coordinates {
    (3.1,0.83) +- (0,0.22)
    (5.1,0.88) +- (0,0.32)
    (7.1,0.89) +- (0,0.25)
    (9.1,0.83) +- (0,0.32)
}; %tslindiffpc2

\addplot+[
    thick,
    mark=*, blue, mark options={fill=blue, draw=blue}, solid,
    error bars/.cd,
        y dir=both,
        y explicit
] coordinates {
    (3.2,0.63) +- (0,0.28)
    (5.2,0.87) +- (0,0.32)
    (7.2,0.69) +- (0,0.44)
    (9.2,0.77) +- (0,0.42)
}; %tsdci 

\addplot+[
    thick,
    mark=*, cyan, mark options={fill=cyan, draw=cyan}, solid,
    error bars/.cd,
        y dir=both,
        y explicit
] coordinates {
    (3.3,0.63) +- (0,0.28)
    (5.3,0.87) +- (0,0.32)
    (7.3,0.69) +- (0,0.44)
    (9.3,0.77) +- (0,0.42)
};% tsdcipc

\addplot+[
    thick,
    mark=*, olive, mark options={fill=olive, draw=olive}, solid,
    error bars/.cd,
        y dir=both,
        y explicit
] coordinates {
    (2.9,0.53) +- (0,0.39)
    (4.9,0.51) +- (0,0.42)
    (6.9,0.51) +- (0,0.35)
    (8.9,0.39) +- (0,0.37)
}; %malik ok

\addplot+[
    thick,
    mark=*, yellow, mark options={fill=yellow, draw=yellow}, solid, 
    error bars/.cd,
        y dir=both,
        y explicit
] coordinates {
    (2.8,0.13) +- (0,0.28)
    (4.8,0.0) +- (0,0.0)
    (6.8,0.03) +- (0,0.09)
    (8.8,0.1) +- (0,0.32)
};% iscan ok

\addplot+[
    thick,
    mark=*, teal, mark options={fill=teal, draw=teal}, solid,
    error bars/.cd,
        y dir=both,
        y explicit
] coordinates {
    (2.6,0.53) +- (0,0.28)
    (4.6,0.27) +- (0,0.34)
    (6.6,0.13) +- (0,0.28)
    (8.6,0.07) +- (0,0.22)
}; %microcause

\addplot+[
    thick,
    mark=*, pink, mark options={fill=pink, draw=pink}, solid,
    error bars/.cd,
        y dir=both,
        y explicit
] coordinates {
    (2.7,0.4) +- (0,0.52)
    (4.7,0.7) +- (0,0.48)
    (6.7,0.7) +- (0,0.48)
    (8.7,0.6) +- (0,0.52)
}; %rcd

\addplot+[
    thick,
    mark=*, gray, mark options={fill=gray, draw=gray}, solid,
    error bars/.cd,
        y dir=both,
        y explicit
] coordinates {
    (3.,0.467) +- (0,0.375)
    (5,0.43) +- (0,0.342)
    (7,0.29) +- (0,0.11)
    (9,0.3) +- (0,0.14)
};%pcunion

\end{axis}
\end{tikzpicture}
% \caption{Setting 4}
\end{subfigure}

\caption{Mean F1-score for nine methods as a function of the number of vertices across three settings.}
\label{fig:res_simulated_data}
\end{figure}

%% file: appendix.tex
\section{Details on window graphs}
\label{appendix:window}

Given causal stationarity (Assumption~\ref{assumption:stationarity}) and the 
existence of a maximum lag $\ell_{\max}$, the (potentially infinite) FT-DAG can be compressed into a finite representation called a \emph{window 
graph}~\cite{Assaad_2022}. For each regime, it suffices to represent only the variables 
between time $t$ and time $t - \ell_{\max}$, since the FT-DAG of the 
regime is obtained by replicating this window in a block-wise manner throughout 
the entire regime.

Figure~\ref{fig:window} shows the window graph corresponding to the full-time 
DAG of Figure~\ref{fig:full_time_regime_change}. It contains only time steps 
$t$ and $t-1$, since $\ell_{\max} = 1$ in this case. Here, the window graph 
is identical across both regimes as the FT-DAG is the same, but it may 
in general differ between regimes.
\begin{figure}[h]
    \centering
    \begin{tikzpicture}[
        scale=1,
        transform shape,
        node/.style={
            circle,
            draw,
            minimum size=0.75cm,
            inner sep=1pt,
            font=\small
        },
        edge/.style={->, thick},
        changededge/.style={->, thick, red},
        changededgeb/.style={->, thick, blue},
        dashedline/.style={dashed, gray},
        regime/.style={dashed, red, thick}
    ]
    %------------------------
    % Nodes
    %------------------------
    \foreach \x/\s/\lab in {
        2.8/m1/{t-1},
        5.6/zero/{t}
    }{
        \node[node] (X\s) at (\x,3) {$X_{\lab}$};
        \node[node] (Y\s) at (\x,1.5) {$Y_{\lab}$};
        \node[node] (Z\s) at (\x,0) {$Z_{\lab}$};
    }
    %------------------------
    % Autoregressive edges
    %------------------------
    \foreach \a/\b in {m1/zero}{
        \draw[edge] (X\a) -- (X\b);
        \draw[edge] (Y\a) -- (Y\b);
        \draw[edge] (Z\a) -- (Z\b);
    }
    %------------------------
    % Cross-lagged edges
    %------------------------
    \foreach \a/\b in {m1/zero}{
        \draw[edge] (Y\a) -- (Z\b);
    }
    %------------------------
    % Contemporaneous edges
    %------------------------
    \foreach \s in {m1, zero}{
        \draw[edge, left=25] (Z\s) to (Y\s);
    }
    \foreach \s in {m1, zero}{
        \draw[edge, left=25] (X\s) to (Y\s);
    }
    \end{tikzpicture}
    \caption{window graph corresponding to the FT-DAG of 
    Figure~\ref{fig:full_time_regime_change}, with $\ell_{\max} = 1$. 
    Only time steps $t-1$ and $t$ are represented. In this case, the window 
    graph is identical across both regimes.}
    \label{fig:window}
\end{figure}

\section{Proofs}
\subsection{Diff-faithfulness vs diff-adjacency-faithfulness and diff-orientation-faithfulness}
\label{appendix:diff-faithfulnes}

This subsection assumes that the reader is familiar with the concepts introduced in~\cite{Bystrova_Arxiv_2026}.

% \begin{assumption}[Diff-faithfulness, \cite{Bystrova_Arxiv_2026}]
% Suppose  $\mathcal{G}^1$ and $\mathcal{G}^2$ the corresponding causal graphs and $\mathbb{P}^1$ and $\mathbb{P}^2$ the associate distributions. For every admissible ordered pair $(X_{t-\ell},Y_t)$ it is assumed that
% $ {\beta_{X_{t-\ell},Y_t\mid\mathbb{S}}}^{\mathcal{N}} ={\beta_{X_{t-\ell},Y_t\mid\mathbb{S}}}^{\mathcal{\bar N}} $
% if and only if 
% $\mathbb{S}$ \textit{diff}-separates\cite{Bystrova_Arxiv_2026} $X_{t-\ell}$ from $Y_t$.
% \end{assumption}

% \begin{assumption}[Symmetry of regression invariance~\cite{Bystrova_Arxiv_2026}]
% \label{assum:regression_invariance_symmetry}

% For every pair of distinct vertices $X_{t-\ell},Y_t$ and every
% $
% \mathbb Z\subseteq\mathbb V\setminus\{X_{t-\ell},Y_t\},
% $
% it is  assumed that
% ${\beta_{X_{t-\ell},Y_t\mid\mathbb Z}}^{\mathcal N}
% =
% {\beta_{X_{t-\ell},Y_t\mid\mathbb Z}}^{\bar{\mathcal N}}
% \quad\Longleftrightarrow\quad
% {\beta_{Y_t,X_{t-\ell}\mid\mathbb Z}}^{\mathcal N}
% =
% {\beta_{Y_t,X_{t-\ell}\mid\mathbb Z}}^{\bar{\mathcal N}}.
% $
% \end{assumption}

\begin{proposition}
\label{proposition:diff-faithfulness}
    Assumptions in \cite{Bystrova_Arxiv_2026} $\implies$ diff-adjacency-faithfulness $+$ diff-orientation-faithfulness but the reverse is not true.
\end{proposition}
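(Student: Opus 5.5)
The proof has two parts: a forward implication, and a counterexample showing the converse fails.

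\textbf{Forward direction.} The assumptions of \cite{Bystrova_Arxiv_2026} are diff-faithfulness and symmetry of regression invariance. Diff-faithfulness states that $\beta^{\mathcal N}_{X_{t-\ell},Y_t\mid\mathbb S}=\beta^{\bar{\mathcal N}}_{X_{t-\ell},Y_t\mid\mathbb S}$ if and only if $\mathbb S$ diff-separates $X_{t-\ell}$ from $Y_t$. I will use its contrapositive: whenever no diff-separation holds, the coefficients differ.

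For diff-adjacency-faithfulness, take an edge $X_{t-\ell}\to Y_t$ of $\mathcal D$. By the definition of diff-separation in \cite{Bystrova_Arxiv_2026}, the edge is itself a diff-connecting path that no conditioning set can block. Hence no $\mathbb S\subseteq\mathbb V\setminus\{X_{t-\ell},Y_t\}$ diff-separates the two endpoints, and diff-faithfulness gives $\beta^{\mathcal N}\neq\beta^{\bar{\mathcal N}}$ for every such $\mathbb S$. Symmetry of regression invariance covers the case where the algorithm regresses in the reverse direction, so the edge orientation does not matter.

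For diff-orientation-faithfulness, take an unshielded triple $X_{t-\ell}-Z_t-Y_t$ in $\mathcal D$ and split into two cases.
\begin{itemize}
\item If it is a collider $X_{t-\ell}\to Z_t\leftarrow Y_t$, then conditioning on any $\mathbb S\ni Z_t$ activates the path $X_{t-\ell}\to Z_t\leftarrow Y_t$. So $\mathbb S$ does not diff-separate the endpoints, and diff-faithfulness gives $\beta^{\mathcal N}\neq\beta^{\bar{\mathcal N}}$.
\item Otherwise $Z_t$ is a non-collider on the path $X_{t-\ell}-Z_t-Y_t$ of changed edges. Any $\mathbb S\not\ni Z_t$ leaves this path open, so again no diff-separation holds and the coefficients differ.
\end{itemize}
The main obstacle is checking that the diff-separation criterion of \cite{Bystrova_Arxiv_2026} really treats a path made only of changed edges in this d-separation-like way. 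That paper already states that the two conditions are jointly implied by its assumptions, so I would cite that lemma and only translate it to the window graph using Assumption~\ref{assumption:stationarity}.

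\textbf{The converse fails.} I will construct a two-regime linear model satisfying diff-adjacency-faithfulness and diff-orientation-faithfulness but violating diff-faithfulness (or symmetry). Diff-faithfulness is an ``if and only if'' over all pairs and all conditioning sets, while the two weaker assumptions only constrain adjacent pairs and the specific sets relevant to unshielded triples. So the counterexample should consist of a non-adjacent pair whose coefficient difference cancels in an unusual way.

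Concretely, take three variables at time $t$ with normal-regime structure $X_t\to Y_t\to W_t$, and let only the coefficient of $Y_t\to W_t$ change. Then $\mathcal D$ is the single edge $Y_t\to W_t$, and $X_t$ is diff-disconnected from $Y_t$ (isolated in $\mathcal D$). Now choose the noise variances across regimes so that, for some conditioning set (e.g.\ $\mathbb S=\{W_t\}$), the coefficient $\beta_{X_t,Y_t\mid\{W_t\}}$ differs between regimes, even though $\mathbb S$ diff-separates $X_t$ from $Y_t$. Such a difference typically arises because conditioning on a descendant of a changed edge shifts the regression. This violates the ``if'' direction of diff-faithfulness.

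The pair $(X_t,Y_t)$ is non-adjacent and forms no unshielded triple with a changed edge at $X_t$. Diff-adjacency-faithfulness is untouched because it concerns only the single edge $Y_t\to W_t$, and diff-orientation-faithfulness is vacuous because there is no unshielded triple in $\mathcal D$. I would verify the numeric example by explicit covariance computation. If this particular choice happens not to break diff-faithfulness, the fallback is to pick parameters making symmetry of regression invariance fail for $(Y_t,W_t)$ under a nonempty $\mathbb S$ while both directional coefficients remain unequal where the two weaker assumptions require it.
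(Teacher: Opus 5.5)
Your forward direction is fine and coincides with the paper, which simply cites \cite{Bystrova_Arxiv_2026}. The gap is in your main counterexample for the converse. It rests entirely on the claim that $\{W_t\}$ diff-separates $X_t$ from $Y_t$. You never check this against the definition, and there is good reason to doubt it.

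Write $a$ for $X_t\to Y_t$ and $b\neq b'$ for $Y_t\to W_t$. With unchanged noise, $\beta_{X_t,Y_t\mid\{W_t\}}=a\sigma_{W_t}^2/(\sigma_{W_t}^2+b^2\sigma_{Y_t}^2)$. This changes across regimes for essentially \emph{every} parameter value, so no tuning of noise variances is needed. It is a structural effect of conditioning on a child of $Y_t$ reached through a changed edge, not a coincidence. If diff-separation is, like d-separation, a criterion under which separation implies invariance, it must count $\{W_t\}$ as diff-connecting here. If instead admissible conditioning sets exclude descendants of $Y_t$, then $\{W_t\}$ is not even allowed. In either reading, the claimed violation of the ``if'' direction disappears.

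A faithfulness counterexample should be a non-generic coincidence, and that is what the paper builds. It uses only the safe fact that an active path containing a changed edge is diff-connecting, and then forces equality anyway by cancellation. In the diamond, the changed edges $X\to Y$ and $X\to Z$ carry $a,-a$ (resp.\ $a',-a'$), so $\beta_{X,W}=0$ in both regimes. In the chain $X\to Y\to Z\to W$, the changed coefficients $a,c$ are swapped, so $\beta_{X,W}=abc$ in both. Both examples break the ``only if'' direction.

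Your fallback, on the other hand, can be made rigorous and is a genuinely simpler route. It exploits the fact that the paper's diff-adjacency-faithfulness constrains only the cause-to-effect regression. Take the same chain with unchanged noise variances:
\begin{itemize}
\item For every $\mathbb S\subseteq\mathbb V\setminus\{Y_t,W_t\}$, $\beta_{Y_t,W_t\mid\mathbb S}$ equals $b$ in one regime and $b'$ in the other.
\item By contrast, $\beta_{W_t,Y_t\mid\{X_t\}}=b\sigma_{Y_t}^2/(\sigma_{W_t}^2+b^2\sigma_{Y_t}^2)$ is regime-invariant exactly when $bb'\sigma_{Y_t}^2=\sigma_{W_t}^2$. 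For example, $b=1$, $b'=1/2$, $\sigma_{Y_t}^2=2$, $\sigma_{W_t}^2=1$ gives $2/3$ in both regimes.
\end{itemize}
With this choice, diff-adjacency-faithfulness holds. Diff-orientation-faithfulness is vacuous, since no two edges of $\mathcal D$ share a vertex. Symmetry of regression invariance fails at $(Y_t,W_t,\{X_t\})$, with a conditioning set containing no descendants. So nothing about the fine print of diff-separation is needed.

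Relatedly, in your main chain $\beta_{Y_t,X_t\mid\{W_t\}}$ is regime-invariant, because $X_t\perp W_t\mid Y_t$. What that chain really exhibits is an asymmetry between the two regression directions, which is the concern of the symmetry assumption rather than of diff-faithfulness. Make the symmetry violation your main argument.
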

\begin{proof}
%The forward implications are proven in \cite{Bystrova_Arxiv_2026}. In the following we prove that the reverse is not true. Consider an underlying graph
%$$ X \rightarrow Y \rightarrow Z, $$
%and suppose that the only edge in the difference graph is
%$$ X \rightarrow Y. $$
%Then diff-adjacency-faithfulness only imposes constraints on the adjacent pair $(X,Y)$, requiring the corresponding regression coefficients to differ across environments for every admissible conditioning set. Moreover, diff-orientation-faithfulness imposes no additional constraint, since the difference graph contains no unshielded triple. However, for the non-adjacent pair $(X,Z)$, it is possible that
%$$ \beta^{\mathcal N}_{X,Z} = \beta^{\bar{\mathcal N}}_{X,Z}, $$
%even though $\emptyset$ does not diff-separate $X$ and $Z$, because the active path
%$$ X \rightarrow Y \rightarrow Z $$
%contains the changed edge $X\rightarrow Y$. Hence full diff-faithfulness fails.
% \textcolor{red}{Therefore,
% $$
% \text{diff-adjacency-faithfulness}
% +
% \text{diff-orientation-faithfulness}
% \implies
% \text{diff-faithfulness}.
% $$}
% \textcolor{blue}{Timothée : C'est peut etre un peu bizarre d'avoir ''therefore...'' alors que ce n'est pas ce qu'on vient de prouver dans le paragraphe précédent ?}

The forward implications are proven in \cite{Bystrova_Arxiv_2026}. In the following, we prove that the reverse is not true, through two examples. First consider the underlying graphs in Figure  ~\ref{fig:diamond-difference-graphs}, where $a \neq a'$.

\begin{figure}[htbp]
    \centering
    \begin{tikzpicture}[
        node/.style={
            circle,
            draw,
            minimum size=8mm,
            inner sep=0pt
        },
        causal/.style={
            -{Latex[length=2mm]},
            thick
        },
        edge label/.style={
            font=\small,
            fill=white,
            inner sep=1pt
        }
    ]

    % =========================================================
    % Graph (a): G^1
    % =========================================================
    \begin{scope}[xshift=0cm]
        \node[node] (X1) at (0,3) {$X$};
        \node[node] (Y1) at (-1,1.5) {$Y$};
        \node[node] (Z1) at (1,1.5) {$Z$};
        \node[node] (W1) at (0,0) {$W$};

        \draw[causal]
            (X1) -- node[edge label, left] {$a$} (Y1);

        \draw[causal]
            (X1) -- node[edge label, right] {$-a$} (Z1);

        \draw[causal]
            (Y1) -- node[edge label, left] {$b$} (W1);

        \draw[causal]
            (Z1) -- node[edge label, right] {$b$} (W1);

        \node at (0,-0.65) {$\mathcal{G}^{\mathcal{N}}$};
    \end{scope}

    % =========================================================
    % Graph (b): G^2
    % =========================================================
    \begin{scope}[xshift=4.2cm]
        \node[node] (X2) at (0,3) {$X$};
        \node[node] (Y2) at (-1,1.5) {$Y$};
        \node[node] (Z2) at (1,1.5) {$Z$};
        \node[node] (W2) at (0,0) {$W$};

        \draw[causal]
            (X2) -- node[edge label, left] {$a'$} (Y2);

        \draw[causal]
            (X2) -- node[edge label, right] {$-a'$} (Z2);

        \draw[causal]
            (Y2) -- node[edge label, left] {$b$} (W2);

        \draw[causal]
            (Z2) -- node[edge label, right] {$b$} (W2);

        \node at (0,-0.65) {$\mathcal{G}^{\bar{\mathcal{N}}}$};
    \end{scope}

    % =========================================================
    % Graph (c): difference graph
    % =========================================================
    \begin{scope}[xshift=8.4cm]
        \node[node] (XD) at (0,3) {$X$};
        \node[node] (YD) at (-1,1.5) {$Y$};
        \node[node] (ZD) at (1,1.5) {$Z$};
        \node[node] (WD) at (0,0) {$W$};

        \draw[causal] (XD) -- (YD);
        \draw[causal] (XD) -- (ZD);
        %\draw[causal] (YD) -- (WD);
        %\draw[causal] (ZD) -- (WD);

        \node at (0,-0.65) {$\mathcal{D}$};
    \end{scope}

    \end{tikzpicture}

    \caption{The two causal graphs and the corresponding difference graph such that $b,a,a'>0$, and $a\ne a'$.}
    \label{fig:diamond-difference-graphs}
\end{figure}

Diff-adjacency faithfulness is satisfied by construction of the graph and the conditions on the coefficients. The difference graph contains an unshielded non-collider, and for both conditioning sets that do not contain $X$, meaning $\emptyset$ and $\{W\}$, the coefficients differ across regimes because $a^2 \neq (a')^2$. Thus diff-orientation-faithfulness is also satisfied. 

However, in both regimes, the two causal effects of $X$ on $W$ cancel. Consequently $\beta_{X,W}^{\mathcal{N}} = \beta_{X,W}^{\bar{\mathcal{N}}} = 0$. 
Nevertheless, the paths $X\rightarrow Y\rightarrow W$ and $X\rightarrow Z\rightarrow W$ are active given the empty set and both  contain an edge of the difference graph. Hence, the empty set does not diff-separate $X$ from $W$~\cite{Bystrova_Arxiv_2026}. Therefore, the assumption of  diff-faithfulness~\cite{Bystrova_Arxiv_2026} is violated although diff-adjacency-faithfulness and diff-orientation-faithfulness are satisfied.

Now let us consider a second example, 
$$ X \xrightarrow{a} Y \xrightarrow{b} Z \xrightarrow{c} W$$
for the normal regime,
$$ X \xrightarrow{c} Y \xrightarrow{b} Z \xrightarrow{a} W$$
for the anomalous regime and the resulting difference graph $ X \rightarrow Y$, $Z \rightarrow W$ where $a,b,c>0$ and $a\ne c$.
Diff-orientation-faithfulness is automatically satisfied, as the 2 edges of the difference graph are disjoint. Diff-adjacency-faithfulness is also satisfied, the coefficient associated to both changed edges are different across regimes for every admissible conditioning set.

But diff-faithfulness~\cite{Bystrova_Arxiv_2026} fails. As the total effect of X on W in the normal and anomalous regime is $\beta_{X,W}^{\mathcal{N}} = \beta_{X,W}^{\bar{\mathcal{N}}}  = abc$ but the empty set does not diff-separate X and W, as the path $X \rightarrow Y \rightarrow Z \rightarrow W$ is active and has two edges in the difference graph.

\end{proof}

\subsection{Proofs for tsLDiffPC}
In the following we present the proof of Theorem~\ref{theorem:correctness_tsLDiffPC}.

\begin{proof}[Theorem~\ref{theorem:correctness_tsLDiffPC}] We start by proving the skeleton construction phase then we proceed to proving the orientation phase.
\paragraph{Skeleton.}
The algorithm starts from the complete graph and removes an edge $X_{t- \ell} - Y_t$, for $\ell \geq 0$, if and only if it finds a set $\mathbb{S}$ such that 
$
\beta^{\mathcal{N}}_{X_{t- \ell},Y_t\mid \mathbb{S}}
=
\beta^{\bar{\mathcal{N}}}_{X_{t- \ell},Y_t\mid \mathbb{S}}.
$

\paragraph{(1) Every true difference edge is preserved.} Suppose that $ X_{t- \ell} - Y_t \in \mathrm{Skel}(\mathcal D).$ %By interventional faithfulness, every true mechanism change induces a detectable change in the conditional regression coefficient \cite{Bystrova_2024,Bystrova_2026}. 
If  $ X_{t- \ell} - Y_t \in \mathrm{Skel}(\mathcal D)$ then by  Assumption~\ref{assumption:diff_adjacency_faithfulness} for any subset $\mathbb{S}\subseteq \mathbb{V}\backslash\{X_{t-\ell}, Y_t\}$, $
\beta^{\mathcal{N}}_{X_{t- \ell},Y_t\mid \mathbb{S}}
\neq
\beta^{\bar{\mathcal{N}}}_{X_{t- \ell},Y_t\mid \mathbb{S}}$. Thus, given perfect conditional equality information for the pair $X_{t- \ell}$,$Y_t$  tsLDiffPC would not remove an edge $ X_{t- \ell} - Y_t$. Thus,
$
\mathrm{Skel}(\mathcal D)
\subseteq
\mathrm{Skel}(\widehat{\mathcal D}).
$

%Therefore, for every conditioning set $S$, we have
%$ \beta^{\mathcal{N}}_{X_{t- \ell},Y_t\mid S}
%\neq\beta^{\bar{\mathcal{N}}}_{X_{t- \ell},Y_t\mid S}.$So,
%$X_{t- \ell} - Y_t \in \mathrm{Skel}(\widehat{\mathcal D}).$
%Thus,
%$\mathrm{Skel}(\mathcal D)
%\subseteq
%\mathrm{Skel}%(\widehat{\mathcal D}).
%$

%by adj faithfulness
\paragraph{(2) Every non-edge is removed.}  Suppose that $X_{t- \ell} - Y_t \notin \mathrm{Skel}(\mathcal D)$, so $X_{t- \ell}$ and $Y_t$ are not adjacent in $\mathcal D$. Thus, by Assumption~\ref{assumption:diff_adjacency_faithfulness}, there exists such set $\mathbb{S}\subseteq \mathbb{V}\backslash\{X_{t-\ell}, Y_t\}$, such that $
\beta^{\mathcal{N}}_{X_{t- \ell},Y_t\mid \mathbb{S}}
=
\beta^{\bar{\mathcal{N}}}_{X_{t- \ell},Y_t\mid \mathbb{S}}$. If such set $\mathbb{S}$ does not exist, then again, under Assumption~\ref{assumption:diff_adjacency_faithfulness}, $X_{t- \ell}$ and $Y_t$ are adjacent in $\mathcal D$
which contradicts that 
$X_{t- \ell} - Y_t \notin \mathrm{Skel}(\mathcal D)$. Thus, as  $
\beta^{\mathcal{N}}_{X_{t- \ell},Y_t\mid \mathbb{S}}
=
\beta^{\bar{\mathcal{N}}}_{X_{t- \ell},Y_t\mid \mathbb{S}}$ and given perfect conditional equality information about the pair $X_{t- \ell}$,$Y_t$, tsLDiffPC would remove an edge  $X_{t- \ell} - Y_t$ in 
$\mathrm{Skel}(\widehat{\mathcal D})$. Thus, $
\mathrm{Skel}(\widehat{\mathcal D})
\subseteq
\mathrm{Skel}({\mathcal D}).
$

%\paragraph{(2) Every non-edge is removed.}  Second, suppose that $X_{t- \ell} - Y_t \notin \mathrm{Skel}(\mathcal D)$, so $X_{t- \ell}$ and $Y_t$ are not adjacent in $\mathcal D$. We have that if $X_{t- \ell}$ and $Y_t$ are not adjacent then \textcolor{orange}{adj faithfulness}
%\textcolor{red}{by causal markov condition $\beta^{\mathcal{N}}_{X_{t- \ell},Y_t\mid S} = \beta^{\bar{\mathcal{N}}}_{X_{t- \ell},Y_t\mid S}$ for $S \subseteq Parents(X_{t- \ell}, \mathcal D)$ or $S \subseteq Parents(Y_t, \mathcal D)$}.

%Since every edge in the true graph is in the output of the algorithm  (as shown in (1)), then $Parents(X_{t- \ell}, \mathcal D) \subseteq \mathrm{Adj}_{\widehat{\mathcal D}}(X_{t- \ell})$ and $Parents(Y_t, \mathcal D) \subseteq \mathrm{Adj}_{\widehat{\mathcal D}}(Y_t)$. \textcolor{red}{Hence, $\beta^{\mathcal{N}}_{X_{t- \ell},Y_t\mid S} = \beta^{\bar{\mathcal{N}}}_{X_{t- \ell},Y_t\mid S}$ such that $ S \subseteq \mathrm{Adj}_{\widehat{\mathcal D}}(X_{t- \ell})$ or $ S \subseteq \mathrm{Adj}_{\widehat{\mathcal D}}(Y_t)$}.
%\textcolor{blue}{Then, there exists $S\subseteq \mathrm{Adj}_{\widehat{\mathcal D}}(X_{t- \ell})\cup \mathrm{Adj}_{\widehat{\mathcal D}}(Y_t)$ such that $\beta^{\mathcal{N}}_{X_{t- \ell},Y_t\mid S} = \beta^{\bar{\mathcal{N}}}_{X_{t- \ell},Y_t\mid S}$.} Hence $X_{t- \ell}$ and $Y_t$ are not adjacent in  $\widehat{\mathcal D}$, i.e. $ X_{t- \ell} - Y_t \notin \mathrm{Skel}(\widehat{\mathcal D})$, and $ \mathrm{Skel}(\widehat{\mathcal D}) \subseteq \mathrm{Skel}(\mathcal D)$.

\paragraph{Orientations.}  Let us now prove the correctness of the orientations.
For the edges $X_{t - \ell} - Y_t$ where $\ell > 0$, the orientation is defined using temporal order $X_{t - \ell} \rightarrow Y_t$. 

% For the edges $X_{t - \ell} - Y_t$ where $\ell > 0$, the orientation is defined using temporal order $X_{t - \ell} \rightarrow Y_t$ . 
% Thus,  all the V-structures $X_{t - \ell} \rightarrow Y_t \leftarrow X_{t - \ell'}$, where $\ell, \ell' > 0$, are  correclty oriented in $\widehat{\mathcal D}$ using temporal order. 

Next, we prove that tsLDiffPC correctly orients all unshielded colliders. Let us consider the following triple  $X_{t - \ell} \rightarrow Z_t \leftarrow Y_{t}$ with $\ell \geq 0$ in $\mathcal{D}$, where $X_{t - \ell}$ and $Y_t$ are not adjacent. As the skeleton of $\widehat{\mathcal{D}}$ coincides with the skeleton of  ${\mathcal{D}}$, then  $X_{t - \ell}$ and $Y_t$ are not adjacent, and pairs $X_{t - \ell}$, $Z_t$ and $Y_t$, $Z_t$ are  adjacent in $\widehat{\mathcal{D}}$. As $X_{t - \ell}$ and $Y_t$ are not adjacent in  ${\mathcal{D}}$, there exists set $\mathbb{S}$ such that $\beta^{\mathcal{N}}_{X_{t- \ell},Y_t\mid \mathbb{S}} = \beta^{\bar{\mathcal{N}}}_{X_{t- \ell},Y_t\mid \mathbb{S}}$. By Assumption~\ref{assumption:diff_orient_faithfulness} the set $\mathbb{S}$ does not contain $Z_t$. Thus,  $X_{t - \ell} \rightarrow Z_t \leftarrow Y_{t}$ is oriented as an unshielded collider in $\widehat{\mathcal{D}}$ by tsLDiffPC.
 
Once all unshielded colliders and all edges that can be oriented using temporal order have been oriented, the Meek orientation rules\cite{Meek1995CausalIA} are applied, as in LDiffPC. The correctness of the application of Meek rules in $\widehat{\mathcal{D}}$  follows from Lemma 3 in \cite{Bystrova_Arxiv_2026}. 

%The correctness of the orientation of the meek rules in the temporal setting are proved in \textcolor{red}{quel citation ?}

%\textcolor{red}{Proof of orientations}
\end{proof}

\subsection{Proofs for tsDCI}

The proof follows the same structure as the proof of Theorem~4.4  in~\cite{Wang_Neurips_2018}, and relies on two lemmas which we now state and prove. The first lemma characterizes the nodes whose internal noise variance is invariant across regimes.

\begin{lemma} [temporal analogue of C.6 ~\cite{Wang_Neurips_2018}]
\label{lemma:temp_C6}
For any node $Y_t$ incident to at least one edge in $\mathcal D$,
\[
\sigma^{\mathcal N}_{Y_t}
=
\sigma^{\bar{\mathcal N}}_{Y_t}
\quad\Longleftrightarrow\quad
\exists \mathbb{S}\subseteq\mathbb V\setminus\{Y_t\}
\text{ such that }
\sigma^{\mathcal N}_{Y_t\mid \mathbb{S}}
=
\sigma^{\bar{\mathcal N}}_{Y_t\mid \mathbb{S}}.
\]
\end{lemma}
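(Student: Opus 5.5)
The proof naturally splits into the two directions of the equivalence. The ($\Rightarrow$) direction should be a direct construction via the parent set. The ($\Leftarrow$) direction should be the contrapositive of the second bullet of Assumption~\ref{assumption:diff_var_orient_faithfulness}.

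\emph{($\Rightarrow$).} Assume $\sigma^{\mathcal N}_{Y_t}=\sigma^{\bar{\mathcal N}}_{Y_t}$. The natural witness is the parent set in the normal regime. Set $\mathbb S:=\mathrm{Pa}(Y_t,\mathcal G^{\mathcal N})$; by Assumption~\ref{assumption:graph_in_one_regime_is_subgraph_of_another} it contains $\mathrm{Pa}(Y_t,\mathcal G^{\bar{\mathcal N}})$. If that subgraph assumption is unavailable, I would instead take the union of the parent sets in the two regimes.

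By Assumption~\ref{assumption:order}, every element of $\mathbb S$ precedes $Y_t$ in a common topological ordering $\pi$ of both graphs. By causal sufficiency (Assumption~\ref{assumption:sufficiency}) and linearity, $\varepsilon_{Y_t}$ is then independent of, or at least uncorrelated with, all variables in $\mathbb S$ in each regime. This holds because each variable of $\mathbb S$ is a linear function of the noises of vertices preceding $Y_t$ in $\pi$, together with the noises of variables outside the window, which are earlier in time. Hence, in each regime, the population regression of $Y_t$ on $\mathbb S$ returns exactly the structural coefficients, with zeros for non-parents in that regime. Its residual is $\varepsilon_{Y_t}$, so $\sigma^{\mathcal N}_{Y_t\mid\mathbb S}=\sigma^{\mathcal N}_{Y_t}=\sigma^{\bar{\mathcal N}}_{Y_t}=\sigma^{\bar{\mathcal N}}_{Y_t\mid\mathbb S}$.

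\emph{($\Leftarrow$).} Suppose $\sigma^{\mathcal N}_{Y_t}\neq\sigma^{\bar{\mathcal N}}_{Y_t}$. The second bullet of Assumption~\ref{assumption:diff_var_orient_faithfulness} gives $\sigma^{\mathcal N}_{Y_t\mid\mathbb S}\neq\sigma^{\bar{\mathcal N}}_{Y_t\mid\mathbb S}$ for every $\mathbb S\subseteq\mathbb V\setminus\{Y_t\}$, so no witness set exists. The hypothesis that $Y_t$ is incident to an edge of $\mathcal D$ is not needed for this direction. I would remark that it is used downstream, in the orientation argument, where the first bullet is invoked.

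\emph{Main obstacle.} The delicate step is justifying, in the time-series setting, that $\varepsilon_{Y_t}$ is uncorrelated with $\mathbb S$ when $\mathbb S$ is restricted to the window $\mathbb V$ of size $\ell_{\max}+1$. Lagged parents $X_{t-\ell}$ depend on the entire infinite past, so the argument has to go through the FT-DAG rather than the window alone. The key point is that $\varepsilon_{Y_t}$ is independent of all noises at earlier times and of contemporaneous noises of $\pi$-predecessors. I would make this explicit by writing each variable via its moving-average expansion in past and present noises, which exists under stationarity within each regime. One should also note that the regime boundary is handled by considering only time points fully inside a regime, since the window of $\ell_{\max}$ lags must lie in a single regime.
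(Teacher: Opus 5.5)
Your proposal is correct and follows the paper's proof. The paper uses exactly your fallback witness $\mathbb{S}=\mathrm{Pa}(Y_t,\mathcal{G}^{\mathcal N})\cup\mathrm{Pa}(Y_t,\mathcal{G}^{\bar{\mathcal N}})$, shows that the regression residual in each regime is the structural noise, and obtains ($\Leftarrow$) as the contrapositive of the second bullet of Assumption~\ref{assumption:diff_var_orient_faithfulness}. Since Assumption~\ref{assumption:graph_in_one_regime_is_subgraph_of_another} is not among the hypotheses of Theorem~\ref{theorem:correctness_tsdci}, the union should be your primary choice rather than the fallback; your explicit appeal to Assumption~\ref{assumption:order}, to make the noise orthogonal to parents taken from the other regime, is a step the paper leaves implicit.
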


\begin{proof}
    Proving the ''$\Leftarrow$'' direction comes directly from the Var-Orientation Faithfulness assumption (Assumption~\ref{assumption:diff_var_orient_faithfulness}). 
    % Proving the ''$\Rightarrow$'' suppose $\sigma^{\mathcal N}_{Y_t} = \sigma^{\bar{\mathcal N}}_{Y_t}.$ Let $\mathbb{S} = Pa(Y_t, \mathcal{N}) \cup Pa(Y_t,\mathcal{\bar{N}})$.
    % Under the Assumption~\ref{assumption:stationarity} since  $X_{t-\ell} \in \mathbb V$ for every $0\leq\ell \leq \ell_{max}$, we have $\mathbb{S} \subseteq \mathbb V \setminus\{Y_t\}$. And so markov  property of the DT-DSCM gives :
    % $$
    % Y_t
    % =
    % \sum_{X_{t-\ell}\in Pa(Y_t,\mathcal{N})}
    % \alpha^{\mathcal{N}}_{{X_{t-\ell}},Y_t} X_{t-\ell}
    % +
    % \varepsilon^{\mathcal{N}}_{Y_t},
    % $$
    % The residuals of the regression of $Y_t$ on $X_{\mathbb{S}}$ gives $\varepsilon^{\mathcal{N}}_{Y_t}$,  as $Pa(Y_t,\mathcal{N}) \subseteq \mathbb{S}$. And so $(\sigma_{Y_t | \mathbb{S}}^{\mathcal{N}})^2 = \mathrm{Var}(\varepsilon^{\mathcal{N}}_{Y_t}) = (\sigma_{Y_t}^{\mathcal{N}})^2$. The same reasoning for $\mathcal{\bar{N}}$. So we have that $\sigma_{Y_t | \mathbb{S}}^{\mathcal{N}} = \sigma_{Y_t | \mathbb{S}}^{\mathcal{\bar{N}}}$. 

To prove the ''$\Rightarrow$'', suppose $\sigma^{\mathcal{N}}_{Y_t} = \sigma^{\bar{\mathcal{N}}}_{Y_t}$
and let $\mathbb{S} = Pa(Y_t, \mathcal{N}) \cup Pa(Y_t, \mathcal{\bar{N}})$.
By Assumption~\ref{assumption:stationarity}, $\mathbb{S} \subseteq \mathbb{V} \setminus \{Y_t\}$.
By definition of the DT-DSCM:
\begin{align*}
    Y_t^{\mathcal{N}}
    &= \sum_{X_{t-\ell}\in Pa(Y_t,\mathcal{N})} \alpha^{\mathcal{N}}_{X_{t-\ell},Y_t} X_{t-\ell}
    + \varepsilon^{\mathcal{N}}_{Y_t} \\
    &= \sum_{X_{t-\ell}\in Pa(Y_t,\mathcal{N})} \alpha^{\mathcal{N}}_{X_{t-\ell},Y_t} X_{t-\ell}
    + \sum_{X_{t-\ell}\in \mathbb{S} \setminus Pa(Y_t,\mathcal{N})}
    \underbrace{\alpha^{\mathcal{N}}_{X_{t-\ell},Y_t}}_{=0} X_{t-\ell}
    + \varepsilon^{\mathcal{N}}_{Y_t} \\
    &= \sum_{X_{t-\ell}\in \mathbb{S}} \alpha^{\mathcal{N}}_{X_{t-\ell},Y_t} X_{t-\ell}
    + \varepsilon^{\mathcal{N}}_{Y_t},
\end{align*}
so the residuals of the regression of $Y_t^{\mathcal{N}}$ on $\mathbb{S}$ are $\varepsilon^{\mathcal{N}}_{Y_t}$,
and $\sigma_{Y_t \mid \mathbb{S}}^{\mathcal{N}} = \sqrt{\mathrm{Var}(\varepsilon^{\mathcal{N}}_{Y_t})}
= \sigma_{Y_t}^{\mathcal{N}}$. The same holds for $\mathcal{\bar{N}}$:
\begin{align*}
    Y_t^{\mathcal{\bar{N}}}
    &= \sum_{X_{t-\ell}\in Pa(Y_t,\mathcal{\bar{N}})} \alpha^{\mathcal{\bar{N}}}_{X_{t-\ell},Y_t} X_{t-\ell}
    + \varepsilon^{\mathcal{\bar{N}}}_{Y_t} \\
    &= \sum_{X_{t-\ell}\in Pa(Y_t,\mathcal{\bar{N}})} \alpha^{\mathcal{\bar{N}}}_{X_{t-\ell},Y_t} X_{t-\ell}
    + \sum_{X_{t-\ell}\in \mathbb{S} \setminus Pa(Y_t,\mathcal{\bar{N}})}
    \underbrace{\alpha^{\mathcal{\bar{N}}}_{X_{t-\ell},Y_t}}_{=0} X_{t-\ell}
    + \varepsilon^{\mathcal{\bar{N}}}_{Y_t} \\
    &= \sum_{X_{t-\ell}\in \mathbb{S}} \alpha^{\mathcal{\bar{N}}}_{X_{t-\ell},Y_t} X_{t-\ell}
    + \varepsilon^{\mathcal{\bar{N}}}_{Y_t},
\end{align*}
so $\sigma_{Y_t \mid \mathbb{S}}^{\mathcal{\bar{N}}} = \sqrt{\mathrm{Var}(\varepsilon^{\mathcal{\bar{N}}}_{Y_t})}
= \sigma_{Y_t}^{\mathcal{\bar{N}}}$. Since $\sigma^{\mathcal{N}}_{Y_t} = \sigma^{\bar{\mathcal{N}}}_{Y_t}$
by assumption, we conclude $\sigma_{Y_t \mid \mathbb{S}}^{\mathcal{N}} = \sigma_{Y_t \mid \mathbb{S}}^{\mathcal{\bar{N}}}$.
\end{proof}

The second lemma shows that the conditioning set $\mathbb{S}$ witnessing the variance invariance of $Y_t$ correctly encodes the edge orientations adjacent to $Y_t$ in $\widehat{\mathcal{D}}$.

\begin{lemma}[temporal analogue of C.7 ~\cite{Wang_Neurips_2018}]
\label{lemma:temp_C7}
    For every instantaneous edge $X_t - Y_t \in \mathcal{D}$ such that
    $
    \sigma^{\mathcal N}_{Y_t}\
    =
    \sigma^{\bar{\mathcal N}}_{Y_t}.
    $ it holds that,
    \begin{itemize}
        \item if $X_t \rightarrow Y_t \in \mathcal{D}$, then $X_t \in \mathbb{S}$ for all $\mathbb{S}$ s.t. 
        $
        \sigma^{\mathcal N}_{Y_t\mid \mathbb{S}}
        =
        \sigma^{\bar{\mathcal N}}_{Y_t\mid \mathbb{S}}.
        $
        \item if $Y_t \rightarrow X_t \in \mathcal{D}$, then $X_t \notin \mathbb{S}$ for all $\mathbb{S}$ s.t. 
        $
        \sigma^{\mathcal N}_{Y_t\mid \mathbb{S}}
        =
        \sigma^{\bar{\mathcal N}}_{Y_t\mid \mathbb{S}}.
        $
    \end{itemize}
\end{lemma}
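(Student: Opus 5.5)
Both items will be proved by contradiction from the first bullet of the Var-orientation-faithfulness assumption (Assumption~\ref{assumption:diff_var_orient_faithfulness}). The only other ingredient is the definition of $\mathcal D$: an edge $X_t\to Y_t$ in $\mathcal D$ means exactly that $\alpha^{\mathcal N}_{X_t,Y_t}\neq\alpha^{\bar{\mathcal N}}_{X_t,Y_t}$. Throughout, fix an instantaneous edge of $\mathcal D$ between $X_t$ and $Y_t$ with $\sigma^{\mathcal N}_{Y_t}=\sigma^{\bar{\mathcal N}}_{Y_t}$. By Lemma~\ref{lemma:temp_C6}, at least one witnessing set $\mathbb S\subseteq\mathbb V\setminus\{Y_t\}$ with $\sigma^{\mathcal N}_{Y_t\mid\mathbb S}=\sigma^{\bar{\mathcal N}}_{Y_t\mid\mathbb S}$ exists, so neither statement is vacuous.

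\emph{First item.} Suppose $X_t\to Y_t\in\mathcal D$, so $\alpha^{\mathcal N}_{X_t,Y_t}\neq\alpha^{\bar{\mathcal N}}_{X_t,Y_t}$, and assume a witnessing set $\mathbb S$ has $X_t\notin\mathbb S$. The first bullet of the assumption states that $\sigma^{\mathcal N}_{Y_t\mid\mathbb S}\neq\sigma^{\bar{\mathcal N}}_{Y_t\mid\mathbb S}$ for every $\mathbb S\subseteq\mathbb V\setminus\{Y_t\}$. Read literally, this contradicts Lemma~\ref{lemma:temp_C6} outright. The intended reading, which is that of Assumption~4.2 in~\cite{Wang_Neurips_2018}, restricts the quantifier to sets not containing the changed parent, i.e.\ $\mathbb S\subseteq\mathbb V\setminus\{X_t,Y_t\}$. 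I will make this restriction explicit at the start of the proof and note that it is the faithful temporal transcription. Under it, our $\mathbb S$ satisfies $\sigma^{\mathcal N}_{Y_t\mid\mathbb S}\neq\sigma^{\bar{\mathcal N}}_{Y_t\mid\mathbb S}$, a contradiction. Hence $X_t\in\mathbb S$.

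\emph{Second item.} Suppose $Y_t\to X_t\in\mathcal D$, so $\alpha^{\mathcal N}_{Y_t,X_t}\neq\alpha^{\bar{\mathcal N}}_{Y_t,X_t}$, and assume a witnessing set $\mathbb S$ has $X_t\in\mathbb S$. Write $\mathbb S=\mathbb S'\cup\{X_t\}$. Apply the first bullet with $Y_t$ as the cause and $X_t$ as the effect. Its second clause states that $\sigma^{\mathcal N}_{Y_t\mid\mathbb S'\cup\{X_t\}}\neq\sigma^{\bar{\mathcal N}}_{Y_t\mid\mathbb S'\cup\{X_t\}}$ for every admissible $\mathbb S'$. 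This is exactly the negation of the witnessing property, giving the contradiction. Hence $X_t\notin\mathbb S$.

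\emph{Expected obstacle.} The main difficulty is not computational but the quantifier ranges in Assumption~\ref{assumption:diff_var_orient_faithfulness}. The first clause must exclude the changed parent. For the second clause, the indices must be matched correctly: in the assumption, the node conditioned on ``$\cup\{Y_t\}$'' is the parent and the node being regressed is the child. So the roles of $X_t$ and $Y_t$ have to be swapped carefully before invoking it. Once these readings are fixed, both items follow in one line each.
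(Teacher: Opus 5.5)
Your argument is correct and is essentially the paper's proof. Item 1 is a direct contradiction with the first clause of Assumption~\ref{assumption:diff_var_orient_faithfulness}, and item 2 applies that assumption with $Y_t$ as the changed parent and $\mathbb{S}'=\mathbb{S}\setminus\{X_t\}$, which contradicts $\sigma^{\mathcal N}_{Y_t\mid \mathbb{S}'\cup\{X_t\}}=\sigma^{\bar{\mathcal N}}_{Y_t\mid \mathbb{S}'\cup\{X_t\}}$.

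Your remark that the quantifier should range over $\mathbb{S}\subseteq\mathbb{V}\setminus\{X_t,Y_t\}$, as in Assumption~4.2 of \cite{Wang_Neurips_2018}, is a worthwhile clarification that the paper leaves implicit. Under the literal reading, item 1 holds only vacuously, and the assumption would clash with the paper's own Setting R example. Since your restricted reading is weaker than the literal one, your proof covers both.
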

\begin{proof}
    We prove both items by contradiction. 
    \paragraph{Item 1:} Suppose $X_t \rightarrow Y_t \in \mathcal{D}$, then $\alpha^{\mathcal{N}}_{X_t,Y_t} \neq \alpha^{\mathcal{\bar{N}}}_{X_t,Y_t}$. Suppose further that there exists a $\mathbb{S}$ with $X_t \notin \mathbb{S}$ such that $
        \sigma^{\mathcal N}_{Y_t\mid \mathbb{S}}
        =
        \sigma^{\bar{\mathcal N}}_{Y_t\mid \mathbb{S}}
        $. This directly contradicts the first part of Assumption~\ref{assumption:diff_var_orient_faithfulness}. 

     \paragraph{Item 2:} Suppose $Y_t \rightarrow X_t \in \mathcal{D}$, then $\alpha^{\mathcal{N}}_{Y_t,X_t} \neq \alpha^{\mathcal{\bar{N}}}_{Y_t,X_t}$. 
     %Let us suppose that there exists a set $\mathbb{S}$ with $X_t \in \mathbb{S}$ such that $\sigma^{\mathcal N}_{Y_t\mid \mathbb{S}}=\sigma^{\bar{\mathcal N}}_{Y_t\mid \mathbb{S}}$. 
     Since $\alpha^{\mathcal{N}}_{Y_t,X_t} \neq \alpha^{\mathcal{\bar{N}}}_{Y_t,X_t}$ then under the Assumption~\ref{assumption:diff_var_orient_faithfulness} for $\mathbb{S}' = \mathbb{S} \setminus \{ X_t \}$, $\mathbb{S}' \subseteq \mathbb{V}$,  $\sigma^{\mathcal N}_{Y_t\mid \mathbb{S}' \cup \{ X_t \}} \neq \sigma^{\bar{\mathcal N}}_{Y_t\mid \mathbb{S}' \cup \{ X_t \}}$, which contradicts the assumed equality.

     %This contradicts the first part of Assumption ~\ref{assumption:diff_var_orient_faithfulness}, applied with $\mathbb{S}' = \mathbb{S} \setminus \{ X_t \}$ which gives $\sigma^{\mathcal N}_{Y_t\mid \mathbb{S}' \cup \{ X_t \}}=\sigma^{\bar{\mathcal N}}_{Y_t\mid \mathbb{S}' \cup \{ X_t \}}$
\end{proof}

\begin{proof}[Theorem~\ref{theorem:correctness_tsdci}]
    The correctness of the skeleton construction follows directly from  Theorem~\ref{theorem:correctness_tsLDiffPC}.
    The orientation step of the tsDCI proceeds in  two stages. 
    First, for all the edges $X_{t - \ell} - Y_t$ where $\ell > 0$, the orientation is defined using temporal order $X_{t - \ell} \rightarrow Y_t$.
    Once all the lagged edges have been oriented, the remaining contemporaneous edges $X_t -Y_t$ are oriented  using the variance equality criterion introduced in DCI. We prove that these orientations are correct.

    By lemma ~\ref{lemma:temp_C6}, there exists $\mathbb{S}$ such that $\sigma^{\mathcal N}_{Y_t\mid \mathbb{S}}=\sigma^{\bar{\mathcal N}}_{Y_t\mid \mathbb{S}}$ if and only if $\sigma^{\mathcal N}_{Y_t}=\sigma^{\bar{\mathcal N}}_{Y_t}$. Therefore, all the nodes where the internal noise variance is unchanged will be choosen by tsDCI.
    In addition, it also follows from lemma ~\ref{lemma:temp_C7} that for any $X_t \rightarrow Y_t \in \mathcal{D}$, $X_t \in \mathbb{S}$ and $Y_t \rightarrow X_t \in \mathcal{D}$, $X_t \notin \mathbb{S}$.
    Consequently, whenever the internal noise variance of a node $X_t$ is invariant across regimes, tsDCI correctly orients all edges adjacent to $X_t$. It remains to show that all edges oriented in the last step are correct. This easely follows from the acyclic property of the underlying graphs.
\end{proof}

\subsection{Proofs for tsLDiffPC$^2$ and tsDCIPC}
\begin{proof} (Corrolary ~\ref{cor:pc_augmented})
    By the soundness of tsLDiffPC or tsDCI, the skeleton of the estimated difference graph is correct and every previously oriented edge is correctly oriented.
    It remains to prove that the additional orientations obtained from tPC are sound. Consider an edge $X_t-Y_t$ that remains undirected in $\widehat{\mathcal D}$ and suppose that tPC orients it as $X_t\to Y_t$ in $\widehat{\mathcal G}^{\mathcal N}$. By the correctness of tPC, this implies that $X_t$ precedes $Y_t$ in the causal order of $\mathcal G^{\mathcal N}$. Since the two regimes share the same topological ordering by Assumption~\ref{assumption:order}, the reverse orientation $Y_t\to X_t$ is not admissible in either regime. Hence it is also impossible in the true difference graph $\mathcal D$.
    Moreover, since the skeleton of $\widehat{\mathcal D}$ is correct, the edge $X_t-Y_t$ corresponds to a true changed edge in $\mathcal D$. Therefore, the only admissible orientation is $X_t\to Y_t$. Thus, all additional orientations introduced by tsLDiffPC$^2$ or tsDCIPC are sound, and the result follows.
\end{proof}

\subsection{Proof for effect-defying root cause detection}

%By soundness of tsLDiffPC (resp. tsLDiffPC$^2$, tsDCI, tsDCIPC), the estimated temporal difference graph has a correct skeleton, and every oriented edge returned by the algorithm is correctly oriented. A variable $X$ is detected as a root cause only if the estimated difference graph contains a directed changed edge into $X$. So this edge also belongs to the true temporal difference graph with the same orientation. Hence $X$ is an effect-defying root cause. Therefore every detected root cause is a true root cause, and $\widehat{\mathcal C} \subseteq \mathcal C .$

%For every potential root cause $(X,Y) \in \widehat{\mathcal{C}}_P$, either $X \rightarrow Y$ or $Y \rightarrow X$. So exactly one of the two is the root cause associated to this edge. 

\begin{proof} (Corrolary ~\ref{cor:recovery})
By the correctness guarantees of $\mathcal{A}$ under the assumptions of
Table~\ref{tab:assumptions_algorithms}, we have
$
\mathrm{Skel}(\widehat{\mathcal D})=\mathrm{Skel}(\mathcal D)
\quad\text{and}\quad
\widehat{\mathbb E}\subseteq \mathbb E .
$
Hence every directed edge returned by $\mathcal A$ is a true changed edge with the correct orientation.

Let $X\in\widehat{\mathcal C}$. Then there exists $Y$ such that $Y\to X$ is in the output of $\mathcal A$. Since this edge is correctly oriented, $Y\to X$ is also in the true difference graph $\mathcal D$. Therefore $X$ is the target of a changed causal mechanism, so $X\in\mathcal C$. Thus,
$\widehat{\mathcal C}\subseteq\mathcal C .$

Now let $(X,Y)\in\widehat{\mathcal C}_P$. Then $X-Y$ is an undirected edge in
the output. Since the skeleton is correct, this edge corresponds to a true
changed edge in $\mathcal D$. Therefore, in the true difference graph, either
$X\to Y$ or $Y\to X$. Hence either $Y$ or $X$ is the target of a changed
causal mechanism, and so
$
[X\in\mathcal C]\lor[Y\in\mathcal C].
$

Finally, because the skeleton is fully recovered, every true changed edge is
either returned as a directed edge or as an undirected edge. For each
undirected edge in $\widehat{\mathcal C}_P$, choose the endpoint that is the
target of the corresponding true directed edge in $\mathcal D$. Let
$\widehat{\mathcal C}_S$ be the set of selected endpoints. Then all root causes
not already contained in $\widehat{\mathcal C}$ are selected in
$\widehat{\mathcal C}_S$, while no non-root cause is selected. Therefore,
$
\widehat{\mathcal C}\cup\widehat{\mathcal C}_S=\mathcal C .
$
\end{proof}

\section{Pseudocodes}
\label{appendix:pseudocode}

This section presents the pseudocode of tsLDiffPC (Algorithm~\ref{alg:tsLDiffPC}), tsLDiffPC$^2$ (Algorithm~\ref{alg:tsLDiffPC2}), tsDCI (Algorithm~\ref{alg:tsdci}),  tsDCIPC (Algorithm~\ref{alg:tsdci2}) and tPCUnion (Algorithm~\ref{alg:tspc-union}).

In Algorithm~\ref{alg:tspc-union}, $\operatorname{PossPa}_{\hat{\mathcal G}^{U}}(Y_t)$ denotes the set of possible parents of $Y_t$ in the union graph, including both nodes with an incoming directed edge into $Y_t$ and nodes connected to $Y_t$ by an undirected edge.

\begin{algorithm}[H]
\caption{tsLDiffPC algorithm}
\label{alg:tsLDiffPC}
\begin{algorithmic}[1]
% \Function{tsLDiffPC}{$\mathbf{P}^{\mathcal{N}},\mathbf{P}^{\bar{\mathcal{N}}},\mathbb{V}$}

\State \textbf{Input} $\mathbf{P}^{\mathcal{N}},\mathbf{P}^{\bar{\mathcal{N}}},\mathbb{V}$

\State Initialize a fully connected undirected graph $\mathcal{\hat D}$ on $\mathbb{V}$.
\State Orient every lagged edge by temporal order:
if $X_{t-\ell}-Y_t$ with $\ell>0$, orient
$X_{t-\ell}\rightarrow Y_t$.
%\Repeat
 %   \For{\textcolor{orange}{each adjacent pair $X_{t-\ell},Y_t$ in $\mathcal{\hat D}$}}
  %      \For{each $\mathbb{S}\subseteq \mathbb{V}\setminus\{X_{t-\ell},Y_t\}$ such that $\mathbb{S} \notin Descendant(Y_t)$}
   %         \If{$\beta^{\mathcal{N}}_{X_{t-\ell},Y_t\mid \mathbb{S}}=\beta^{\bar{\mathcal{N}}}_{X_{t-\ell},Y_t\mid \mathbb{S}}$}
    %            \State Remove $X_{t-\ell}-Y_t$ from $\mathcal{\hat D}$.
    %            \State By stationarity remove all $X_{t-\ell-i}-Y_{t-i}$ from $\mathcal{\hat D}$
    %            \State $\mathbb{S}=Sep(X_{t-\ell},Y_t)=Sep(Y_t,X_{t-\ell})$.
    %            \State \textbf{break}
    %        \EndIf
    %    \EndFor
    %\EndFor
%\Until

\State $s\gets 0$

\While{$s\leq |\mathbb{V}|$}
    \For{each $Y_t$ in $\mathbb{V}$}
        \For{each $X_{t-\ell}\in Adj_{\hat{\mathcal{D}}}(Y_t)$}
            \State $\mathcal{P}\gets \mathbb{V}\setminus\{X_{t-\ell},Y_t\}$
            \For{each $\mathbb{S}\subseteq\mathcal{P}$ such that $|\mathbb{S}|=s$}
                \If{$\beta^{\mathcal{N}}_{X_{t-\ell},Y_t\mid \mathbb{S}}=\beta^{\bar{\mathcal{N}}}_{X_{t-\ell},Y_t\mid \mathbb{S}}$}
                    \State Remove $X_{t-\ell}-Y_t$ from $\mathcal{\hat D}$.
                    \State By stationarity remove all $X_{t-\ell-i}-Y_{t-i}$ from $\mathcal{\hat D}$
                    \State $\mathbb{S}=Sep(X_{t-\ell},Y_t)=Sep(Y_t,X_{t-\ell})$.
                    \State \textbf{break}
                \EndIf
            \EndFor
        \EndFor
    \EndFor
    \State $s\gets s+1$
\EndWhile

\State For each triple $X_{t-\ell}-Y_t-Z_t$,
if $X_{t-\ell}$ and $Z_t$ are nonadjacent and $Y_t\notin Sep(X_{t-\ell},Z_t)$ or $Y_t\notin Sep(Z_t,X_{t-\ell})$, orient $X_{t-\ell}\rightarrow Y_t\leftarrow Z_t$.

\State By stationarity orient all $X_{t-\ell-i}\rightarrow Y_{t-i} \leftarrow Z_{t-i}$

\Repeat
    \State Apply Meek rules, only when the resulting orientation respects temporal order.
\Until{no additional edge can be oriented}

\State \Return $\mathcal{\hat D}$.
% \EndFunction
\end{algorithmic}
\end{algorithm}

\begin{algorithm}[H]
\caption{tsDCI algorithm}
\label{alg:tsdci}
\begin{algorithmic}[1]
% \Function{tsDCI}{$\mathbf{P}^{(1)},\mathbf{P}^{(2)},\mathbb{V}$}
\State \textbf{Input} $\mathbf{P}^{\mathcal{N}},\mathbf{P}^{\bar{\mathcal{N}}},\mathbb{V}$
\State Initialize a fully connected undirected graph $\mathcal{\hat D}$ on $\mathbb{V}$.
\State Orient every lagged edge by temporal order:
if $X_{t-\ell}-Y_t$ with $\ell>0$, orient
$X_{t-\ell}\rightarrow Y_t$.

\For{each adjacent pair $X_{t-\ell},Y_t$ in $\mathcal{\hat D}$}
    \For{each $\mathbb{S}\subseteq \mathbb{V}\setminus\{X_{t-\ell},Y_t\}$ such that $\mathbb{S} \notin Descendant(Y_t)$}

        \If{$\beta^{\mathcal{N}}_{X_{t-\ell},Y_t\mid \mathbb{S}}=\beta^{\bar{\mathcal{N}}}_{X_{t-\ell},Y_t\mid \mathbb{S}}$ \textbf{or} $\beta^{\mathcal{N}}_{Y_t, X_{t-\ell}\mid \mathbb{S}}=\beta^{\bar{\mathcal{N}}}_{Y_t, X_{t-\ell}\mid \mathbb{S}}$}
            \State Remove $X_{t-\ell}-Y_t$ from $\mathcal{\hat D}$.
            \State By stationarity remove all $X_{t-\ell-i}-Y_{t-i}$ from $\mathcal{\hat D}$
            \State $\mathbb{S}=Sep(X_{t-\ell},Y_t)=Sep(Y_t,X_{t-\ell})$.
            \State \textbf{break}
        \EndIf
    \EndFor
\EndFor

\For{each remaining undirected edge $X_t-Y_t$}
    \For{ $\mathbb{S}\subseteq \mathbb{V}\setminus\{Y_t\}$ such that $\mathbb{S} \notin Descendant(Y_t)$}

        \If{$\sigma^{\mathcal{N}}_{Y_t\mid \mathbb{S}}=\sigma^{\bar{\mathcal{N}}}_{Y_t\mid \mathbb{S}}$}
            \If{$X_t\in \mathbb{S}$}
                \State Orient $X_t\rightarrow Y_t$.
                \State By stationarity orient all $X_{t-i}\rightarrow Y_{t-i}$
            \Else
                \State Orient $Y_t\rightarrow X_t$.
                \State By stationarity orient all $Y_{t-i}\rightarrow X_{t-i}$
            \EndIf
            \State \textbf{break}
        \EndIf

    \EndFor
\EndFor
\State Orient remaining undirected edges by graph traversal:
orient $X_{t-\ell}-Y_t$ as $X_{t-\ell} \rightarrow Y_t$ whenever there exists a directed path
$X_{t-\ell} \rightarrow Z_1 \rightarrow \cdots \rightarrow Z_m \rightarrow Y_t$.

\State \Return $\mathcal{\hat D}$.
% \EndFunction
\end{algorithmic}
\end{algorithm}

\begin{algorithm}[H]
\caption{tsLDiffPC$^2$ algorithm}
\label{alg:tsLDiffPC2}
\begin{algorithmic}[1]

\State \textbf{Input} $\mathbf{P}^{\mathcal{N}},\mathbf{P}^{\bar{\mathcal{N}}},\mathbb{V}$

\State Run tsLDiffPC.
\State Run tPC on $\mathbf{P}^{\mathcal{N}}$.
\For{each remaining undirected edge $X_t-Y_t$ in $\mathcal{\hat D}$} 
    \If{tPC orients $X_t\rightarrow Y_t$}
        \State Orient $X_t-Y_t$ as $X_t\rightarrow Y_t$ in $\mathcal{\hat D}$.
    \EndIf
\EndFor

\State \Return $\mathcal{\hat D}$.
% \EndFunction
\end{algorithmic}
\end{algorithm}

\begin{algorithm}[H]
\caption{tsDCIPC algorithm}
\label{alg:tsdci2}
\begin{algorithmic}[1]

\State \textbf{Input} $\mathbf{P}^{\mathcal{N}},\mathbf{P}^{\bar{\mathcal{N}}},\mathbb{V}$

\State Run tsDCI.
\State Run tPC on $\mathbf{P}^{\mathcal{N}}$.
\For{each remaining undirected edge $X_t-Y_t$ in $\mathcal{\hat D}$} 
    \If{tPC orients $X_t\rightarrow Y_t$}
        \State Orient $X_t-Y_t$ as $X_t\rightarrow Y_t$ in $\mathcal{\hat D}$.
    \EndIf
\EndFor

\State \Return $\mathcal{\hat D}$.
% \EndFunction
\end{algorithmic}
\end{algorithm}

\begin{algorithm}[H]
\caption{tPCUnion algorithm}
\label{alg:tspc-union}
\begin{algorithmic}[1]

\State \textbf{Input} $\mathbf{P}^{\mathcal{N}},
\mathbf{P}^{\bar{\mathcal{N}}},\mathbb{V}$

\State Estimate separately the temporal graphs $\hat{\mathcal G}^{\mathcal N}$ and
$\hat{\mathcal G}^{\bar{\mathcal N}}$ using tPC.

\State Initialize an empty union graph $\hat{\mathcal G}^{U}$ and an empty difference graph $\hat{\mathcal D}$.

\For{each pair $X_{t-\ell},Y_t$ adjacent in $\hat{\mathcal G}^{\mathcal N}$ or
$\hat{\mathcal G}^{\bar{\mathcal N}}$}
    \If{$X_{t-\ell}$ and $Y_t$ are adjacent in only one regime}
        \If{the orientation is $X_{t-\ell}\rightarrow Y_t$}
            \State Add $X_{t-\ell}\rightarrow Y_t$ to $\hat{\mathcal G}^{U}$ and $\hat{\mathcal D}$.
        \Else
            \State Add $X_t-Y_t$ to
            $\hat{\mathcal G}^{U}$ and $\hat{\mathcal D}$.
        \EndIf
    \Else
        \If{both regimes imply opposite orientations}
            \State Record an orientation conflict between $X_t$ and $Y_t$.
        \ElsIf{at least one regime orients $X_{t-\ell}\rightarrow Y_t$}
            \State Add $X_{t-\ell}\rightarrow Y_t$ to $\hat{\mathcal G}^{U}$.
        \Else
            \State Add $X_t-Y_t$ to $\hat{\mathcal G}^{U}$.
        \EndIf
    \EndIf
\EndFor

\For{each directed edge
$X_{t-\ell}\rightarrow Y_t$
in $\hat{\mathcal G}^{U}$ not already in $\hat{\mathcal D}$}

    \If{$Y_t$ is involved in an orientation conflict}
        \State \textbf{continue}
    \EndIf

    \State
    $\mathbb{S}\gets
    PossPa_{\hat{\mathcal G}^{U}}(Y_t)
    \setminus\{X_{t-\ell}\}$.

    \If{
    $\beta^{\mathcal N}_{X_{t-\ell},Y_t\mid\mathbb{S}}
    \neq
    \beta^{\bar{\mathcal N}}_{X_{t-\ell},Y_t\mid\mathbb{S}}$
    }
        \State Add $X_{t-\ell}\rightarrow Y_t$ to $\hat{\mathcal D}$.
    \EndIf
\EndFor

\For{each undirected edge $X_t-Y_t$
in $\hat{\mathcal G}^{U}$ not already in $\hat{\mathcal D}$}

    \State
    $\mathbb{S}_Y\gets
    PossPa_{\hat{\mathcal G}^{U}}(Y_t)\setminus\{X_t\}$.

    \State
    $\mathbb{S}_X\gets
    PossPa_{\hat{\mathcal G}^{U}}(X_t)\setminus\{Y_t\}$.

    \If{
    $\beta^{\mathcal N}_{X_t,Y_t\mid\mathbb{S}_Y}
    \neq
    \beta^{\bar{\mathcal N}}_{X_t,Y_t\mid\mathbb{S}_Y}$
    \textbf{ or }
    $\beta^{\mathcal N}_{Y_t,X_t\mid\mathbb{S}_X}
    \neq
    \beta^{\bar{\mathcal N}}_{Y_t,X_t\mid\mathbb{S}_X}$
    }
        \State Add $X_t-Y_t$ to $\hat{\mathcal D}$.
    \EndIf
\EndFor

\State \Return $\hat{\mathcal D}$.

\end{algorithmic}
\end{algorithm}

\section{Sensitivity analysis}
\label{appendix:sensitivity_analysis}

To assess the sensitivity of the results to the significance level, we repeat the simulations using $\alpha=0.01$ and $\alpha=0.1$. The corresponding results are reported in Figures~\ref{fig:res_sensitivity_res_alpha_0_01} and~\ref{fig:res_sensitivity_res_alpha_0_1}, and can be compared with the main results obtained for $\alpha=0.05$ in Figure~\ref{fig:res_simulated_data}. Across $\alpha\in \{0.01,0.05,0.1\}$, the relative ranking of the proposed methods remain largely unchanged.

\input{sensitivity_analysis_plot}

\newpage
\section{Real Data Visualization}
\label{appendix:real_data_viz}

This appendix presents a visualization of the real data used for the evaluation of the algorithms. Figure~\ref{fig:monitoring_regimes} shows the IT monitoring time series and Figure~\ref{fig:mimic_regimes} shows the ICU monitoring time series. For each figure, each time series corresponds to a variable, the normal regime is shown in blue and the abnormal regime is shown in red, and the dashed orange line represents the presumed regime change point.

\begin{figure}[h]
    \centering
    \includegraphics[width=1\linewidth]{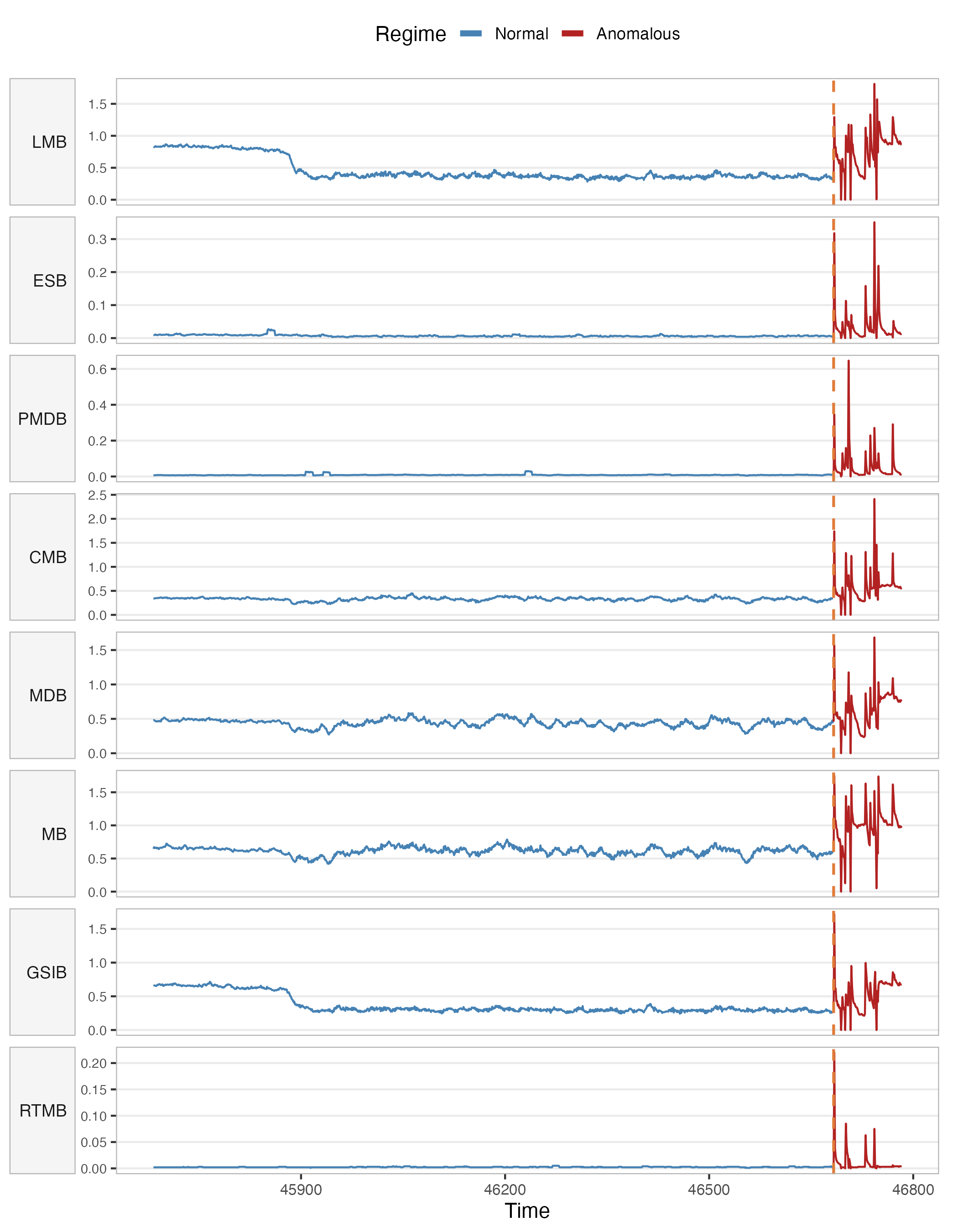}
    \caption{Time series of the eight bolt-level capacity metrics recorded in the IT monitoring system. The blue and red segments correspond to the \textit{Normal} and \textit{Anomalous} regimes respectively, as defined by the known changepoint at $t = 46683$ (dashed orange line). Each metric is displayed on its own scale.}
    \label{fig:monitoring_regimes}
\end{figure}

\label{appendix:icu}
\begin{figure}[h]
    \centering
    \includegraphics[width=1\linewidth]{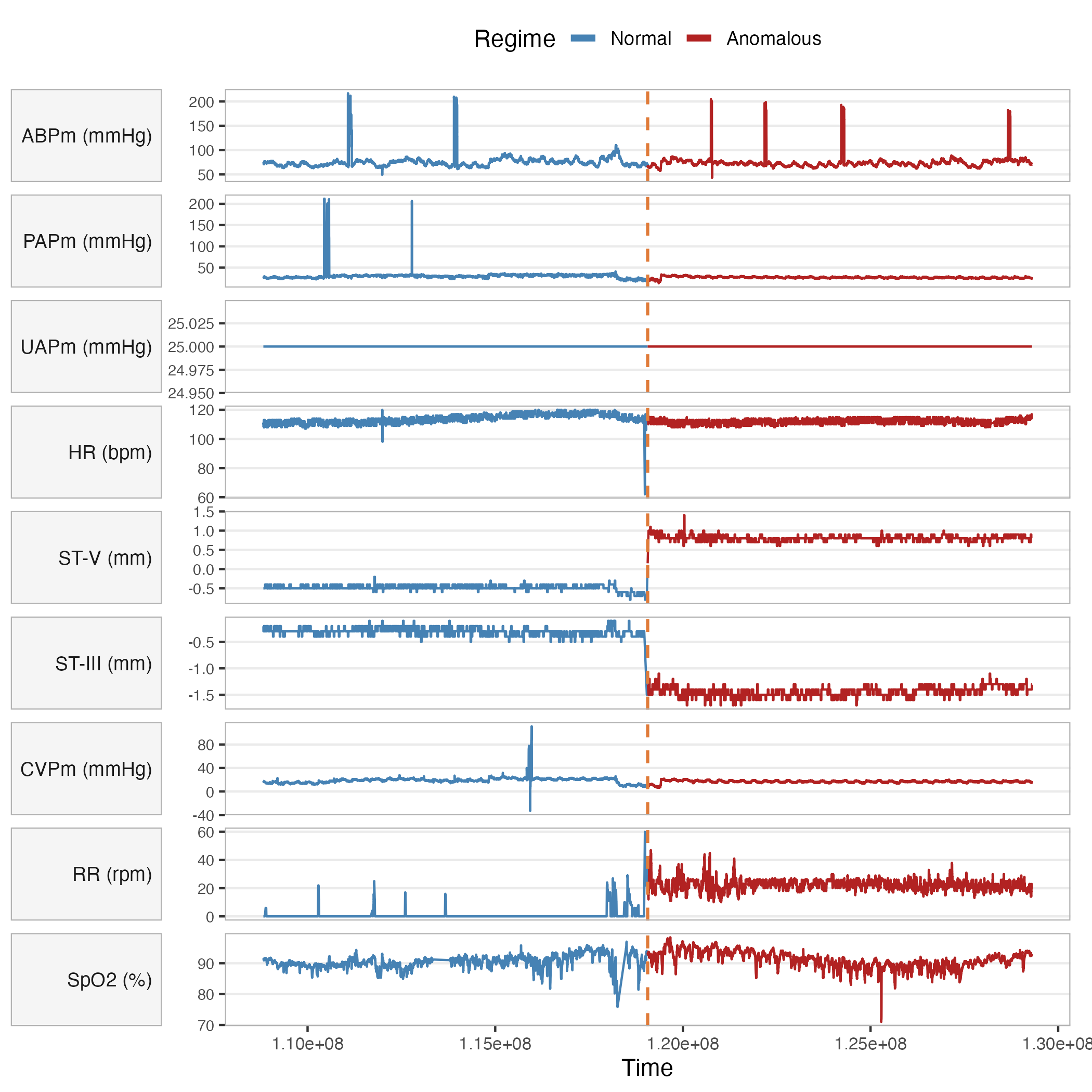}
    \caption{Time series of the nine physiological variables recorded for the selected MIMIC-IV patient. The blue and red segments correspond to the \textit{Normal} and \textit{Anomalous} regimes respectively, as defined by the changepoint detected on ST-V via PELT (dashed orange line). Each variable is displayed on its own scale.}
    \label{fig:mimic_regimes}
\end{figure}

\section{Inferred graph using real data}
\label{appendix:real_data_graphs}

We report here the FT-DFGs inferred by the different difference-graph-based methods on the two real-world applications. These graphical representations complement the results discussed in the main text and provide a direct comparison of the structural changes identified by each method. Figure~\ref{fig:it_graph} shows the graphs recovered from the IT monitoring data, while Figure~\ref{fig:mimic} reports those obtained from the MIMIC-IV patient data. Isolated vertices are omitted for readability.

\begin{figure}[t]
\centering
\begin{tikzpicture}[
    node/.style={rectangle, draw, minimum size=0.7cm, align=center, rounded corners},
    edge/.style={->, thick},
    undirected/.style={-, thick}
]

\node[node] (RRm) at (0,1) {$CMB_{t-1}$};
\node[node] (RR)  at (2,1) {$CMB_t$};

\node[node] (HRm) at (0,0) {$PMDB_{t-1}$};
\node[node] (HR)  at (2,0) {$PMDB_t$};

\node[node] (STIII) at (4,1) {$ESB_t$};
\node[node] (STV)   at (4,0) {$RTMB_t$};

\draw[edge] (RRm) -- (RR);
\draw[edge] (HRm) -- (HR);

\draw[undirected] (STIII) -- (STV);

\end{tikzpicture}
% \hfill 
% \vrule
% \hfill 
% \begin{tikzpicture}[
%     node/.style={rectangle, draw, minimum size=0.7cm, align=center, rounded corners},
%     edge/.style={->, thick},
%     undirected/.style={-, thick}
% ]

% \node[node] (RRm) at (0,1) {$CMB?$};

% \end{tikzpicture}
\caption{FT-DFG discovered by tsLDiffPC/tsLDiffPC$^2$/tsDCI/tsDCIPC on the IT monitoring data. Vertices that are not linked to anything are omitted. The  graph inferred by tsiSCAN and tPCUnion is not shown because it contains many edges, while the graph inferred by tsMBGH is not shown because it is an empty graph.}
\label{fig:it_graph}
\end{figure}

\begin{figure}[t]
\centering
\begin{tikzpicture}[
    node/.style={rectangle, draw, minimum size=0.4cm, align=center, rounded corners},
    edge/.style={->, thick},
    undirected/.style={-, thick}
]

\node[node] (CVPm)  at (0, 3) {$\mathrm{CVP}_{t-1}$};
\node[node] (RRm)   at (0, 1) {$\mathrm{RR}_{t-1}$};
\node[node] (HRm)   at (0, 0) {$\mathrm{HR}_{t-1}$};
\node[node] (ABPm)  at (0,-1) {$\mathrm{ABP}_{t-1}$};

\node[node] (CVP)  at (2, 3) {$\mathrm{CVP}_{t}$};
\node[node] (PAP)  at (2, 2) {$\mathrm{PAPm}_{t}$};
\node[node] (RR)   at (2, 1) {$\mathrm{RR}_{t}$};
\node[node] (HR)   at (2, 0) {$\mathrm{HR}_{t}$};
\node[node] (ABP)  at (2,-1) {$\mathrm{ABP}_{t}$};

\draw[edge] (CVPm) -- (CVP);
\draw[edge] (CVP) -- (PAP);
\draw[edge] (RRm)  -- (RR);
\draw[edge] (HRm)  -- (HR);
\draw[edge] (ABPm) -- (ABP);
\end{tikzpicture}
\hfill\vrule\hfill
\begin{tikzpicture}[
    node/.style={rectangle, draw, minimum size=0.4cm, align=center, rounded corners},
    edge/.style={->, thick},
    edge_2/.style={-, thick},
    undirected/.style={-, thick}
]

\node[node] (CVPm)  at (0, 3) {$\mathrm{CVP}_{t-1}$};
\node[node] (RRm)   at (0, 1) {$\mathrm{RR}_{t-1}$};
\node[node] (HRm)   at (0, 0) {$\mathrm{HR}_{t-1}$};
\node[node] (ABPm)  at (0,-1) {$\mathrm{ABP}_{t-1}$};

\node[node] (CVP)  at (2, 3) {$\mathrm{CVP}_{t}$};
\node[node] (PAP)  at (2, 2) {$\mathrm{PAPm}_{t}$};
\node[node] (RR)   at (2, 1) {$\mathrm{RR}_{t}$};
\node[node] (HR)   at (2, 0) {$\mathrm{HR}_{t}$};
\node[node] (ABP)  at (2,-1) {$\mathrm{ABP}_{t}$};

\draw[edge] (CVPm) -- (CVP);
\draw[edge_2] (CVP) -- (PAP);
\draw[edge] (RRm)  -- (RR);
\draw[edge] (HRm)  -- (HR);
\draw[edge] (ABPm) -- (ABP);
\end{tikzpicture}
\hfill\vrule\hfill
\begin{tikzpicture}[
    node/.style={rectangle, draw, minimum size=0.4cm, align=center, rounded corners},
    edge/.style={->, thick}
]
\node[node] (STVm)   at (0, 2) {$\mathrm{ST{\text{-}}V}_{t-1}$};
\node[node] (STIIIm) at (0, 0) {$\mathrm{ST{\text{-}}III}_{t-1}$};
\node[node] (STV)    at (2, 2) {$\mathrm{ST{\text{-}}V}_{t}$};
\node[node] (STIII)  at (2, 0) {$\mathrm{ST{\text{-}}III}_{t}$};
\node[node] (RR)     at (2, 1) {$\mathrm{RR}_{t}$};
\draw[edge] (STVm)   -- (RR);
\draw[edge] (STIIIm) -- (RR);
\draw[edge] (STV)    -- (RR);
\draw[edge] (STIII)  -- (RR);
\end{tikzpicture}
\caption{FT-DFGs discovered by tsLDiffPC and tsLDiffPC$^2$ (left), by tsDCI and tsDCIPC (middle) and tsISCAN (right) on the MIMIC-IV patient. Isolated vertices are omitted. The  graph inferred by tPCUnion is not shown because it contains many edges, while the graph inferred by tsMBGH is not shown because it is an empty graph.}
\label{fig:mimic}
\end{figure}

\section{Related Work}
\label{appendix:RelatedWork}

\subsection{Root cause analysis}
Several methods have been proposed for root cause analysis in dynamical systems.
Many of them require background knowledge to perform the analysis. One such method~\cite{budhathoki2021} attributes the change in the joint 
or marginal distribution across two regimes to changes in specific causal 
mechanisms (whether effect or noise defying) given 
a predefined causal graph. 
CIRCA~\cite{Li_2022} relies on a graph constructed from system architecture knowledge and performs regression-based hypothesis testing on anomalous data to identify deviations. It mainly focuses on noise-defying root causes.
Closer to our setting, EasyRCA~\cite{Assaad_AISTATS_2023} and SGRCA~\cite{Assaad_HDR_2026} provide frameworks that combine data with an abstraction of the FT-DAG, called a summary causal graph. They explicitly distinguish between changes in causal effects and changes in noise distributions. EasyRCA focuses on effect-defying root causes, whereas SGRCA addresses both effect-defying and noise-defying root causes. However, these methods require a graph as input, which may not always be available.

To relax this requirement, several approaches incorporate causal discovery, \ie, learning the FT-DAG or a partially oriented version from data~\cite{Spirtes_2000,Runge_2019_Nature,Runge_2020_UAI,Assaad_2022,Reiter_2026}.
%EasyRCA$^*$ first learns the FT-DAG of the normal regime using the PCMCI$^+$ algorithm, then constructs a summary causal graph and applies EasyRCA on the inferred structure if the inferred structure is acyclic. 
MicroCause~\cite{Meng_2020} uses PCMCI~\cite{Runge_2019_Nature} (restricted version of PCMCI$^+$ that does not allow for instantaneous relations) to learn causal relations among anomalous time series and identifies root causes via a random walk strategy based on partial correlations. However, its ability to detect the types of root causes considered here remains unclear. RCD~\cite{Ikram_2022} introduces a regime indicator variable and applies hierarchical causal discovery to identify variables whose distributions change across regimes. It does not distinguish between changes in causal effects and noise, assumes a fixed causal graph, and requires discrete data. Finally, T-RCA~\cite{Zan_CIKM_2024} defines root causes as the first variables to cross a threshold and trigger subsequent anomalies. It learns a graph from offline data using PCMCI$^+$ and combines it with anomaly timing information to identify root causes.

In a different view,
several works have addressed root cause analysis of outliers~\cite{budhathoki2022outliers,orchard2026,schkoda2026}.
These methods differ from those evaluated in this paper in several key aspects. First, the anomalies considered in these works are provoked by noise-defying root causes rather than as effect-defying root causes, as we focus in this paper. Second, all three methods are designed to identify the root causes from \emph{single} anomalous observation, whereas our approach operates on samples from two regimes (normal and anomalous) to detect changes in causal mechanisms.

\subsection{Difference graph discovery}

Since many causal discovery methods exist to recover the causal graph from observational data, a naive approach to identifying the difference graph would consist in running a causal discovery algorithm separately on each regime and comparing the outputs to detect discrepancies. However, such an approach would only capture structural changes, i.e., edges that appear or disappear across regimes, corresponding to coefficients transitioning from zero to nonzero or vice versa. Yet the difference graph should also encode edges whose coefficients change in magnitude without inducing any structural modification. A possible solution is therefore to estimate the causal graph in each regime, construct their union graph, and compare the corresponding edge coefficients across regimes. However, this strategy can be computationally costly. To address this problem, several dedicated methods have been proposed. While they rely on different assumptions and identification strategies, they all aim to estimate a graph encoding which causal mechanisms differ across environments. In this paper, we mainly focus on methods that use equality tests to infer the FT-DFG.

\begin{itemize}
    \item  LDiffPC~\cite{Bystrova_Workshop_UAI_2024,Bystrova_Arxiv_2026} is an algorithm for discovering a partially oriented graph in linear non-dynamic structural causal models. It recovers the skeleton of the graph by testing equality of regression coefficients across environments and then orients edges using collider detection and Meek \cite{Meek1995CausalIA} propagation rules.

    \item DCI~\cite{Wang_Neurips_2018} is also designed to discover a partially oriented graph in linear  non-dynamic structural causal models. It recovers the skeleton using the same procedure as LDiffPC. However, its orientation step differs: instead of relying on separation sets and orientation rules, DCI orients edges by exploiting changes in residual variances across environments.
\end{itemize}

There exists also other approaches that do not use equality tests. For instance, 
MBGH~\cite{Malik_UAI_2024} formulates difference graph discovery as a covariance-based estimation problem and recovers the graph through a recursive peeling procedure, while iSCAN~\cite{Chen_Neurips_2023} targets nonlinear additive noise models and identifies changed mechanisms through distributional invariance arguments. 

%% file: sensitivity_analysis_plot.tex
%%%% plot pour alpha 0.01

\begin{figure}[t]
\centering
\begin{subfigure}{0.31\textwidth}
\centering
\begin{tikzpicture}
\begin{axis}[
    width=\textwidth,
    height=4cm,
    xlabel={Number of time-series},
    ylabel={F1 ($\mathcal{C}$, $\widehat{\mathcal{C}}$)},
    title={One parent},
    ymin=0, ymax=1,
legend style={
    at={(2.2,1.3)},
    anchor=south,
    legend columns=6,
    font=\scriptsize
}]
% -----------------------------
%%%% f1 incoming shifted one parent
% -----------------------------
\addplot+[
    thick,
    mark=*, red, mark options={fill=red, draw=red}, solid,
    error bars/.cd,
        y dir=both,
        y explicit
] coordinates {
    (3,0.07) +- (0,0.21)
    (5,0.0) +- (0,0.0)
    (7,0.03) +- (0,0.08)
    (9,0.0) +- (0,0.0)
};
\addlegendentry{tsLDiffPC}

\addplot+[
    thick,
    mark=*, orange, mark options={fill=orange, draw=orange}, solid,
    error bars/.cd,
        y dir=both,
        y explicit
] coordinates {
    (3.1,0.53) +- (0,0.39)
    (5.1,0.47) +- (0,0.42)
    (7.1,0.56) +- (0,0.36)
    (9.1,0.43) +- (0,0.39)
};
\addlegendentry{tsLDiffPC$^2$}

\addplot+[
    thick,
    mark=*, blue, mark options={fill=blue, draw=blue}, solid,
    error bars/.cd,
        y dir=both,
        y explicit
] coordinates {
    (3.2,0.83) +- (0,0.32)
    (5.2,0.6) +- (0,0.52)
    (7.2,0.63) +- (0,0.49)
    (9.2,0.8) +- (0,0.42)
};
\addlegendentry{tsDCI}

\addplot+[
    thick,
    mark=*, cyan, mark options={fill=cyan, draw=cyan}, solid,
    error bars/.cd,
        y dir=both,
        y explicit
] coordinates {
    (3.3,0.83) +- (0,0.32)
    (5.3,0.6) +- (0,0.52)
    (7.3,0.73) +- (0,0.45)
    (9.3,0.8) +- (0,0.42)
};
\addlegendentry{tsDCIPC}

\addplot+[
    thick,
    mark=*, olive, mark options={fill=olive, draw=olive},solid, 
    error bars/.cd,
        y dir=both,
        y explicit
] coordinates {
    (2.9,0.3) +- (0,0.48)
    (4.9,0.07) +- (0,0.21)
    (6.9,0.37) +- (0,0.48)
    (8.9,0.37) +- (0,0.36)
};
\addlegendentry{tsMBGH}

\addplot+[
    thick,
    mark=*, yellow, mark options={fill=yellow, draw=yellow}, solid,
    error bars/.cd,
        y dir=both,
        y explicit
] coordinates {
    (2.8,0.33) +- (0,0.38)
    (4.8,0.37) +- (0,0.45)
    (6.8,0.04) +- (0,0.13)
    (8.8,0.03) +- (0,0.1)
};
\addlegendentry{tsiSCAN}

\addplot+[
    thick,
    mark=*, teal, mark options={fill=teal, draw=teal}, solid,
    error bars/.cd,
        y dir=both,
        y explicit
] coordinates {
    (2.6,0.47) +- (0,0.32)
    (4.6,0.2) +- (0,0.32)
    (6.6,0.27) +- (0,0.34)
    (8.6,0.13) +- (0,0.28)
};
\addlegendentry{MicroCause}

\addplot+[
    thick,
    mark=*, pink, mark options={fill=pink, draw=pink}, solid,
    error bars/.cd,
        y dir=both,
        y explicit
] coordinates {
    (2.7,0.7) +- (0,0.48)
    (4.7,0.4) +- (0,0.52)
    (6.7,0.4) +- (0,0.52)
    (8.7,0.4) +- (0,0.52)
};
\addlegendentry{RCD}

\addplot+[
    thick,
    mark=*, gray, mark options={fill=gray, draw=gray}, solid,
    error bars/.cd,
        y dir=both,
        y explicit
] coordinates {
    (3.,0.62) +- (0,0.28)
    (5,0.32) +- (0,0.25)
    (7,0.23) +- (0,0.21)
    (9,0.24) +- (0,0.18)
};
\addlegendentry{tPCUnion}

\end{axis}
\end{tikzpicture}
% \caption{Setting 1}
\end{subfigure}
\hfill
\begin{subfigure}{0.31\textwidth}
\centering
\begin{tikzpicture}
\begin{axis}[
    width=\textwidth,
    height=4cm,
    xlabel={Number of time-series},
    title={All parents},
    ymin=0, ymax=1
]
% -----------------------------
%%%% f1 incoming shifted all parents
% -----------------------------
\addplot+[
    thick,
    mark=*, red, mark options={fill=red, draw=red}, solid,
    error bars/.cd,
        y dir=both,
        y explicit
] coordinates {
    (3,0.07) +- (0,0.21)
    (5,0.2) +- (0,0.42)
    (7,0.33) +- (0,0.47)
    (9,0.5) +- (0,0.53)
}; %tsLDiffPC

\addplot+[
    thick,
    mark=*, orange, mark options={fill=orange, draw=orange}, solid,
    error bars/.cd,
        y dir=both,
        y explicit
] coordinates {
    (3.1,0.53) +- (0,0.39)
    (5.1,0.6) +- (0,0.44)
    (7.1,0.79) +- (0,0.25)
    (9.1,0.87) +- (0,0.17)
};%tsLDiffPC2

\addplot+[
    thick,
    mark=*, blue, mark options={fill=blue, draw=blue}, solid,
    error bars/.cd,
        y dir=both,
        y explicit
] coordinates {
    (3.2,0.83) +- (0,0.32)
    (5.2,0.7) +- (0,0.48)
    (7.2,0.79) +- (0,0.37)
    (9.2,0.9) +- (0,0.32)
}; %tsDCI

\addplot+[
    thick,
    mark=*, cyan, mark options={fill=cyan, draw=cyan}, solid,
    error bars/.cd,
        y dir=both,
        y explicit
] coordinates {
    (3.3,0.83) +- (0,0.32)
    (5.3,0.7) +- (0,0.48)
    (7.3,0.79) +- (0,0.37)
    (9.3,0.9) +- (0,0.32)
};%tsDCIPC

\addplot+[
    thick,
    mark=*, olive, mark options={fill=olive, draw=olive}, solid,
    error bars/.cd,
        y dir=both,
        y explicit
] coordinates {
    (2.9,0.3) +- (0,0.48)
    (4.9,0.12) +- (0,0.25)
    (6.9,0.50) +- (0,0.53)
    (8.9,0.42) +- (0,0.41)
}; %tsMalik

\addplot+[
    thick,
    mark=*, yellow, mark options={fill=yellow, draw=yellow}, solid,
    error bars/.cd,
        y dir=both,
        y explicit
] coordinates {
    (2.8,0.33) +- (0,0.38)
    (4.8,0.49) +- (0,0.41)
    (6.8,0.0) +- (0,0.0)
    (8.8,0.03) +- (0,0.08)
}; %tsiSCAN

\addplot+[
    thick,
    mark=*, teal, mark options={fill=teal, draw=teal}, solid,
    error bars/.cd,
        y dir=both,
        y explicit
] coordinates {
    (2.6,0.47) +- (0,0.32)
    (4.6,0.2) +- (0,0.32)
    (6.6,0.2) +- (0,0.32)
    (8.6,0.07) +- (0,0.21)
}; %microcause

\addplot+[
    thick,
    mark=*, pink, mark options={fill=pink, draw=pink}, solid,
    error bars/.cd,
        y dir=both,
        y explicit
] coordinates {
    (2.7,0.7) +- (0,0.48)
    (4.7,0.4) +- (0,0.52)
    (6.7,0.6) +- (0,0.52)
    (8.7,0.9) +- (0,0.32)
}; %rcd

\addplot+[
    thick,
    mark=*, gray, mark options={fill=gray, draw=gray}, solid,
    error bars/.cd,
        y dir=both,
        y explicit
] coordinates {
    (3.,0.55) +- (0,0.34)
    (5,0.40) +- (0,0.25)
    (7,0.26) +- (0,0.20)
    (9,0.27) +- (0,0.21)
};%pcunion

\end{axis}
\end{tikzpicture}
% \caption{Setting 1}
\end{subfigure}
\hfill
\begin{subfigure}{0.31\textwidth}
\centering
\begin{tikzpicture}
\begin{axis}[
    width=\textwidth,
    height=4cm,
    xlabel={Number of time-series},
    title={At least 2 parents},
    ymin=0, ymax=1,
legend style={
    at={(1.2,1.3)},
    anchor=south,
    legend columns=3,
    font=\scriptsize
}]
% -----------------------------
%%%% f1 incoming shifted, all parents with minimum 2 parents
% -----------------------------

\addplot+[
    thick,
    mark=*, red, mark options={fill=red, draw=red}, solid,
    error bars/.cd,
        y dir=both,
        y explicit
] coordinates {
    (3,0.83) +- (0,0.32) 
    (5,0.8) +- (0,0.42) 
    (7,0.73) +- (0,0.45)
    (9,0.7) +- (0,0.48)
}; %tsLDiffPC

\addplot+[
    thick,
    mark=*, orange, mark options={fill=orange, draw=orange}, solid,
    error bars/.cd,
        y dir=both,
        y explicit
] coordinates {
    (3.1,0.83) +- (0,0.32) 
    (5.1,0.87) +- (0,0.32) 
    (7.1,0.86) +- (0,0.25)
    (9.1,0.83) +- (0,0.32)
};%tsLDiffPC2

\addplot+[
    thick,
    mark=*, blue, mark options={fill=blue, draw=blue}, solid,
    error bars/.cd,
        y dir=both,
        y explicit
] coordinates {
    (3.2,0.57) +- (0,0.32)
    (5.2,0.9) +- (0,0.32)
    (7.2,0.59) +- (0,0.47)
    (9.2,0.77) +- (0,0.42)
};%tsDCI

\addplot+[
    thick,
    mark=*, cyan, mark options={fill=cyan, draw=cyan}, solid,
    error bars/.cd,
        y dir=both,
        y explicit
] coordinates {
    (3.2,0.57) +- (0,0.32)
    (5.2,0.9) +- (0,0.32)
    (7.2,0.59) +- (0,0.47)
    (9.2,0.77) +- (0,0.42)
};%ok

\addplot+[
    thick,
    mark=*, olive, mark options={fill=olive, draw=olive}, solid,
    error bars/.cd,
        y dir=both,
        y explicit
] coordinates {
    (2.9,0.33) +- (0,0.35)
    (4.9,0.44) +- (0,0.42)
    (6.9,0.36) +- (0,0.42)
    (8.9,0.46) +- (0,0.35)
}; %malik

\addplot+[
    thick,
    mark=*, yellow, mark options={fill=yellow, draw=yellow}, solid, 
    error bars/.cd,
        y dir=both,
        y explicit
] coordinates {
    (2.8,0.5) +- (0,0.29)
    (4.8,0.32) +- (0,0.37)
    (6.8,0.12) +- (0,0.18)
    (8.8,0.0) +- (0,0.0)
};% iscan ok

\addplot+[
    thick,
    mark=*, teal, mark options={fill=teal, draw=teal}, solid,
    error bars/.cd,
        y dir=both,
        y explicit
] coordinates {
    (2.6,0.6) +- (0,0.21)
    (4.6,0.33) +- (0,0.35)
    (6.6,0.07) +- (0,0.21)
    (8.6,0.07) +- (0,0.21)
}; %microcause

\addplot+[
    thick,
    mark=*, pink, mark options={fill=pink, draw=pink}, solid,
    error bars/.cd,
        y dir=both,
        y explicit
] coordinates {
    (2.7,0.4) +- (0,0.52)
    (4.7,0.5) +- (0,0.53)
    (6.7,0.7) +- (0,0.48)
    (8.7,0.7) +- (0,0.48)
}; %rcd

\addplot+[
    thick,
    mark=*, gray, mark options={fill=gray, draw=gray}, solid,
    error bars/.cd,
        y dir=both,
        y explicit
] coordinates {
    (3.,0.48) +- (0,0.29)
    (5,0.45) +- (0,0.35)
    (7,0.30) +- (0,0.12)
    (9,0.31) +- (0,0.14)
};%pcunion

\end{axis}
\end{tikzpicture}

% \caption{Setting 2}
\end{subfigure}

\vspace{0.4cm}

% -----------------------------
% -----------------------------
% NODE RElAXED
% -----------------------------
% -----------------------------

\begin{subfigure}{0.31\textwidth}
\centering
\begin{tikzpicture}
\begin{axis}[
    width=\textwidth,
    height=4cm,
    xlabel={Number of time-series},
    ylabel={F1 ($\mathcal{C}$, $\widehat{\mathcal{C}}\cup\widehat{\mathcal{C}}_P$)},    % title={Setting 3},
    ymin=0, ymax=1
]
% -----------------------------
%%% f1 node relaxed one parent
% -----------------------------
\addplot+[
    thick,
    mark=*, red, mark options={fill=red, draw=red}, solid,
    error bars/.cd,
        y dir=both,
        y explicit
] coordinates {
    (3,0.6) +- (0,0.21)
    (5,0.64) +- (0,0.08)
    (7,0.63) +- (0,0.13)
    (9,0.67) +- (0,0.0)
};%tsLDiffPC

\addplot+[
    thick,
    mark=*, orange, mark options={fill=orange, draw=orange}, solid,
    error bars/.cd,
        y dir=both,
        y explicit
] coordinates {
    (3.1,0.67) +- (0,0.27)
    (5.1,0.71) +- (0,0.18)
    (7.1,0.63) +- (0,0.30)
    (9.1,0.7) +- (0,0.11)
}; %tsLDiffPC2

\addplot+[
    thick,
    mark=*, blue, mark options={fill=blue, draw=blue}, solid,
    error bars/.cd,
        y dir=both,
        y explicit
] coordinates {
    (3.2,0.83) +- (0,0.32)
    (5.2,0.64) +- (0,0.48)
    (7.2,0.63) +- (0,0.49)
    (9.2,0.8) +- (0,0.42)
}; %tsDCI

\addplot+[
    thick,
    mark=*, cyan, mark options={fill=cyan, draw=cyan}, solid,
    error bars/.cd,
        y dir=both,
        y explicit
] coordinates {
    (3.3,0.83) +- (0,0.32)
    (5.3,0.64) +- (0,0.48)
    (7.3,0.63) +- (0,0.49)
    (9.3,0.8) +- (0,0.42)
}; %tsDCIPC

\addplot+[
    thick,
    mark=*, olive, mark options={fill=olive, draw=olive},solid, 
    error bars/.cd,
        y dir=both,
        y explicit
] coordinates {
    (2.9,0.3) +- (0,0.48)
    (4.9,0.07) +- (0,0.21)
    (6.9,0.37) +- (0,0.48)
    (8.9,0.37) +- (0,0.36)
};

\addplot+[
    thick,
    mark=*, yellow, mark options={fill=yellow, draw=yellow}, solid,
    error bars/.cd,
        y dir=both,
        y explicit
] coordinates {
    (2.8,0.33) +- (0,0.38)
    (4.8,0.37) +- (0,0.45)
    (6.8,0.04) +- (0,0.13)
    (8.8,0.03) +- (0,0.1)
};%tsiSCAN

\addplot+[
    thick,
    mark=*, teal, mark options={fill=teal, draw=teal}, solid,
    error bars/.cd,
        y dir=both,
        y explicit
] coordinates {
    (2.6,0.47) +- (0,0.32)
    (4.6,0.2) +- (0,0.32)
    (6.6,0.27) +- (0,0.34)
    (8.6,0.13) +- (0,0.28)
}; %microcause

\addplot+[
    thick,
    mark=*, pink, mark options={fill=pink, draw=pink}, solid,
    error bars/.cd,
        y dir=both,
        y explicit
] coordinates {
    (2.7,0.7) +- (0,0.48)
    (4.7,0.4) +- (0,0.52)
    (6.7,0.4) +- (0,0.52)
    (8.7,0.4) +- (0,0.52)
}; %rcd

\addplot+[
    thick,
    mark=*, gray, mark options={fill=gray, draw=gray}, solid,
    error bars/.cd,
        y dir=both,
        y explicit
] coordinates {
    (3.,0.62) +- (0,0.28)
    (5,0.32) +- (0,0.25)
    (7,0.23) +- (0,0.21)
    (9,0.24) +- (0,0.18)
};%pcunion

\end{axis}
\end{tikzpicture}
% \caption{Setting 3}
\end{subfigure}
\hfill
\begin{subfigure}{0.31\textwidth}
\centering
\begin{tikzpicture}
\begin{axis}[
    width=\textwidth,
    height=4cm,
    xlabel={Number of time-series},
    ymin=0, ymax=1
]
% -----------------------------
%%% f1 node relaxed all parents
% -----------------------------
\addplot+[
    thick,
    mark=*, red, mark options={fill=red, draw=red}, solid,
    error bars/.cd,
        y dir=both,
        y explicit
] coordinates {
    (3,0.6) +- (0,0.21)
    (5,0.71) +- (0,0.18)
    (7,0.73) +- (0,0.23)
    (9,0.83) +- (0,0.18)
}; %tsLDiffPC

\addplot+[
    thick,
    mark=*, orange, mark options={fill=orange, draw=orange}, solid,
    error bars/.cd,
        y dir=both,
        y explicit
] coordinates {
    (3.1,0.67) +- (0,0.27)
    (5.1,0.77) +- (0,0.21)
    (7.1,0.79) +- (0,0.25)
    (9.1,0.87) +- (0,0.17)
}; %tsLDiffPC2

\addplot+[
    thick,
    mark=*, blue, mark options={fill=blue, draw=blue}, solid,
    error bars/.cd,
        y dir=both,
        y explicit
] coordinates {
    (3.2,0.83) +- (0,0.32)
    (5.2,0.74) +- (0,0.43)
    (7.2,0.79) +- (0,0.37)
    (9.2,0.9) +- (0,0.32)
}; %tsDCI

\addplot+[
    thick,
    mark=*, cyan, mark options={fill=cyan, draw=cyan}, solid,
    error bars/.cd,
        y dir=both,
        y explicit
] coordinates {
    (3.3,0.83) +- (0,0.32)
    (5.3,0.74) +- (0,0.43)
    (7.3,0.79) +- (0,0.37)
    (9.3,0.9) +- (0,0.32)
}; %tsDCIPC

\addplot+[
    thick,
    mark=*, olive, mark options={fill=olive, draw=olive}, solid,
    error bars/.cd,
        y dir=both,
        y explicit
] coordinates {
    (2.9,0.3) +- (0,0.48)
    (4.9,0.12) +- (0,0.25)
    (6.9,0.50) +- (0,0.53)
    (8.9,0.42) +- (0,0.41)
}; %tsMalik

\addplot+[
    thick,
    mark=*, yellow, mark options={fill=yellow, draw=yellow}, solid, 
    error bars/.cd,
        y dir=both,
        y explicit
] coordinates {
    (2.8,0.33) +- (0,0.38)
    (4.8,0.49) +- (0,0.41)
    (6.8,0.0) +- (0,0.0)
    (8.8,0.03) +- (0,0.08)
}; %tsiSCAN

\addplot+[
    thick,
    mark=*, teal, mark options={fill=teal, draw=teal}, solid,
    error bars/.cd,
        y dir=both,
        y explicit
] coordinates {
    (2.6,0.47) +- (0,0.32)
    (4.6,0.2) +- (0,0.32)
    (6.6,0.2) +- (0,0.32)
    (8.6,0.07) +- (0,0.21)
}; %microcause

\addplot+[
    thick,
    mark=*, pink, mark options={fill=pink, draw=pink}, solid,
    error bars/.cd,
        y dir=both,
        y explicit
] coordinates {
    (2.7,0.7) +- (0,0.48)
    (4.7,0.4) +- (0,0.52)
    (6.7,0.6) +- (0,0.52)
    (8.7,0.9) +- (0,0.32)
}; %rcd

\addplot+[
    thick,
    mark=*, gray, mark options={fill=gray, draw=gray}, solid,
    error bars/.cd,
        y dir=both,
        y explicit
] coordinates {
    (3.,0.55) +- (0,0.34)
    (5,0.40) +- (0,0.25)
    (7,0.26) +- (0,0.20)
    (9,0.27) +- (0,0.21)
};%pcunion

\end{axis}
\end{tikzpicture}
% \caption{Setting 1}
\end{subfigure}
\hfill
\begin{subfigure}{0.31\textwidth}
\centering
\begin{tikzpicture}
\begin{axis}[
    width=\textwidth,
    height=4cm,
    xlabel={Number of time-series},
    ymin=0, ymax=1,
legend style={
    at={(1.2,1.3)},
    anchor=south,
    legend columns=3,
    font=\scriptsize
}]

% -----------------------------
%%%% node relaxed, all parents, with minimum 2 parents
% -----------------------------
\addplot+[
    thick,
    mark=*, red, mark options={fill=red, draw=red}, solid,
    error bars/.cd,
        y dir=both,
        y explicit
] coordinates {
    (3,0.82) +- (0,0.24)
    (5,0.9) +- (0,0.16)
    (7,0.89) +- (0,0.25)
    (9,0.83) +- (0,0.32)
}; %tslindiffpc

\addplot+[
    thick,
    mark=*, orange, mark options={fill=orange, draw=orange}, solid,
    error bars/.cd,
        y dir=both,
        y explicit
] coordinates {
    (3.1,0.83) +- (0,0.22)
    (5.1,0.88) +- (0,0.32)
    (7.1,0.89) +- (0,0.25)
    (9.1,0.83) +- (0,0.32)
}; %tslindiffpc2

\addplot+[
    thick,
    mark=*, blue, mark options={fill=blue, draw=blue}, solid,
    error bars/.cd,
        y dir=both,
        y explicit
] coordinates {
    (3.2,0.63) +- (0,0.28)
    (5.2,0.87) +- (0,0.32)
    (7.2,0.69) +- (0,0.44)
    (9.2,0.77) +- (0,0.42)
}; %tsdci 

\addplot+[
    thick,
    mark=*, cyan, mark options={fill=cyan, draw=cyan}, solid,
    error bars/.cd,
        y dir=both,
        y explicit
] coordinates {
    (3.3,0.63) +- (0,0.28)
    (5.3,0.87) +- (0,0.32)
    (7.3,0.69) +- (0,0.44)
    (9.3,0.77) +- (0,0.42)
};% tsdcipc

\addplot+[
    thick,
    mark=*, olive, mark options={fill=olive, draw=olive}, solid,
    error bars/.cd,
        y dir=both,
        y explicit
] coordinates {
    (2.9,0.33) +- (0,0.35)
    (4.9,0.44) +- (0,0.42)
    (6.9,0.36) +- (0,0.42)
    (8.9,0.46) +- (0,0.35)
}; %malik ok

\addplot+[
    thick,
    mark=*, yellow, mark options={fill=yellow, draw=yellow}, solid, 
    error bars/.cd,
        y dir=both,
        y explicit
] coordinates {
    (2.8,0.5) +- (0,0.29)
    (4.8,0.32) +- (0,0.37)
    (6.8,0.12) +- (0,0.18)
    (8.8,0.0) +- (0,0.0)
};% iscan ok

\addplot+[
    thick,
    mark=*, teal, mark options={fill=teal, draw=teal}, solid,
    error bars/.cd,
        y dir=both,
        y explicit
] coordinates {
    (2.6,0.6) +- (0,0.21)
    (4.6,0.33) +- (0,0.35)
    (6.6,0.07) +- (0,0.21)
    (8.6,0.07) +- (0,0.21)
}; %microcause

\addplot+[
    thick,
    mark=*, pink, mark options={fill=pink, draw=pink}, solid,
    error bars/.cd,
        y dir=both,
        y explicit
] coordinates {
    (2.7,0.4) +- (0,0.52)
    (4.7,0.5) +- (0,0.53)
    (6.7,0.7) +- (0,0.48)
    (8.7,0.7) +- (0,0.48)
}; %rcd

\addplot+[
    thick,
    mark=*, gray, mark options={fill=gray, draw=gray}, solid,
    error bars/.cd,
        y dir=both,
        y explicit
] coordinates {
    (3.,0.48) +- (0,0.29)
    (5,0.45) +- (0,0.35)
    (7,0.30) +- (0,0.12)
    (9,0.31) +- (0,0.14)
};%pcunion

\end{axis}
\end{tikzpicture}
% \caption{Setting 4}
\end{subfigure}

\caption{Mean F1-score for eight methods as a function of the number of vertices across three settings, for $\alpha = 0.01$.}
\label{fig:res_sensitivity_res_alpha_0_01}
\end{figure}

%%%% plot pour alpha 0.1

\begin{figure}[t]
\centering
\begin{subfigure}{0.31\textwidth}
\centering
\begin{tikzpicture}
\begin{axis}[
    width=\textwidth,
    height=4cm,
    xlabel={Number of time-series},
    ylabel={F1 ($\mathcal{C}$, $\widehat{\mathcal{C}}$)},
    title={One parent},
    ymin=0, ymax=1,
legend style={
    at={(2.2,1.3)},
    anchor=south,
    legend columns=6,
    font=\scriptsize
}]
% -----------------------------
%%%% f1 incoming shifted one parent
% -----------------------------
\addplot+[
    thick,
    mark=*, red, mark options={fill=red, draw=red}, solid,
    error bars/.cd,
        y dir=both,
        y explicit
] coordinates {
    (3,0.27) +- (0,0.44)
    (5,0.0) +- (0,0.0)
    (7,0.03) +- (0,0.08)
    (9,0.0) +- (0,0.0)
};
\addlegendentry{tsLDiffPC}

\addplot+[
    thick,
    mark=*, orange, mark options={fill=orange, draw=orange}, solid,
    error bars/.cd,
        y dir=both,
        y explicit
] coordinates {
    (3.1,0.6) +- (0,0.34)
    (5.1,0.5) +- (0,0.45)
    (7.1,0.49) +- (0,0.32)
    (9.1,0.41) +- (0,0.41)
};
\addlegendentry{tsLDiffPC$^2$}

\addplot+[
    thick,
    mark=*, blue, mark options={fill=blue, draw=blue}, solid,
    error bars/.cd,
        y dir=both,
        y explicit
] coordinates {
    (3.2,0.8) +- (0,0.32)
    (5.2,0.7) +- (0,0.48)
    (7.2,0.69) +- (0,0.44)
    (9.2,0.78) +- (0,0.44)
};
\addlegendentry{tsDCI}

\addplot+[
    thick,
    mark=*, cyan, mark options={fill=cyan, draw=cyan}, solid,
    error bars/.cd,
        y dir=both,
        y explicit
] coordinates {
    (3.2,0.8) +- (0,0.32)
    (5.2,0.7) +- (0,0.48)
    (7.2,0.69) +- (0,0.44)
    (9.2,0.78) +- (0,0.44)
};
\addlegendentry{tsDCIPC}

\addplot+[
    thick,
    mark=*, olive, mark options={fill=olive, draw=olive},solid, 
    error bars/.cd,
        y dir=both,
        y explicit
] coordinates {
    (2.9,0.3) +- (0,0.48)
    (4.9,0.07) +- (0,0.21)
    (6.9,0.37) +- (0,0.48)
    (8.9,0.32) +- (0,0.34)
};
\addlegendentry{tsMBGH}

\addplot+[
    thick,
    mark=*, yellow, mark options={fill=yellow, draw=yellow}, solid,
    error bars/.cd,
        y dir=both,
        y explicit
] coordinates {
    (2.8,0.28) +- (0,0.30)
    (4.8,0.26) +- (0,0.40)
    (6.8,0.04) +- (0,0.13)
    (8.8,0.0) +- (0,0.0)
};
\addlegendentry{tsiSCAN}

\addplot+[
    thick,
    mark=*, teal, mark options={fill=teal, draw=teal}, solid,
    error bars/.cd,
        y dir=both,
        y explicit
] coordinates {
    (2.6,0.6) +- (0,0.21)
    (4.6,0.2) +- (0,0.32)
    (6.6,0.13) +- (0,0.28)
    (8.6,0.07) +- (0,0.22)
};
\addlegendentry{MicroCause}

\addplot+[
    thick,
    mark=*, pink, mark options={fill=pink, draw=pink}, solid,
    error bars/.cd,
        y dir=both,
        y explicit
] coordinates {
    (2.7,0.5) +- (0,0.48)
    (4.7,0.4) +- (0,0.52)
    (6.7,0.4) +- (0,0.52)
    (8.7,0.4) +- (0,0.52)
};
\addlegendentry{RCD}

\addplot+[
    thick,
    mark=*, gray, mark options={fill=gray, draw=gray}, solid,
    error bars/.cd,
        y dir=both,
        y explicit
] coordinates {
    (3.,0.50) +- (0,0.30)
    (5,0.27) +- (0,0.25)
    (7,0.29) +- (0,0.17)
    (9,0.3) +- (0,0.15)
};
\addlegendentry{tPCUnion}

\end{axis}
\end{tikzpicture}
% \caption{Setting 1}
\end{subfigure}
\hfill
\begin{subfigure}{0.31\textwidth}
\centering
\begin{tikzpicture}
\begin{axis}[
    width=\textwidth,
    height=4cm,
    xlabel={Number of time-series},
    title={All parents},
    ymin=0, ymax=1
]
% -----------------------------
%%%% f1 incoming shifted all parents
% -----------------------------
\addplot+[
    thick,
    mark=*, red, mark options={fill=red, draw=red}, solid,
    error bars/.cd,
        y dir=both,
        y explicit
] coordinates {
    (3,0.27) +- (0,0.44)
    (5,0.17) +- (0,0.36)
    (7,0.33) +- (0,0.47)
    (9,0.56) +- (0,0.53)
}; %tsLDiffPC

\addplot+[
    thick,
    mark=*, orange, mark options={fill=orange, draw=orange}, solid,
    error bars/.cd,
        y dir=both,
        y explicit
] coordinates {
    (3.1,0.6) +- (0,0.34)
    (5.1,0.57) +- (0,0.42)
    (7.1,0.79) +- (0,0.25)
    (9.1,0.87) +- (0,0.17)
};%tsLDiffPC2

\addplot+[
    thick,
    mark=*, blue, mark options={fill=blue, draw=blue}, solid,
    error bars/.cd,
        y dir=both,
        y explicit
] coordinates {
    (3.2,0.8) +- (0,0.32)
    (5.2,0.76) +- (0,0.42)
    (7.2,0.73) +- (0,0.45)
    (9.2,0.89) +- (0,0.33)
}; %tsDCI

\addplot+[
    thick,
    mark=*, cyan, mark options={fill=cyan, draw=cyan}, solid,
    error bars/.cd,
        y dir=both,
        y explicit
] coordinates {
    (3.3,0.8) +- (0,0.32)
    (5.3,0.76) +- (0,0.42)
    (7.3,0.73) +- (0,0.45)
    (9.3,0.89) +- (0,0.33)
};%tsDCIPC

\addplot+[
    thick,
    mark=*, olive, mark options={fill=olive, draw=olive}, solid,
    error bars/.cd,
        y dir=both,
        y explicit
] coordinates {
    (2.9,0.3) +- (0,0.48)
    (4.9,0.17) +- (0,0.36)
    (6.9,0.37) +- (0,0.48)
    (8.9,0.43) +- (0,0.41)
}; %tsMalik

\addplot+[
    thick,
    mark=*, yellow, mark options={fill=yellow, draw=yellow}, solid,
    error bars/.cd,
        y dir=both,
        y explicit
] coordinates {
    (2.8,0.28) +- (0,0.30)
    (4.8,0.33) +- (0,0.41)
    (6.8,0.12) +- (0,0.24)
    (8.8,0.03) +- (0,0.08)
}; %tsiSCAN

\addplot+[
    thick,
    mark=*, teal, mark options={fill=teal, draw=teal}, solid,
    error bars/.cd,
        y dir=both,
        y explicit
] coordinates {
    (2.6,0.6) +- (0,0.21)
    (4.6,0.33) +- (0,0.35)
    (6.6,0.27) +- (0,0.34)
    (8.6,0.13) +- (0,0.28)
}; %microcause

\addplot+[
    thick,
    mark=*, pink, mark options={fill=pink, draw=pink}, solid,
    error bars/.cd,
        y dir=both,
        y explicit
] coordinates {
    (2.7,0.7) +- (0,0.48)
    (4.7,0.4) +- (0,0.52)
    (6.7,0.6) +- (0,0.52)
    (8.7,0.9) +- (0,0.32)
}; %rcd

\addplot+[
    thick,
    mark=*, gray, mark options={fill=gray, draw=gray}, solid,
    error bars/.cd,
        y dir=both,
        y explicit
] coordinates {
    (3.,0.5) +- (0,0.30)
    (5,0.33) +- (0,0.24)
    (7,0.24) +- (0,0.18)
    (9,0.29) +- (0,0.18)
};%pcunion

\end{axis}
\end{tikzpicture}
% \caption{Setting 1}
\end{subfigure}
\hfill
\begin{subfigure}{0.31\textwidth}
\centering
\begin{tikzpicture}
\begin{axis}[
    width=\textwidth,
    height=4cm,
    xlabel={Number of time-series},
    title={At least 2 parents},
    ymin=0, ymax=1,
legend style={
    at={(1.2,1.3)},
    anchor=south,
    legend columns=3,
    font=\scriptsize
}]
% -----------------------------
%%%% f1 incoming shifted, all parents with minimum 2 parents
% -----------------------------

\addplot+[
    thick,
    mark=*, red, mark options={fill=red, draw=red}, solid,
    error bars/.cd,
        y dir=both,
        y explicit
] coordinates {
    (3,0.72) +- (0,0.42) 
    (5,0.77) +- (0,0.42) 
    (7,0.783) +- (0,0.37)
    (9,0.67) +- (0,0.5)
}; %tsLDiffPC

\addplot+[
    thick,
    mark=*, orange, mark options={fill=orange, draw=orange}, solid,
    error bars/.cd,
        y dir=both,
        y explicit
] coordinates {
    (3.1,0.83) +- (0,0.22) 
    (5.1,0.83) +- (0,0.32) 
    (7.1,0.89) +- (0,0.25)
    (9.1,0.81) +- (0,0.34)
};%tsLDiffPC2

\addplot+[
    thick,
    mark=*, blue, mark options={fill=blue, draw=blue}, solid,
    error bars/.cd,
        y dir=both,
        y explicit
] coordinates {
    (3.2,0.68) +- (0,0.3)
    (5.2,0.87) +- (0,0.32)
    (7.2,0.64) +- (0,0.43)
    (9.2,0.74) +- (0,0.43)
};%tsDCI

\addplot+[
    thick,
    mark=*, cyan, mark options={fill=cyan, draw=cyan}, solid,
    error bars/.cd,
        y dir=both,
        y explicit
] coordinates {
    (3.3,0.68) +- (0,0.3)
    (5.3,0.87) +- (0,0.32)
    (7.3,0.64) +- (0,0.43)
    (9.3,0.74) +- (0,0.43)
};%ok

\addplot+[
    thick,
    mark=*, olive, mark options={fill=olive, draw=olive}, solid,
    error bars/.cd,
        y dir=both,
        y explicit
] coordinates {
    (2.9,0.33) +- (0,0.44)
    (4.9,0.43) +- (0,0.39)
    (6.9,0.43) +- (0,0.47)
    (8.9,0.53) +- (0,0.41)
}; %malik

\addplot+[
    thick,
    mark=*, yellow, mark options={fill=yellow, draw=yellow}, solid, 
    error bars/.cd,
        y dir=both,
        y explicit
] coordinates {
    (2.8,0.54) +- (0,0.33)
    (4.8,0.23) +- (0,0.39)
    (6.8,0.19) +- (0,0.26)
    (8.8,0.03) +- (0,0.1)
};% iscan ok

\addplot+[
    thick,
    mark=*, teal, mark options={fill=teal, draw=teal}, solid,
    error bars/.cd,
        y dir=both,
        y explicit
] coordinates {
    (2.6,0.53) +- (0,0.28)
    (4.6,0.47) +- (0,0.32)
    (6.6,0.20) +- (0,0.32)
    (8.6,0.15) +- (0,0.29)
}; %microcause

\addplot+[
    thick,
    mark=*, pink, mark options={fill=pink, draw=pink}, solid,
    error bars/.cd,
        y dir=both,
        y explicit
] coordinates {
    (2.7,0.4) +- (0,0.52)
    (4.7,0.5) +- (0,0.53)
    (6.7,0.7) +- (0,0.48)
    (8.7,0.7) +- (0,0.48)
}; %rcd

\addplot+[
    thick,
    mark=*, gray, mark options={fill=gray, draw=gray}, solid,
    error bars/.cd,
        y dir=both,
        y explicit
] coordinates {
    (3.,0.43) +- (0,0.33)
    (5,0.43) +- (0,0.34)
    (7,0.31) +- (0,0.04)
    (9,0.23) +- (0,0.14)
};%pcunion

\end{axis}
\end{tikzpicture}

% \caption{Setting 2}
\end{subfigure}

\vspace{0.4cm}

% -----------------------------
% -----------------------------
% NODE RElAXED
% -----------------------------
% -----------------------------

\begin{subfigure}{0.31\textwidth}
\centering
\begin{tikzpicture}
\begin{axis}[
    width=\textwidth,
    height=4cm,
    xlabel={Number of time-series},
    ylabel={F1 ($\mathcal{C}$, $\widehat{\mathcal{C}}\cup\widehat{\mathcal{C}}_P$)},    % title={Setting 3},
    ymin=0, ymax=1
]
% -----------------------------
%%% f1 node relaxed one parent
% -----------------------------
\addplot+[
    thick,
    mark=*, red, mark options={fill=red, draw=red}, solid,
    error bars/.cd,
        y dir=both,
        y explicit
] coordinates {
    (3,0.65) +- (0,0.28)
    (5,0.64) +- (0,0.08)
    (7,0.59) +- (0,0.14)
    (9,0.67) +- (0,0.0)
};%tsLDiffPC

\addplot+[
    thick,
    mark=*, orange, mark options={fill=orange, draw=orange}, solid,
    error bars/.cd,
        y dir=both,
        y explicit
] coordinates {
    (3.1,0.65) +- (0,0.28)
    (5.1,0.74) +- (0,0.18)
    (7.1,0.56) +- (0,0.27)
    (9.1,0.7) +- (0,0.11)
}; %tsLDiffPC2

\addplot+[
    thick,
    mark=*, blue, mark options={fill=blue, draw=blue}, solid,
    error bars/.cd,
        y dir=both,
        y explicit
] coordinates {
    (3.2,0.8) +- (0,0.32)
    (5.2,0.74) +- (0,0.43)
    (7.2,0.69) +- (0,0.44)
    (9.2,0.78) +- (0,0.44)
}; %tsDCI

\addplot+[
    thick,
    mark=*, cyan, mark options={fill=cyan, draw=cyan}, solid,
    error bars/.cd,
        y dir=both,
        y explicit
] coordinates {
    (3.3,0.8) +- (0,0.32)
    (5.3,0.74) +- (0,0.43)
    (7.3,0.69) +- (0,0.44)
    (9.3,0.78) +- (0,0.44)
}; %tsDCIPC

\addplot+[
    thick,
    mark=*, olive, mark options={fill=olive, draw=olive},solid, 
    error bars/.cd,
        y dir=both,
        y explicit
] coordinates {
    (2.9,0.3) +- (0,0.48)
    (4.9,0.07) +- (0,0.21)
    (6.9,0.37) +- (0,0.48)
    (8.9,0.32) +- (0,0.34)
};

\addplot+[
    thick,
    mark=*, yellow, mark options={fill=yellow, draw=yellow}, solid,
    error bars/.cd,
        y dir=both,
        y explicit
] coordinates {
    (2.8,0.28) +- (0,0.30)
    (4.8,0.26) +- (0,0.40)
    (6.8,0.04) +- (0,0.13)
    (8.8,0.0) +- (0,0.0)
};%tsiSCAN

\addplot+[
    thick,
    mark=*, teal, mark options={fill=teal, draw=teal}, solid,
    error bars/.cd,
        y dir=both,
        y explicit
] coordinates {
    (2.6,0.47) +- (0,0.32)
    (4.6,0.2) +- (0,0.32)
    (6.6,0.27) +- (0,0.34)
    (8.6,0.13) +- (0,0.28)
}; %microcause

\addplot+[
    thick,
    mark=*, pink, mark options={fill=pink, draw=pink}, solid,
    error bars/.cd,
        y dir=both,
        y explicit
] coordinates {
    (2.7,0.7) +- (0,0.48)
    (4.7,0.4) +- (0,0.52)
    (6.7,0.4) +- (0,0.52)
    (8.7,0.4) +- (0,0.52)
}; %rcd

\addplot+[
    thick,
    mark=*, gray, mark options={fill=gray, draw=gray}, solid,
    error bars/.cd,
        y dir=both,
        y explicit
] coordinates {
    (3.,0.50) +- (0,0.30)
    (5,0.27) +- (0,0.25)
    (7,0.29) +- (0,0.17)
    (9,0.3) +- (0,0.15)
};%pcunion

\end{axis}
\end{tikzpicture}
% \caption{Setting 3}
\end{subfigure}
\hfill
\begin{subfigure}{0.31\textwidth}
\centering
\begin{tikzpicture}
\begin{axis}[
    width=\textwidth,
    height=4cm,
    xlabel={Number of time-series},
    ymin=0, ymax=1
]
% -----------------------------
%%% f1 node relaxed all parents
% -----------------------------
\addplot+[
    thick,
    mark=*, red, mark options={fill=red, draw=red}, solid,
    error bars/.cd,
        y dir=both,
        y explicit
] coordinates {
    (3,0.65) +- (0,0.28)
    (5,0.67) +- (0,0.14)
    (7,0.73) +- (0,0.23)
    (9,0.85) +- (0,0.18)
}; %tsLDiffPC

\addplot+[
    thick,
    mark=*, orange, mark options={fill=orange, draw=orange}, solid,
    error bars/.cd,
        y dir=both,
        y explicit
] coordinates {
    (3.1,0.65) +- (0,0.28)
    (5.1,0.74) +- (0,0.20)
    (7.1,0.79) +- (0,0.25)
    (9.1,0.87) +- (0,0.17)
}; %tsLDiffPC2

\addplot+[
    thick,
    mark=*, blue, mark options={fill=blue, draw=blue}, solid,
    error bars/.cd,
        y dir=both,
        y explicit
] coordinates {
    (3.2,0.8) +- (0,0.32)
    (5.2,0.81) +- (0,0.35)
    (7.2,0.73) +- (0,0.45)
    (9.2,0.89) +- (0,0.33)
}; %tsDCI

\addplot+[
    thick,
    mark=*, cyan, mark options={fill=cyan, draw=cyan}, solid,
    error bars/.cd,
        y dir=both,
        y explicit
] coordinates {
    (3.3,0.8) +- (0,0.32)
    (5.3,0.81) +- (0,0.35)
    (7.3,0.73) +- (0,0.45)
    (9.3,0.89) +- (0,0.33)
}; %tsDCIPC

\addplot+[
    thick,
    mark=*, olive, mark options={fill=olive, draw=olive}, solid,
    error bars/.cd,
        y dir=both,
        y explicit
] coordinates {
    (2.9,0.3) +- (0,0.48)
    (4.9,0.17) +- (0,0.36)
    (6.9,0.37) +- (0,0.48)
    (8.9,0.43) +- (0,0.41)
}; %tsMalik

\addplot+[
    thick,
    mark=*, yellow, mark options={fill=yellow, draw=yellow}, solid, 
    error bars/.cd,
        y dir=both,
        y explicit
] coordinates {
    (2.8,0.28) +- (0,0.30)
    (4.8,0.33) +- (0,0.41)
    (6.8,0.12) +- (0,0.24)
    (8.8,0.03) +- (0,0.08)
}; %tsiSCAN

\addplot+[
    thick,
    mark=*, teal, mark options={fill=teal, draw=teal}, solid,
    error bars/.cd,
        y dir=both,
        y explicit
] coordinates {
    (2.6,0.6) +- (0,0.21)
    (4.6,0.33) +- (0,0.35)
    (6.6,0.27) +- (0,0.34)
    (8.6,0.13) +- (0,0.28)
}; %microcause

\addplot+[
    thick,
    mark=*, pink, mark options={fill=pink, draw=pink}, solid,
    error bars/.cd,
        y dir=both,
        y explicit
] coordinates {
    (2.7,0.7) +- (0,0.48)
    (4.7,0.4) +- (0,0.52)
    (6.7,0.6) +- (0,0.52)
    (8.7,0.9) +- (0,0.32)
}; %rcd

\addplot+[
    thick,
    mark=*, gray, mark options={fill=gray, draw=gray}, solid,
    error bars/.cd,
        y dir=both,
        y explicit
] coordinates {
    (3.,0.5) +- (0,0.30)
    (5,0.33) +- (0,0.24)
    (7,0.24) +- (0,0.18)
    (9,0.29) +- (0,0.18)
};%pcunion

\end{axis}
\end{tikzpicture}
% \caption{Setting 1}
\end{subfigure}
\hfill
\begin{subfigure}{0.31\textwidth}
\centering
\begin{tikzpicture}
\begin{axis}[
    width=\textwidth,
    height=4cm,
    xlabel={Number of time-series},
    ymin=0, ymax=1,
legend style={
    at={(1.2,1.3)},
    anchor=south,
    legend columns=3,
    font=\scriptsize
}]

% -----------------------------
%%%% node relaxed, all parents, with minimum 2 parents
% -----------------------------
\addplot+[
    thick,
    mark=*, red, mark options={fill=red, draw=red}, solid,
    error bars/.cd,
        y dir=both,
        y explicit
] coordinates {
    (3,0.82) +- (0,0.24)
    (5,0.9) +- (0,0.16)
    (7,0.89) +- (0,0.25)
    (9,0.81) +- (0,0.34)
}; %tslindiffpc

\addplot+[
    thick,
    mark=*, orange, mark options={fill=orange, draw=orange}, solid,
    error bars/.cd,
        y dir=both,
        y explicit
] coordinates {
    (3.1,0.83) +- (0,0.22)
    (5.1,0.83) +- (0,0.32)
    (7.1,0.89) +- (0,0.25)
    (9.1,0.81) +- (0,0.34)
}; %tslindiffpc2

\addplot+[
    thick,
    mark=*, blue, mark options={fill=blue, draw=blue}, solid,
    error bars/.cd,
        y dir=both,
        y explicit
] coordinates {
    (3.2,0.66) +- (0,0.3)
    (5.2,0.87) +- (0,0.32)
    (7.2,0.64) +- (0,0.43)
    (9.2,0.74) +- (0,0.43)
}; %tsdci 

\addplot+[
    thick,
    mark=*, cyan, mark options={fill=cyan, draw=cyan}, solid,
    error bars/.cd,
        y dir=both,
        y explicit
] coordinates {
    (3.3,0.66) +- (0,0.3)
    (5.3,0.87) +- (0,0.32)
    (7.3,0.64) +- (0,0.43)
    (9.3,0.74) +- (0,0.43)
};% tsdcipc

\addplot+[
    thick,
    mark=*, olive, mark options={fill=olive, draw=olive}, solid,
    error bars/.cd,
        y dir=both,
        y explicit
] coordinates {
    (2.9,0.33) +- (0,0.44)
    (4.9,0.43) +- (0,0.39)
    (6.9,0.43) +- (0,0.47)
    (8.9,0.53) +- (0,0.41)
}; %malik ok

\addplot+[
    thick,
    mark=*, yellow, mark options={fill=yellow, draw=yellow}, solid, 
    error bars/.cd,
        y dir=both,
        y explicit
] coordinates {
    (2.8,0.54) +- (0,0.33)
    (4.8,0.23) +- (0,0.39)
    (6.8,0.19) +- (0,0.26)
    (8.8,0.03) +- (0,0.1)
};% iscan ok

\addplot+[
    thick,
    mark=*, teal, mark options={fill=teal, draw=teal}, solid,
    error bars/.cd,
        y dir=both,
        y explicit
] coordinates {
    (2.6,0.53) +- (0,0.28)
    (4.6,0.47) +- (0,0.32)
    (6.6,0.20) +- (0,0.32)
    (8.6,0.15) +- (0,0.29)
}; %microcause

\addplot+[
    thick,
    mark=*, pink, mark options={fill=pink, draw=pink}, solid,
    error bars/.cd,
        y dir=both,
        y explicit
] coordinates {
    (2.7,0.4) +- (0,0.52)
    (4.7,0.5) +- (0,0.53)
    (6.7,0.7) +- (0,0.48)
    (8.7,0.7) +- (0,0.48)
}; %rcd

\addplot+[
    thick,
    mark=*, gray, mark options={fill=gray, draw=gray}, solid,
    error bars/.cd,
        y dir=both,
        y explicit
] coordinates {
    (3.,0.43) +- (0,0.33)
    (5,0.43) +- (0,0.34)
    (7,0.31) +- (0,0.04)
    (9,0.23) +- (0,0.14)
};%pcunion

\end{axis}
\end{tikzpicture}
% \caption{Setting 4}
\end{subfigure}

\caption{Mean F1-score for eight methods as a function of the number of vertices across three settings, for $\alpha = 0.1$.}
\label{fig:res_sensitivity_res_alpha_0_1}
\end{figure}